\documentclass{article}

% if you need to pass options to natbib, use, e.g.:
 \PassOptionsToPackage{numbers, compress}{natbib}
% before loading neurips_2022

% ready for submission
%\usepackage{neurips_2022}

% to compile a preprint version, e.g., for submission to arXiv, add add the
% [preprint] option:
%     \usepackage[preprint]{neurips_2022}

% to compile a camera-ready version, add the [final] option, e.g.:
\usepackage[final]{neurips_2022}

% to avoid loading the natbib package, add option nonatbib:
%\usepackage[nonatbib]{neurips_2022}

\usepackage[utf8]{inputenc} % allow utf-8 input
\usepackage[T1]{fontenc}    % use 8-bit T1 fonts
\usepackage{hyperref}       % hyperlinks
\usepackage{url}            % simple URL typesetting
\usepackage{booktabs}       % professional-quality tables
\usepackage{amsfonts}       % blackboard math symbols
\usepackage{nicefrac}       % compact symbols for 1/2, etc.
\usepackage{microtype}      % microtypography
\usepackage{xcolor}         % colors
\usepackage{amsmath,amsthm}

\usepackage[capitalize,noabbrev]{cleveref}
\usepackage{algorithm,algorithmic}
\usepackage{graphicx,graphics}
\usepackage{caption}
\usepackage{subcaption}%subfigure
%%%%%%%%%%%%%%%%%%%%%%%%%%%%%%%%
% THEOREMS
%%%%%%%%%%%%%%%%%%%%%%%%%%%%%%%%
%\theoremstyle{plain}

%\theoremstyle{definition}

\newtheorem{defi}{Definition}
\newtheorem{thm}{Theorem}

\newtheorem{lem}{Lemma}

\newtheorem{ass}{Assumption}
%\theoremstyle{remark}

% Todonotes is useful during development; simply uncomment the next line
%    and comment out the line below the next line to turn off comments
%\usepackage[disable,textsize=tiny]{todonotes}
\usepackage[textsize=tiny]{todonotes}

% something new
\usepackage{makecell}

\def \R {\mathbb{R}}

\def \v {\mathbf{v}}

\def \x {\mathbf{x}}

\def \E {\mathbb{E}}

\def \x {\mathbf{x}}
\def \u {\mathbf{u}}
\def \v {\mathbf{v}}

\def \a {\mathbf{a}}

\def \z {\mathbf{z}}

\def \y {\mathbf{y}}
\def \u {\mathbf{u}}

\def \g {\mathbf{g}}
\def \E {\mathbb{E}}

\def \y {\mathbf{y}}
\def \E {\mathbb{E}}

\def \x {\mathbf{x}}
\def \g {\mathbf{g}}

\def \z {\mathbf{z}}
\def \u {\mathbf{u}}

\def \R {\mathbb{R}}

\def \v {\mathbf{v}}

\def \a {\mathbf{a}}

\def \gh {\widehat{\g}}

\newcommand{\Corrections}[1]{{#1}}

\title{A Communication-Efficient Distributed Gradient Clipping Algorithm for Training Deep Neural Networks}

% The \author macro works with any number of authors. There are two commands
% used to separate the names and addresses of multiple authors: \And and \AND.
%
% Using \And between authors leaves it to LaTeX to determine where to break the
% lines. Using \AND forces a line break at that point. So, if LaTeX puts 3 of 4
% authors names on the first line, and the last on the second line, try using
% \AND instead of \And before the third author name.

\author{%
  Mingrui Liu$^{1}$\thanks{Corresponding Author. The code is available at \url{https://github.com/MingruiLiu-ML-Lab/Communication-Efficient-Local-Gradient-Clipping}}    , Zhenxun Zhuang$^{2}$, Yunwen Lei$^{3}$, Chunyang Liao$^{4}$\\
  $^{1}$Department of Computer Science, George Mason University, Fairfax, VA 22030, USA \\
    $^{2}$Meta Platforms, Inc., Bellevue, WA, 98004, USA \\
$^{3}$ School of Computer Science, University of Birmingham, United Kingdom\\
$^{4}$ Department of Mathematics, Texas A\&M University, College Station, Texas 77840, USA\\
  \texttt{mingruil@gmu.edu, oldboymls@gmail.com, yunwen.lei@hotmail.com} \\
  % examples of more authors
  % \And
  % Coauthor \\
  % Affiliation \\
  % Address \\
  % \texttt{email} \\
  % \AND
  % Coauthor \\
  % Affiliation \\
  % Address \\
  % \texttt{email} \\
  % \And
  % Coauthor \\
  % Affiliation \\
  % Address \\
  % \texttt{email} \\
  % \And
  % Coauthor \\
  % Affiliation \\
  % Address \\
  % \texttt{email} \\
}

\begin{document}

\maketitle

%\vspace*{-0.3in}
\begin{abstract}
  In distributed training of deep neural networks, people usually run Stochastic Gradient Descent (SGD) or its variants on each machine and communicate with other machines periodically. However, SGD might converge slowly in training some deep neural networks (e.g., RNN, LSTM) because of the exploding gradient issue. Gradient clipping is usually employed to address this issue in the single machine setting, but exploring this technique in the distributed setting is still in its infancy: it remains mysterious whether the gradient clipping scheme can take advantage of multiple machines to enjoy parallel speedup. The main technical difficulty lies in dealing with nonconvex loss function, non-Lipschitz continuous gradient, and skipping communication rounds simultaneously. In this paper, we explore a relaxed-smoothness assumption of the loss landscape which LSTM was shown to satisfy in previous works, and design a communication-efficient gradient clipping algorithm. This algorithm can be run on multiple machines, where each machine employs a gradient clipping scheme and communicate with other machines after multiple steps of gradient-based updates. Our algorithm is proved to have $O\left(\frac{1}{N\epsilon^4}\right)$ iteration complexity and $O(\frac{1}{\epsilon^3})$ communication complexity for finding an $\epsilon$-stationary point in the homogeneous data setting, where $N$ is the number of machines. This indicates that our algorithm enjoys linear speedup and reduced communication rounds. Our proof relies on novel analysis techniques of estimating truncated random variables, which we believe are of independent interest. Our experiments on several benchmark datasets and various scenarios demonstrate that our algorithm indeed exhibits fast convergence speed in practice and thus validates our theory. 
\end{abstract}

%\vspace*{-0.2in}
	\section{Introduction}
	
	Deep learning has achieved tremendous successes in many domains including computer vision~\citep{krizhevsky2012imagenet,he2016deep}, natural language processing~\citep{devlin2018bert} and game~\citep{silver2016mastering}. To obtain good empirical performance, people usually need to train large models on a huge amount of data, and it is usually very computationally expensive. To speed up the training process, distributed training becomes indispensable~\citep{dean2012large}. For example, Goyal et al.~\cite{goyal2017accurate} trained a ResNet-50 on ImageNet dataset by distributed SGD with minibatch size $8192$ on $256$ GPUs in only one hour, which not only matches the small minibatch accuracy but also enjoys parallel speedup, and hence improves the running time. Recently, local SGD~\cite{stich2018local,yu_linear}, as a variant of distributed SGD, achieved tremendous attention in federated learning community~\cite{mcmahan2016communication}. The local SGD algorithm runs multiple steps of SGD on each machine before communicating with other clients.

	%Recently, there is an increasing interest in a variant of distributed learning, namely Federated Learning (FL)~\citep{mcmahan2016communication}, which focuses on the cases where the training data is non-i.i.d. across devices and only limited communication is allowed. To tackle this scenario, Mcmahan et al.~\cite{mcmahan2016communication} proposed an algorithm named Federated Averaging, which runs multiple steps of SGD on each client before communicating with other clients. 

	Despite the empirical success of distributed SGD and its variants (e.g., local SGD) in deep learning, they may not exhibit good performance when training some neural networks (e.g., Recurrent Neural Networks, LSTMs), due to the exploding gradient problem~\citep{pascanu2012understanding,pascanu2013difficulty}. To address this issue, Pascanu et al.~\cite{pascanu2013difficulty} proposed to use the gradient clipping strategy, and it has become a standard technique when training language models~\citep{gehring2017convolutional,peters2018deep,merity2018regularizing}. There are some recent works trying to theoretically explain gradient clipping from the perspective of non-convex optimization~\citep{zhang2019gradient,zhang2020improved}. These works are built upon an important observation made in~\cite{zhang2019gradient}: for certain neural networks such as LSTM, the gradient does not vary uniformly over the loss landscape (i.e., the gradient is not Lipschitz continuous with a uniform constant), and the gradient Lipschitz constant can scale linearly with respect to the gradient norm. This is referred to as the relaxed smoothness condition (i.e., $(L_0,L_1)$-smoothness defined in Definition~\ref{def:relaxedsmooth}), which generalizes but strictly relaxes the usual smoothness condition (i.e., $L$-smoothness defined in Definition~\ref{def:lsmooth}). Under the relaxed smoothness condition, Zhang et al.~\cite{zhang2019gradient,zhang2020improved} proved that gradient clipping enjoys polynomial-time iteration complexity for finding the first-order stationary point in the single machine setting, and it can be arbitrarily faster than fix-step gradient descent. In practice, both distributed learning and gradient clipping are important techniques to accelerate neural network training. However, the theoretical analysis of gradient clipping is only restricted to the single machine setting~\citep{zhang2019gradient,zhang2020improved}. Hence it naturally motivates us to consider the following question:

	\textbf{Is it possible that the gradient clipping scheme can take advantage of multiple machines to enjoy parallel speedup in training deep neural networks, with only infrequent communication?}
	% with data heterogeneity across machines and infrequent communication
	%To this end, we try to 
	%\textcolor{red}{to be completed: introduce the relaxed smoothness condition briefly here}

	In this paper, we give an affirmative answer to the above question. Built upon the relaxed smoothness condition as in~\cite{zhang2019gradient,zhang2020improved}, we design a communication-efficient distributed gradient clipping algorithm. 
	%The key ingredient of our algorithm is that it is able to significantly reduce the number of communication rounds compared with the naive parallel gradient clipping algorithm, while still enjoying linear speedup in theory. 
	The key characteristics of our algorithm are: (i) unlike naive parallel gradient clipping algorithm~\footnote{Naive Parallel of~\citep{zhang2020improved} stands for the algorithm which clips the gradient based on the globally averaged gradient at every iteration with constant minibatch size.} which requires averaging model weights and gradients from all machines for every iteration, our algorithm only aggregates weights with other machines after a certain number of local updates on each machine; (ii) our algorithm clips the gradient according to the norm of the local gradient on each machine, instead of the norm of the averaged gradients across machines as in the naive parallel algorithm. These key features make our algorithm amenable to the distributed setting with infrequent communication, and it is nontrivial to establish desired theoretical guarantees (e.g., linear speedup, reduced communication complexity). The main difficulty in the analysis lies in dealing with the nonconvex objective function, non-Lipschitz continuous gradient, and skipping communication rounds simultaneously. 
	Our main contribution is summarized as the following:
%	\vspace*{-0.2in}
		\begin{table*}[t]
		\caption{Comparison of Iteration and Communication Complexity of Different Algorithms for finding a point whose gradient's magnitude is smaller than $\epsilon$ (i.e., $\epsilon$-stationary point defined in Definition~\ref{def:epsilon}), $N$ is the number of machines, the meaning of other constants can be found in Assumption~\ref{assumption1}. For the complexity of~\cite{ghadimi2013stochastic} in this table, we assume the gradient norm is upper bounded by $M$ such that the gradient is $(L_0+L_1M)$-Lipschitz. However, the original paper of~\cite{ghadimi2013stochastic} does not require bounded gradient; instead, they require $L$-Lipschitz gradient and bounded variance $\sigma^2$. Under their assumption, their complexity result is $O\left(\Delta L\epsilon^{-2}+\Delta L\sigma^2\epsilon^{-4}\right)$.}
		\centering
		%	\label{tab:2}
		\resizebox{\textwidth}{!}{\begin{tabular}{|c|c|c|c|}
				\hline
				Algorithm & Setting & Iteration Complexity & Communication Complexity  \\\hline
				\makecell{SGD\\~\citep{ghadimi2013stochastic}}& Single%~\footnotemark[1] 
				& $O\left(\Delta (L_0+L_1M)\epsilon^{-2}+\Delta(L_0+L_1M)\sigma^2\epsilon^{-4}\right)$ & N/A \\\hline
				\makecell{Clipped SGD\\~\citep{zhang2019gradient}}& Single &$O\left((\Delta+(L_0+L_1\sigma)\sigma^2+\sigma L_0^2/L_1)^2)\epsilon^{-4}\right)$ &  N/A\\\hline
				\makecell{Clipping Framework\\~\citep{zhang2020improved}} & Single & $O\left(\Delta L_0\sigma^2\epsilon^{-4}\right)$ & N/A\\\hline
				\makecell{Naive Parallel of\\~\citep{zhang2020improved}} & Distributed%\footnotemark[2]
				& $O\left(\Delta L_0\sigma^2/(N\epsilon^{4})\right)$ & $O\left(\Delta L_0\sigma^2/(N\epsilon^{4})\right)$\\\hline
				\makecell{Ours\\(this work)} & Distributed & $O(\Delta L_0\sigma^2/(N\epsilon^{4}))$ & $O\left(\Delta L_0\sigma\epsilon^{-3}\right)$\\\hline
		\end{tabular}}
			\vspace*{-0.15in}
		\label{table:1}
	\end{table*}
	%~\footnote{In this setting, we assume the gradient norm is upper bounded by $M$ such that the gradient is $(L_0+L_1M)$-Lipschitz. However, we want to emphasize that the original paper of~\citet{ghadimi2013stochastic} does not require bounded gradient assumption, instead they require $L$-Lipschitz gradient and bounded variance $\sigma^2$. Under their assumption, their complexity result is $O\left(\Delta L\epsilon^{-2}+\Delta L\sigma^2\epsilon^{-4}\right)$}
	
	%(iii) the momentum variant of our algorithm resets the momentum buffer to be zero after each round of communication.
	\begin{itemize}
		\item We design a novel communication-efficient distributed stochastic local gradient clipping algorithm, namely CELGC, for solving a nonconvex optimization problem under the relaxed smoothness condition. The algorithm only needs to clip the gradient according to the local gradient's magnitude and globally averages the weights on all machines periodically. 
		%\textcolor{red}{We also design a variant of CELC which allows momentum clipping}. 
		To the best of our knowledge, this is the first work proposing communication-efficient distributed stochastic gradient clipping algorithms under the relaxed smoothness condition.
		\item Under the relaxed smoothness condition, we prove iteration and communication complexity results of our algorithm for finding an $\epsilon$-stationary point, when each machine has access to the same data distribution. First, comparing with~\cite{zhang2020improved}, we prove that our algorithm enjoys linear speedup, which means that the iteration complexity of our algorithm is reduced by a factor of $N$ (the number of machines). Second, comparing with the naive parallel version of the algorithm of~\cite{zhang2020improved}, we prove that our algorithm enjoys better communication complexity. %Specifically, our algorithm's communication complexity is smaller than the naive parallel clipping algorithm if the number of machines is not too large (i.e., $N\leq O(1/\epsilon)$). 
		The detailed comparison over existing algorithms under the same relaxed smoothness condition is described in Table~\ref{table:1}. To achieve this result, we introduce a novel technique of estimating truncated random variables, which is of independent interest and could be applied in related problems involving truncation operations such as gradient clipping.
		
		\item We empirically verify our theoretical results by conducting experiments on different neural network architectures on benchmark datasets.  The experimental results demonstrate that our proposed algorithm indeed exhibits speedup in practice.
	\end{itemize}

%\vspace*{-0.2in}
	\section{Related Work}
%\vspace*{-0.05in}
%\vspace*{-0.15in}

\paragraph{Gradient Clipping/Normalization Algorithms} In deep learning literature, gradient clipping (normalization) technique was initially proposed by~\cite{pascanu2013difficulty} to address the issue of exploding gradient problem in~\cite{pascanu2012understanding}, and it has become a standard technique when training language models~\citep{gehring2017convolutional,peters2018deep,merity2018regularizing}.It is shown that gradient clipping is robust and can mitigate label noise~\cite{menon2019can}. Recently gradient normalization techniques~\citep{you2017scaling,you2019large} were applied to train deep neural networks on the very large batch setting. For example, You et al.~\cite{you2017scaling} designed the LARS algorithm to train a ResNet50 on ImageNet with batch size 32k, which utilized different learning rates according to the norm of the weights and the norm of the gradient.
In optimization literature, gradient clipping (normalization) was used in early days in the field of convex optimization~\citep{ermoliev1988stochastic,alber1998projected,shor2012minimization}. Nesterov~\cite{nesterov1984minimization} and Hazan et al.~\cite{hazan2015beyond} considered normalized gradient descent for quasi-convex functions in deterministic and stochastic cases respectively. Gorbunov et al.~\cite{gorbunov2020stochastic} designed an accelerated gradient clipping method to solve convex optimization problems with heavy-tailed noise in stochastic gradients. Mai and Johansson~\cite{mai2021stability} established the stability and convergence of stochastic gradient clipping algorithms for convex and weakly convex functions. In nonconvex optimization, Levy~\cite{levy2016power} showed that normalized gradient descent can escape from saddle points. Cutosky and Mehta~\cite{cutkosky2020momentum} showed that adding a momentum provably improves the normalized SGD in nonconvex optimization. Zhang et al.~\cite{zhang2019gradient} and Zhang et al.~\cite{zhang2020improved} analyzed the gradient clipping for nonconvex optimization under the relaxed smoothness condition rather than the traditional $L$-smoothness condition in nonconvex optimization~\citep{ghadimi2013stochastic}. However, all of them only consider the algorithm in the single machine setting or the naive parallel setting, and none of them can apply to the distributed setting where only limited communication is allowed.

%
%\vspace*{-0.1in}

\paragraph{Communication-Efficient Algorithms in Distributed and Federated Learning} In large-scale machine learning, people usually train their model using first-order methods on multiple machines and these machines communicate and aggregate their model parameters periodically. When the function is convex, there is a scheme named one-shot averaging~\citep{zinkevich2010parallelized,mcdonald2010distributed,zhang2013communication,shamir2014distributed}, in which every machine runs a stochastic approximation algorithm and averages the model weights across machines only at the very last iteration. The one-shot averaging scheme is communication-efficient and enjoys statistical convergence with one pass of the data~\citep{zhang2013communication,shamir2014distributed,jain2017parallelizing,DBLP:conf/icml/KoloskovaSJ19}, but the training error may not converge in practice. Mcmahan et al.~\cite{mcmahan2016communication} considered the Federated Learning setting where the data is decentralized and might be non-i.i.d. across devices and communication is expensive and designed the very first algorithm for federated learning (a.k.a., FedAvg). Stich~\cite{stich2018local} considered a concrete case of FedAvg, namely local SGD, which runs SGD independently in parallel on different works and averages the model parameters only once in a while. Stich~\cite{stich2018local} also showed that local SGD enjoys linear speedup for strongly-convex objective functions. There are also some works analyzing local SGD and its variants on convex~\citep{dieuleveut2019communication,khaled2020tighter,karimireddy2020scaffold,woodworth2020minibatch,woodworth2020local,gorbunov2021local,yuan2021federated} and nonconvex smooth functions~\citep{zhou2017convergence,yu_linear,yu2019parallel,jiang2018linear,wang2018cooperative,lin2018don,basu2019qsparse,haddadpour2019local,karimireddy2020scaffold}. Recently, Woodworth et al.~\cite{woodworth2020minibatch,woodworth2020local} analyzed advantages and drawbacks of local SGD compared with minibatch SGD for convex objectives. Woodworth et al.~\cite{woodworth2021min} proved hardness results for distributed stochastic convex optimization. Reddi et al.~\cite{reddi2020adaptive} introduced a general framework of federated optimization and designed several federated versions of adaptive optimizers. Zhang et al.~\cite{zhang2021understanding} considered employing gradient clipping to optimize $L$-smooth functions and achieve differential privacy. Koloskova et al.~\cite{koloskova2020unified} developed a unified theory of decentralized SGD with changing topology and local updates for smooth functions. Zhang et al.~\cite{zhang2020fedpd} developed a federated learning framework for nonconvex smooth functions for non-i.i.d. data.  Due to a vast amount of literature on federated learning and limited space, we refer readers to~\cite{kairouz2019advances} and references therein. However, all of these works either assume the objective function is convex or $L$-smooth. To the best of our knowledge, our algorithm is the first communication-efficient algorithm that does not rely on these assumptions but still enjoys linear speedup.

% \footnotetext[1]{In this setting, we assume the gradient norm is upper bounded by $M$ such that the gradient is $(L_0+L_1M)$-Lipschitz. However, we want to emphasize that the original paper of~\citet{ghadimi2013stochastic} does not require bounded gradient assumption, instead, they require $L$-Lipschitz gradient and bounded variance $\sigma^2$. Under their assumption, their complexity result is $O\left(\Delta L\epsilon^{-2}+\Delta L\sigma^2\epsilon^{-4}\right)$.}
% \footnotetext[2]{Naive Parallel of~\citep{zhang2020improved} is different from our algorithm (CELGC) with $I=1$ in that the naive version requires averaging gradients across all machines to update the model while CELGC only updates the model using local gradients computed in that machine. In each iteration, the naive version clips the gradient bases on the globally averaged gradient while ours only bases on the local gradient.}
%\vspace*{-0.15in}

\section{Preliminaries, Notations and Problem Setup}
%\vspace*{-0.1in}
%\vspace*{-0.15in}

\paragraph{Preliminaries and Notations} Denote $\|\cdot\|$ by the Euclidean norm.  We denote $f:\R^d\rightarrow\R$ as the overall loss function, and $F:
\R^d\rightarrow\R$ as the loss function on $i$-th machine, where $i\in[N]:=\{1,\ldots, N\}$. Denote $\nabla h(\x)$ as the gradient of $h$ evaluated at the point $\x$, and denote $\nabla h(\x;\xi)$ as the stochastic gradient of $h$ calculated based on sample $\xi$.

\begin{defi}[$L$-smoothness]
	\label{def:lsmooth}
	A function $h$ is $L$-smooth if $\|\nabla h(\x)-\nabla h(\y)\|\leq L\|\x-\y\| $ for all $\x,\y\in\R^d$.
\end{defi}

\begin{defi}[$(L_0,L_1)$-smoothness]
	\label{def:relaxedsmooth}
	A second order differentiable function $h$ is $(L_0,L_1)$-smooth if $\|\nabla^2 h(\x)\|\leq L_0+L_1\|\nabla h(\x)\|$ for any $\x\in\R^d$.
\end{defi}

\begin{defi}[$\epsilon$-stationary point]
	\label{def:epsilon}
	$\x\in\R^d$ is an $\epsilon$-stationary point of the function $h$ if $\|\nabla h(\x)\|\leq \epsilon$.
\end{defi}
%\vspace*{-0.1in}

\paragraph{Remark:} $(L_0,L_1)$-smoothness is strictly weaker than $L$-smoothness. First, we know that $L$-smooth functions is $(L_0,L_1)$-smooth with $L_0=L$ and $L_1=0$. However the reverse is not true. For example, consider the function $h(x)=x^4$, we know that the gradient is not Lipschitz continuous and hence $h$ is not $L$-smooth, but $|h''(x)|=12x^2\leq 12+3\times 4|x|^3=12+3|h'(x)|$, so $h(x)=x^4$ is $(12,3)$-smooth. Zhang et al.~\cite{zhang2019gradient} empirically showed that the $(L_0,L_1)$-smoothness holds for the AWD-LSTM~\citep{merity2018regularizing}. In nonconvex optimization literature~\citep{ghadimi2013stochastic,zhang2020improved}, the goal is to find an $\epsilon$-stationary point since it is NP-hard to find a global optimal solution for a general nonconvex function.
%\vspace*{-0.1in}

\paragraph{Problem Setup}

In this paper, we consider the following optimization problem using $N$ machines:
\begin{equation}
	\min_{\x\in\R^d}f(\x)=\E_{\xi\sim\mathcal{D}}\left[F(\x;\xi)\right],
\end{equation}
where $\mathcal{D}$ stands for the data distribution which each machine has access to, and $f$ is the population loss function.
%where $N$ is the number of nodes and each $F(\x):=\E_{\xi_i\sim\mathcal{D}}\left[F(\x;\xi_i)\right]$ is a nonconvex function where $\mathcal{D}$ stands for the $i$-th machine's data distribution. This formulation has broad applications in distributed deep learning and FL. For example, in FL setting, $F$ stands for the loss function on $i$-th machine, $\mathcal{D}$ represents the data distribution on $i$-th machine, and $N$ machines want to jointly optimize the objective function $f$. 

%In traditional analysis of nonconvex optimization and federated learning, people usually assume the $L$-smoothness of the objective function. However, it is not necessarily the case for NLP applications, in which an LSTM neural network is usually used. Recently~\cite{zhang2019gradient,zhang2020improved} explored a relaxed condition, namely $(L_0,L_1)$-smoothness to characterize the property of the objective function of LSTM, and then they proved the convergence result for a clipping algorithm. Their algorithm is designed for a single machine, and it remains unclear how to design a communication-efficient distributed clipping algorithm for nonconvex optimization under the same assumption and obtain linear speedup property. We aim to study this problem in this paper.
We make the following assumptions throughout the paper.
\begin{ass}
	\label{assumption1}
	\begin{itemize}
		\item [(i)] $f(\x)$ is $(L_0,L_1)$-smooth, i.e., $\|\nabla^2 f(\x)\|\leq L_0+L_1\|\nabla f(\x)\|$, for $\forall \x\in\R^d$. %and $i=1,\ldots, N$, and $\mathcal{D}_1=\mathcal{D}_2=\ldots,=\mathcal{D}_N$. 
		\item [(ii) ] There exists $\Delta>0$ such that $f(\x_0)-f_*\leq\Delta$, where $f_*$ is the global optimal value of $f$.
		\item [(iii)] For all $\x\in\R^d$, $\E_{\xi\sim\mathcal{D}}\left[\nabla F(\x;\xi)\right]=\nabla f(\x)$, and $\|\nabla F(\x;\xi)-\nabla f(\x)\|\leq \sigma$ almost surely. 
		%	\item [(iv)] For all $\x\in\R^d$, $\frac{1}{N}\sum_{i=1}^{N}\|\nabla F(\x)-\nabla f(\x)\|\leq \kappa$.
		%	\item [4.] For all $\x\in\R^d$, $\|\frac{1}{N}\sum_{i=1}^{N}\nabla F(\x;\xi)-\nabla F(\x)\|\leq \sigma$ almost surely, and $\frac{1}{N}\sum_{i=1}^{N}\|\nabla F(\x)-\nabla f(\x)\|\leq \kappa^2$.
		%\item [(iv)] 
		\item [(iv)] The distribution of $\nabla F(\x;\xi)$ is symmetric around its mean $\nabla f(\x)$, and the density is monotonically decreasing over the $\ell_2$ distance between the mean and the value of random variable.
	\end{itemize}
\end{ass}
%\vspace*{-0.1in}
\paragraph{Remark:} The Assumption~\ref{assumption1} (i) means that the loss function satisfies the relaxed-smoothness condition, and it holds when training a language model with LSTMs. We consider the homogeneous distributed learning setting, where each machine has access to same data distribution as in~\cite{woodworth2020local}. Assumption~\ref{assumption1} (ii) and (iii) are standard assumptions in nonconvex optimization~\citep{ghadimi2013stochastic,zhang2019gradient}. Note that it is usually assumed that the stochastic gradient is unbiased and has bounded variance~\citep{ghadimi2013stochastic}, but in the relaxed smoothness setting, we follow~\cite{zhang2019gradient} to assume we have unbiased stochastic gradient with almost surely bounded deviation $\sigma$. %This is a stronger assumption than the bounded variance, but it is a normal assumption when encountering relaxed smoothness. 
Assumption 1 (iv) assumes the noise is unimodal and symmetric around its mean, which we empirically verify in Appendix~\ref{sec:grad_dist}. Examples satisfying (iii) and (iv) include truncated Gaussian distribution, truncated student's t-distribution, etc. %Assumption~\ref{assumption1}. (iv) quantifies the averaged heterogeneity across nodes, which is frequently used in the FL literature (e.g.,~\citet{yu_linear}).
\begin{algorithm}[t]
	\caption{Communication Efficient Local Gradient Clipping (CELGC)}
	\begin{algorithmic}[1]
		\FOR{$t=0,\ldots,T$}
		%\STATE Each node $i$ calculates an unbiased stochastic gradient $\nabla F(\bar{\x}_{t}^i;\xi_{t}^{i})$ of $F(\cdot)$ at point $\bar{\x}_{t}^i$
		\STATE Each node $i$ samples its stochastic gradient $\nabla F(\x_{t}^i;\xi_t^{i})$, where $\xi_t^i\sim\mathcal{D}$.
		\STATE Each node $i$ updates it local solution \textbf{in parallel}: 
		\begin{equation}
			\x_{t+1}^{i} = \x_t^i-\min\left(\eta,\frac{\gamma}{\|\nabla F(\x_t^i;\xi_t^i)\|}\right)\nabla F(\x_t^i;\xi_t^i)
		\end{equation}
		\IF{\text{$t$ is a multiple of $I$}}
		\STATE Each worker resets the local solution as the averaged solution across nodes:
		\begin{equation}
			\x_t^{i}=\widehat{\x}:=\frac{1}{N}\sum_{j=1}^{N}\x_t^{j} \quad\quad\quad \forall i\in\left\{1,\ldots, N\right\}
		\end{equation}
		\ENDIF
		\ENDFOR
	\end{algorithmic}
	\label{alg:main}
\end{algorithm}
%\vspace*{-0.15in}

\section{Algorithm and Theoretical Analysis}
%\vspace*{-0.1in}

\subsection{Main Difficulty and the Algorithm Design}
We briefly present the main difficulty in extending the single machine setting~\citep{zhang2020improved} to the distributed setting. In~\cite{zhang2020improved}, they split the contribution of decreasing objective value by considering two cases: clipping large gradients and keeping small gradients. If communication is allowed at every iteration, then we can aggregate gradients on each machine and determine whether we should clip or keep the averaged gradient or not. However, in our setting, communicating with other machines at every iteration is not allowed. This would lead to the following difficulties: (i) the averaged gradient may not be available to the algorithm if communication is limited, so it is hard to determine whether clipping operation should be performed or not; (ii) the model weight on every machine may not be the same when communication does not happen at the current iteration; (iii) the loss function is not $L$-smooth, so the usual local SGD analysis for $L$-smooth functions cannot be applied in this case.

To address this issue, we design a new algorithm, namely Communication-Efficient Local Gradient Clipping (CELGC), which is presented in Algorithm~\ref{alg:main}. The algorithm calculates a stochastic gradient and then performs multiple local gradient clipping steps on each machine in parallel, and aggregates model parameters on all machines after every $I$ steps of local updates. We aim to establish iteration and communication complexity for Algorithm~\ref{alg:main} for finding an $\epsilon$-stationary point when $I>1$. %We present our main theoretical results as below.

%Note that the naive version of the parallel gradient clipping algorithm in~\citet{zhang2020improved} needs to aggregates model parameters and gradients across all machines at every iteration, and perform one step of gradient clipping operation based on the aggregated gradient. Conceptually speaking, our algorithm is expected to have better performance. The reason is that our algorithm is able to skip communication rounds, and does not need to transmit gradient information across machines (note that it only averages the weights). The remaining issue is that the non-asymptotic convergence guarantees are not established yet. In other words, we aim to establish iteration and communication complexity for Algorithm~\ref{alg:main} for finding an $\epsilon$-stationary point. We present our main theoretical results as below.
%, which is a common goal for nonconvex optimization problems such as training deep neural networks.

%\section{Analysis}
%\vspace*{-0.1in}
\subsection{A Lemma for Truncated Random Variables}
As indicated in the previous subsection, Algorithm~\ref{alg:main} clips gradient on each local machine. This feature greatly complicates the analysis: it is difficult to get an unbiased estimate of the stochastic gradient when its magnitude is not so large such that clipping does not happen. The reason is due to the dependency between random variables (i.e., stochastic gradient and the indicator of clipping). To get around of this difficulty, we introduce the following Lemma for estimating truncated random variables.
\begin{lem}
	\label{lem:new3}
	%	Suppose $\{A_i\}_{i=1}^{N}$ is an sequence of independent random variables, $B_i$   
	Denote by $\g\in\R^d$ a random vector. Suppose the distribution of $\g$ is symmetric around its mean, %the support of the density is $\R^d$, 
	and the density is monotonically decreasing over the $\ell_2$ distance between the mean and the value of random variable, then there exists $\Lambda=\text{diag}(c_1,\ldots,c_d)$ and $0< c_i\leq 1$ with $i=1,\ldots, d$ such that 
	\begin{equation}
		\label{eqn:newequal}
		\E\left[\g\mathbb{I}(\|\g\|\leq\alpha)\right]=\text{Pr}(\|\g\|\leq\alpha)\Lambda\mathbb{E}\left[\g\right],
	\end{equation}
	%where $\circ$ denotes the Hadamard product, $\Lambda=\text{diag}(c_1,\ldots,c_d)$
	where $\alpha>0$ is a constant, $\mathbb{I}(\cdot)$ is the indicator function.
\end{lem}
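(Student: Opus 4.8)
The plan is to reduce the vector statement to a coordinatewise claim by exploiting the symmetry of the distribution. Write $\mu = \E[\g]$ and let $\bxi = \g - \mu$, so $\bxi$ is symmetric around $0$ and its density $p(\bxi)$ depends only on $\|\bxi\|$ in the sense of being monotonically decreasing in $\|\bxi\|$ (more precisely, $p$ is a function that is radially nonincreasing). Without loss of generality I may assume $\mu \neq 0$, since if $\mu = 0$ then $\E[\g \,\mathbb{I}(\|\g\|\le\alpha)] = \E[\bxi\,\mathbb{I}(\|\bxi\|\le\alpha)] = 0$ by symmetry and any $\Lambda$ works. By rotating coordinates I may further assume $\mu = \mu_1 e_1$ with $\mu_1 > 0$; the rotation does not affect the monotonicity-in-$\|\cdot\|$ hypothesis, and I can rotate back at the end (a conjugation of $\Lambda$ by an orthogonal matrix — here I should be slightly careful and argue that in the aligned frame $\Lambda$ comes out diagonal, so that the final diagonal form is what the lemma asserts in the original coordinates, or simply state the lemma is rotation-equivariant).

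Next I would compute each coordinate of $\E[\g\,\mathbb{I}(\|\g\|\le\alpha)]$. For coordinate $j \ge 2$: the region $\{\|\g\|\le\alpha\} = \{\|\bxi + \mu_1 e_1\|\le\alpha\}$ is symmetric under $\xi_j \mapsto -\xi_j$ (it only involves $\xi_j$ through $\xi_j^2$), and the density $p(\bxi)$ is also symmetric under this reflection, while $g_j = \xi_j$ is odd; hence $\E[g_j\,\mathbb{I}(\|\g\|\le\alpha)] = 0 = \Pr(\|\g\|\le\alpha)\cdot 0$, so we may take $c_j$ to be anything in $(0,1]$, say $c_j=1$, and the $j$-th equation holds trivially. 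The entire content of the lemma therefore lies in coordinate $1$: I must show
\begin{equation}
	\E[g_1\,\mathbb{I}(\|\g\|\le\alpha)] = c_1 \,\Pr(\|\g\|\le\alpha)\, \mu_1
\end{equation}
for some $c_1 \in (0,1]$, i.e. that the ratio $c_1 := \E[g_1\,\mathbb{I}(\|\g\|\le\alpha)] \big/ \big(\mu_1\Pr(\|\g\|\le\alpha)\big)$ lies in $(0,1]$. Equivalently, writing $g_1 = \mu_1 + \xi_1$, I need $0 < \E[g_1 \mid \|\g\|\le\alpha] \le \mu_1$, i.e. conditioning on the ball $\{\|\g\|\le\alpha\}$ shrinks the first coordinate's mean toward $0$ but does not overshoot past $0$ or flip its sign.

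The main obstacle is proving these two inequalities $0 < \E[g_1\,\mathbb{I}(\|\g\|\le\alpha)]$ and $\E[g_1\,\mathbb{I}(\|\g\|\le\alpha)] \le \mu_1 \Pr(\|\g\|\le\alpha)$, i.e. $\E[\xi_1\,\mathbb{I}(\|\g\|\le\alpha)] \le 0$. For the upper bound I would integrate over slices $\xi_1 = s$: write $\E[\xi_1 \mathbb{I}(\|\bxi+\mu_1 e_1\|\le\alpha)] = \int s\, q(s)\, ds$ where $q(s) = \int_{\{\|\bxi_{\perp}\|^2 \le \alpha^2 - (s+\mu_1)^2\}} p(s, \bxi_\perp)\, d\bxi_\perp$ is the measure of the admissible cross-section at height $\xi_1 = s$. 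I want to show $\int s\,q(s)\,ds \le 0$, for which it suffices (pairing $s$ with $-s$) to show $q(s) \le q(-s)$ for all $s>0$. Now $q(s)$ vs $q(-s)$: the radius of the cross-section ball is $\sqrt{\alpha^2 - (s+\mu_1)^2}$ at height $s$ versus $\sqrt{\alpha^2-(s-\mu_1)^2}$ at height $-s$; since $\mu_1 > 0$ and $s>0$ we have $(s+\mu_1)^2 > (s-\mu_1)^2$, so the ball at height $-s$ is larger (and nonempty whenever the one at $s$ is). Moreover, by the radial-monotonicity of $p$, for any fixed perpendicular vector $\bxi_\perp$ we have $p(s,\bxi_\perp) \le p(-s,\bxi_\perp)$ when $s>0$ (because $\|(s,\bxi_\perp)\| = \|(-s,\bxi_\perp)\|$ — wait, these are equal, so this is equality, not inequality; the real gain is purely from the larger integration domain). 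Combining: $q(s) = \int_{B(0,\rho_+)} p(s,\bxi_\perp) = \int_{B(0,\rho_+)} p(-s,\bxi_\perp) \le \int_{B(0,\rho_-)} p(-s,\bxi_\perp) = q(-s)$ where $\rho_+ = \sqrt{\alpha^2-(s+\mu_1)^2} \le \rho_- = \sqrt{\alpha^2-(s-\mu_1)^2}$. This gives the upper bound. For the strict lower bound $\E[g_1\,\mathbb{I}(\|\g\|\le\alpha)] > 0$: here the key point is that the displaced ball $\{\|\bxi + \mu_1 e_1\|\le\alpha\}$, re-centered, is $\{\|\g\|\le\alpha\}$, and I need $\E[g_1;\, g_1\in(0,\alpha], \|\g\|\le\alpha] > \E[-g_1;\, g_1\in[-\alpha,0), \|\g\|\le\alpha]$, i.e. the positive-$g_1$ part of the ball carries more $|g_1|$-mass than the negative part. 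This follows from comparing $q(s)$ as a density on the centered variable $g_1 = \mu_1+s$: by the same slicing with respect to $g_1 = u$, the cross-sectional radius $\sqrt{\alpha^2-u^2}$ is symmetric in $u$ but the density seen along the slice is $p(u-\mu_1, \bxi_\perp)$, which for $u>0$ exceeds $p(-u-\mu_1,\bxi_\perp)$ by radial monotonicity since $\|(u-\mu_1,\bxi_\perp)\| < \|(-u-\mu_1,\bxi_\perp)\|$ (as $|u-\mu_1| < |u+\mu_1|$ for $u,\mu_1>0$). Hence the positive slices dominate pointwise, and strictly so on a set of positive measure, yielding $\E[g_1\,\mathbb{I}(\|\g\|\le\alpha)]>0$. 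I expect the technical care will go into handling the degenerate cases (density not strictly positive, $\alpha$ small enough that the ball misses the support of certain slices, non-smooth densities) and into making the "radial monotonicity" hypothesis precise enough to drive both comparisons; modulo that, setting $c_1$ equal to the displayed ratio and $c_j=1$ for $j\ge2$ completes the proof.
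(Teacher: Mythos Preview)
Your proposal is correct and proceeds along a somewhat different route than the paper's proof. The paper works coordinate-by-coordinate in the \emph{original} frame: for each index $i$ with $\mu_i>0$ it shows $0<\E[\g_{[i]}\mathbb{I}(\|\g\|\le\alpha)]\le\mu_i\Pr(\|\g\|\le\alpha)$, obtaining the lower bound via the same pairing $u\leftrightarrow -u$ you use, but obtaining the upper bound by an explicit case split ($\alpha\le\mu_i$ versus $\alpha>\mu_i$) and, in the latter case, a reflection $\hat g_{[i]}=2\mu_i-g_{[i]}$ that decomposes $[-\alpha,\alpha]$ into three pieces. Your version is cleaner in two respects: rotating to $\mu=\mu_1 e_1$ makes all but one coordinate vanish exactly, and your two slicing arguments (in $\xi_1$ for the upper bound, in $g_1$ for the lower bound) are structurally identical, with the case $\alpha\le\mu_1$ absorbed automatically since then the $\xi_1$-domain lies in $(-\infty,0]$.

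One point you flag but do not settle deserves a sentence: conjugating a diagonal $\tilde\Lambda$ by the rotation $Q$ does not give a diagonal matrix in general, so your worry is legitimate. The resolution is that in the aligned frame you actually obtain $\E[\tilde\g\,\mathbb{I}]=c_1\Pr\cdot\tilde\mu$ (since $\tilde\mu_j=0$ for $j\ge2$, the choice of $c_j$ is irrelevant to the vector equation), hence after rotating back $\E[\g\,\mathbb{I}]=c_1\Pr\cdot\mu$ and the \emph{scalar} matrix $\Lambda=c_1 I$ works in the original coordinates. This is in fact a slight strengthening of the lemma as stated (all $c_i$ equal), which the paper's coordinate-wise argument does not make explicit. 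Once you record that observation, your proof is complete.
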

The proof of Lemma~\ref{lem:new3} is included in Appendix~\ref{appendix:prooflem}. This Lemma provides an unbiased estimate for a truncated random variable. In the subsequent of this paper, we regard $\g$ in the Lemma as the stochastic gradient and $\alpha$ as the clipping threshold. In addition, we define $c_{\min}=\min(c_1,\ldots,c_d)$, and $c_{\max}=\max(c_1,\ldots,c_d)$. We have $0<c_{\min}\leq c_{\max}\leq 1$.
%\textcolor{red}{TBD}
%\vspace*{-0.1in}
\subsection{Main Results}
%\vspace*{-0.1in}

\begin{thm}
	\label{thm_clipping}
	Suppose Assumption~\ref{assumption1} holds and $\sigma\geq 1$. Take $\epsilon\leq\min(\frac{AL_0}{BL_1},0.1)$ be a small enough constant and $N\leq\min(\frac{1}{\epsilon},\frac{14AL_0}{5BL_1\epsilon})$. 
	%We denote $A=\Tilde{A}=1+e^c-(e^c-1)/c$ and $B=\Tilde{B}=(e^c-1)/c$ where $c>0$ will be specified in the proof. 
	In Algorithm~\ref{alg:main}, choose $I\leq \sqrt{\frac{1}{c_{\min}}}\frac{\sigma}{N\epsilon}$, $\gamma\leq c_{\min}\frac{N\epsilon}{28\sigma}\min\{ \frac{\epsilon}{AL_0}, \frac{1}{BL_1}\}$ and the fixed ratio $\frac{\gamma}{\eta}=5\sigma$, where $A\geq 1$ and $B\geq 1$ are constants which will be specified in the proof, and run Algorithm \ref{alg:main} for $T= O\left(\frac{\Delta L_0\sigma^2}{N\epsilon^4}\right)$ iterations. Define $\bar{\x}_t=\frac{1}{N}\sum_{i=1}^{N}\x_t^i$. Then we have %{\color{red} $\bar{\x}_t$ is not defined here}
	$\frac{1}{T}\sum_{t=1}^{T}\E\| \nabla f(\bar{\x}_t)\| \leq 9\epsilon.$
\end{thm}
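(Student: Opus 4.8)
The plan is to run a standard descent-lemma argument on the averaged iterate sequence $\bar{\x}_t = \frac{1}{N}\sum_i \x_t^i$, but with three layers of care beyond the single-machine analysis of \citet{zhang2020improved}: (a) controlling the drift $\|\x_t^i - \bar{\x}_t\|$ between communication rounds, (b) handling the clipping on each local machine separately (since the algorithm clips by the local gradient norm, not the averaged one), and (c) removing the bias introduced by truncation via Lemma \ref{lem:new3}. The first step is to establish a one-step descent inequality. Because $f_i$ is $(L_0,L_1)$-smooth, along any short segment the gradient is nearly Lipschitz with constant $\sim L_0 + L_1 \|\nabla f(\bar\x_t)\|$; using the step-size restriction $\gamma/\eta = 5\sigma$ and $\gamma$ small, one shows that the effective update direction $\frac{1}{N}\sum_i \min(\eta, \gamma/\|\nabla F_i(\x_t^i;\xi_t^i)\|)\nabla F_i(\x_t^i;\xi_t^i)$ keeps the iterates in a region where a quadratic upper bound $f(\bar\x_{t+1}) \le f(\bar\x_t) - \langle \nabla f(\bar\x_t), \bar\x_{t+1}-\bar\x_t\rangle + \frac{L}{2}\|\bar\x_{t+1}-\bar\x_t\|^2$ is valid with a manageable $L$.

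The second step is to take conditional expectation and exploit Lemma \ref{lem:new3}. Splitting each machine's contribution into the ``clipped'' event $\{\|\nabla F_i\| > \gamma/\eta\}$ and the ``unclipped'' event $\{\|\nabla F_i\| \le \gamma/\eta = 5\sigma\}$, on the unclipped event the step is just $\eta\nabla F_i(\x_t^i;\xi_t^i)\mathbb{I}(\|\nabla F_i\|\le 5\sigma)$, whose conditional expectation is, by Lemma \ref{lem:new3}, $\eta\,\mathrm{Pr}(\cdot)\,\Lambda_i\nabla f_i(\x_t^i)$ — a positive-diagonal-scaled version of the true gradient, so it still has nonnegative inner product with $\nabla f_i(\x_t^i)$, and the scaling factor is bounded below by $c_{\min}\,\mathrm{Pr}(\|\nabla F_i\|\le 5\sigma)$, which is itself bounded below by a constant since $\sigma\ge 1$ and the deviation is at most $\sigma$ a.s. (so in fact clipping never triggers on the unclipped-in-expectation side when thresholds are set as $5\sigma > \|\nabla f_i\| + \sigma$ whenever $\|\nabla f_i\|\le 4\sigma$; the case $\|\nabla f_i\| > 4\sigma$ is the genuinely-large-gradient regime handled by the clipping term). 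The key point is that this converts the biased truncated increment into something proportional to $\nabla f_i(\x_t^i)$ with a known positive constant, at the cost of the diagonal distortion $\Lambda_i$, which one absorbs by working with $c_{\min}$ and $c_{\max}$.

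The third step — and the main obstacle — is bounding the consensus error $\E\|\x_t^i - \bar\x_t\|$ and relating $\nabla f_i(\x_t^i)$ back to $\nabla f(\bar\x_t)$. Since each local step has norm at most $\eta$ (in the clipped case $\gamma/\|\nabla F_i\|\cdot\|\nabla F_i\| = \gamma$, and $\gamma < \eta\cdot 5\sigma$; actually norm $\le \min(\eta\|\nabla F_i\|, \gamma)$), and at most $I$ local steps occur before a reset, the drift is at most $O(I\gamma)$ or $O(I\eta\sigma)$; combined with the $(L_0,L_1)$-smoothness this gives $\|\nabla f_i(\x_t^i) - \nabla f(\bar\x_t)\| \lesssim (L_0 + L_1\|\nabla f(\bar\x_t)\|)\cdot I\gamma$, which the constraint $I \le \sqrt{1/c_{\min}}\,\sigma/(N\epsilon)$ together with the tiny $\gamma$ makes $O(\epsilon)$-small relative to the quantities we track. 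I expect the delicate bookkeeping to be: showing the cross term $\langle \nabla f(\bar\x_t), \text{update}\rangle$ dominates $\epsilon\|\nabla f(\bar\x_t)\|$ (so telescoping yields $\frac{1}{T}\sum_t \E\|\nabla f(\bar\x_t)\| \lesssim \frac{\Delta}{\eta T} + (\text{noise and drift terms}) \le 9\epsilon$), which requires simultaneously that the variance term $\frac{L}{2}\E\|\bar\x_{t+1}-\bar\x_t\|^2 \lesssim \frac{\eta^2\sigma^2 L_0}{N} + \eta^2\gamma^2 L_1(\cdots)$ be controlled — the $1/N$ here (from averaging $N$ independent noises on the unclipped event) is exactly what produces linear speedup — and that all the constants $A, B$ can be chosen to make the stated thresholds consistent. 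Finally, plugging $\eta = \gamma/(5\sigma)$, $\gamma \asymp c_{\min}\frac{N\epsilon^2}{\sigma}\min\{1/(AL_0), \epsilon/(BL_1)\}$ (up to the $\epsilon$ vs. $1/L_1$ split), and $T \asymp \Delta L_0\sigma^2/(N\epsilon^4)$ into the telescoped bound closes the argument; the constant $9$ is just slack for collecting the $O(\epsilon)$ error terms.
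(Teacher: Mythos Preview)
Your plan tracks the paper's proof closely: a one-step $(L_0,L_1)$-descent inequality on $\bar{\x}_t$, the per-machine split into the clipped event $J(t)$ and unclipped event $\bar J(t)$, Lemma~\ref{lem:new3} on the unclipped part, the drift bound $\|\x_t^i-\bar{\x}_t\|\le 2\gamma I$, the $1/N$ variance reduction from averaging $N$ independent noises in the second-order term, and telescoping with the stated parameter choices. The constants $A,B$ in the paper are fixed by taking $c=1/2$ in the $(L_0,L_1)$ descent lemma, so that $2\gamma I\le c/L_1$ is exactly what your parameter constraints give.

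The one place you are vague, and where your sketch departs from the paper, is the treatment of the clipped event and the combination of the two events' contributions. You propose a deterministic case split on $\|\nabla f_i(\x_t^i)\|\lessgtr 4\sigma$, noting that below $4\sigma$ clipping never triggers (true, by the a.s.\ bound) and that above it ``the clipping term handles'' things; but you do not supply the inequality that extracts descent from the normalized step, nor explain how the intermediate regime $4\sigma\le\|\nabla f_i\|\le 6\sigma$, where both events have positive probability, is merged into a single bound. The paper avoids this case split entirely. For the clipped contribution it uses the algebraic inequality $-\langle\u,\v\rangle/\|\v\|\le -\mu\|\u\|-(1-\mu)\|\v\|+(1+\mu)\|\u-\v\|$ with $\mu=2/5$ (applied with $\u=\nabla f(\bar\x_t)$, $\v=\nabla F_i(\x_t^i;\xi_t^i)$), yielding a per-machine bound $U(\bar\x_t)$ on $J(t)$; the unclipped part yields a bound $V(\bar\x_t)$ on $\bar J(t)$. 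The key observation is then that $U(\bar\x_t)\le V(\bar\x_t)$ for every $t$ under the stated parameter choices, so every clipped contribution can be replaced by the (larger) unclipped bound and one sums only $V$ before telescoping. This amortization is what makes the constants close cleanly, and is the concrete step your plan should add.
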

%\vspace*{-0.1in}

\paragraph{Remark:} We have some implications of Theorem~\ref{thm_clipping}. When the number of machines is not large (i.e., $N\leq O(1/\epsilon)$) and the number of skipped communications is not large (i.e., $I\leq O(\sigma/\epsilon N)$), then with proper setting of the learning rate, we have following observations. First, our algorithm enjoys linear speedup, since the number of iterations we need to find an $\epsilon$-stationary point is divided by the number of machines $N$ when comparing the single machine algorithm in~\cite{zhang2020improved}. Second, our algorithm is communication-efficient, since the communication complexity (a.k.a., number of rounds) is $T/I=O\left(\Delta L_0\sigma\epsilon^{-3}\right)$, which provably improves the naive parallel gradient clipping algorithm of~\cite{zhang2020improved} with $O(\Delta L_0\sigma^2/(N\epsilon^4))$ communication complexity when $N\leq O(1/\epsilon)$.

Another important fact is that both iteration complexity and communication complexity only depend on $L_0$ while being independent of $L_1$ and the gradient upper bound $M$. This indicates that our algorithm does not suffer from slow convergence even if these quantities are large. This is in line with~\cite{zhang2020improved} as well. In addition, local gradient clipping is a good mechanism to alleviate the bad effects brought by a rapidly changing loss landscape (e.g., some language models such as LSTM).

% Our proofs are significantly different from that in the single machine setting~\cite{zhang2020improved}.
%\vspace*{-0.1in}
\subsection{Sketch of the Proof of Theorem~\ref{thm_clipping}}
%\vspace*{-0.15in}

In this section, we present the sketch of our proof of Theorem~\ref{thm_clipping}. The detailed proof can be found in Appendix~\ref{appendix:mainthm}. The key idea in our proof is to establish the descent property of the sequence $\{ f(\bar{\x}_t)\}_{t=0}^{T}$ in the distributed setting under the relaxed smoothness condition, where $\bar{\x}_t=\frac{1}{N}\sum_{i=1}^{t}\x_t^i$ is the averaged weight across all machines at $t$-th iteration. The main challenge is that the descent property of $(L_0,L_1)$-smooth function in the distributed setting does not naturally hold, which is in sharp contrast to the usual local SGD proof for $L$-smooth functions. To address this challenge, we need to carefully study whether the algorithm is able to decrease the objective function in different situations. Our main technical innovations in the proof are listed as the following.

First, we monitor the algorithm's progress in decreasing the objective value according to some novel measures. The measures we use are the magnitude of the gradient evaluated at the averaged weight and the magnitude of local gradients evaluated at the individual weights on every machine. 
To this end, we introduce Lemma~\ref{lem:descent}, whose goal is to carefully inspect how much progress the algorithm makes, according to the magnitude of local gradients calculated on each machine. The reason is that the local gradient's magnitude is an indicator of whether the clipping operation happens or not. For each fixed iteration $t$, we define $J(t) = \{i\in[N]:\|\nabla F(\x_t^i,\xi_t^i)\|\geq \gamma/\eta \}$ and $\Bar{J}(t) = [N]\setminus J(t)$. Briefly speaking, $J(t)$ contains all machines that perform clipping operation at iteration $t$ and $\Bar{J}(t)$ is the set of machines that do not perform clip operation at iteration $t$. In Lemma~\ref{lem:descent}, we perform the one-step analysis and consider all machines with different clipping behaviors at the iteration $t$. The proof of Lemma~\ref{lem:descent} crucially relies on the lemma for estimating truncated random variables (i.e., Lemma~\ref{lem:new3}) to calculate the expectation of non-clipped gradients.%By considering all cases together and taking the telescoping sum over $t=0,\ldots,T$, we can get an upper bound of the gradient in the ergodic sense. 

Second, Zhang et al.~\cite{zhang2020improved} inspect their algorithm's progress by considering the magnitude of the gradient at different iterations, so they treat every iteration differently. However, this approach does not work in the distributed setting with infrequent communication since one cannot get access to the averaged gradient across machines at every iteration. Instead, we treat every iteration of the algorithm as the same but consider the progress made by each machine.

Third, by properly choosing hyperparameters ($\eta,\gamma,I$) and using an amortized analysis, we prove that our algorithm can decrease the objective value by a sufficient amount, and the sufficient decrease is mainly due to the case where the gradient is not too large (i.e., clipping operations do not happen). This important insight allows us to better characterize the training dynamics. %without worrying too much about the case where the gradient is large (i.e., the clipping operation is performed). 

%With the idea mentioned above, 
Now we present how to proceed with the proof in detail.

Lemma~\ref{lemma:6} characterizes the $\ell_2$ error between averaged weight and individual weights at $t$-th iteration. %Intuitively, the $\ell_2$ error scales linearly in terms of the length of node synchronization interval $I$.
%\vspace*{-0.2in}
\begin{lem}
	\label{lemma:6}
	Under Assumption~\ref{assumption1}, for any $i$ and $t$, Algorithm~\ref{alg:main} ensures $\| \bar{\x}_t-\x_t^i\|  \leq 2\gamma I$ holds almost surely. 
\end{lem}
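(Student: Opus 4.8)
The plan is to control the drift $\|\bar{\x}_t - \x_t^i\|$ over one communication round by exploiting the fact that every local update step has norm at most $\gamma$, regardless of whether the clipping operation triggers or not. The key elementary observation is that the update rule
\[
  \x_{t+1}^i = \x_t^i - \min\!\left(\eta,\frac{\gamma}{\|\nabla F_i(\x_t^i;\xi_t^i)\|}\right)\nabla F_i(\x_t^i;\xi_t^i)
\]
always moves $\x_t^i$ by a vector whose Euclidean norm is $\min(\eta\|\nabla F_i\|,\gamma)\le \gamma$. So each local step contributes at most $\gamma$ to the distance traveled by machine $i$.

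The argument then proceeds as follows. First I would fix an iteration $t$ and let $t_0 = I\lfloor t/I\rfloor$ be the most recent communication round at or before $t$; by the averaging step in the algorithm, $\x_{t_0}^i = \bar{\x}_{t_0}$ for all $i$, so all machines agree at time $t_0$. Second, for each machine $i$ I would write $\x_t^i - \x_{t_0}^i$ as a telescoping sum of the local increments from step $t_0$ to step $t-1$ and apply the triangle inequality together with the per-step bound above, giving $\|\x_t^i - \x_{t_0}^i\| \le (t - t_0)\gamma \le (I-1)\gamma \le I\gamma$ since there are at most $I-1$ local steps since the last reset. Third, since $\bar{\x}_t$ is the average of the $\x_t^j$ and $\bar{\x}_{t_0} = \x_{t_0}^i$, the averaged iterate also satisfies $\|\bar{\x}_t - \bar{\x}_{t_0}\| = \|\frac{1}{N}\sum_j (\x_t^j - \x_{t_0}^j)\| \le \frac{1}{N}\sum_j \|\x_t^j - \x_{t_0}^j\| \le I\gamma$. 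Finally, combining these by the triangle inequality,
\[
  \|\bar{\x}_t - \x_t^i\| \le \|\bar{\x}_t - \bar{\x}_{t_0}\| + \|\x_{t_0}^i - \x_t^i\| \le I\gamma + I\gamma = 2\gamma I,
\]
which holds deterministically (almost surely) because each step bound holds for every realization of the stochastic gradients. One should double-check that the case $t = t_0$ is covered trivially (both sides are zero) and that "$t$ is a multiple of $I$" in the algorithm is interpreted consistently with the definition of $t_0$; a minor bookkeeping point is whether the reset happens before or after the update at a communication step, but either convention yields at most $I$ increments, so the bound $2\gamma I$ is safe.

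There is essentially no hard part here: the lemma is a pure triangle-inequality/telescoping estimate, and the only thing to be careful about is that the per-step displacement bound $\min(\eta\|\nabla F_i\|,\gamma)\le\gamma$ is used for \emph{both} the clipped machines (where the step is exactly $\gamma \nabla F_i/\|\nabla F_i\|$, of norm $\gamma$) and the unclipped ones (where the step is $\eta\nabla F_i$, of norm $\eta\|\nabla F_i\| < \gamma$ by definition of the min). Stating this uniform bound cleanly at the outset makes the rest automatic. I would present the proof in three short displayed lines as above, emphasizing that no probabilistic argument and no smoothness assumption is actually needed for this particular lemma — it is purely a consequence of the clipping structure of the update and the periodic averaging.
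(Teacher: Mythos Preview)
Your proposal is correct and follows essentially the same approach as the paper: anchor at the last synchronization point $t_0$, telescope the local and averaged iterates, and use the uniform per-step bound $\gamma$. The paper works with squared norms (using $\|a-b\|^2\le 2\|a\|^2+2\|b\|^2$ and then taking a square root), whereas you apply the triangle inequality directly on norms, which is slightly cleaner but not a materially different argument.
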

%\vspace*{-0.1in}

Lemma \ref{descent_inequality} and Lemma \ref{L0_L1_diff} (in Appendix~\ref{appendix:relaxed}) are properties of $(L_0,L_1)$-smooth functions we need to use. To make sure they work, we need $2\gamma I \leq c/L_1$ for some $c>0$. This is proved in the proof of Theorem~\ref{thm_clipping} (in Appendix \ref{appendix:mainthm}).
%This inequality holds by the choice of parameters in Theorem \ref{thm_clipping} (details in Appendix \ref{appendix:mainthm}). %We denote $A=1+e^{c}-\frac{e^{c}-1}{c}$ and $B=\frac{e^{c}-1}{c}$. 

Let $J(t)$ be the index set of $i$ such that $\|\nabla F(\x_t^i,\xi_t^i)\| \geq \frac{\gamma}{\eta}$ at fixed iteration $t$, i.e., $J(t)=\{i\in[N] \;|\; \|\nabla F(\x_t^i;\xi_t^i)\| \geq {\gamma}/{\eta}\}$.
Lemma~\ref{lem:descent} characterizes how much progress we can get in one iteration of Algorithm~\ref{alg:main}, which is decomposed into contributions from every machine (note that $J(t)\cup \bar{J}(t)=\left\{1,\ldots,N\right\}$ for every $t$).
%We first prove a useful lemma relates quantity $\| \bar{\x}_t-\x_t^i\|$ (for those $i\in\Bar{J}(t)$) and parameter $I$. 

\begin{lem}
	\label{lem:descent}
	%Suppose $\|\nabla f(\bar{\x}_t)\|\leq \frac{\gamma}{\eta}$. 
	Let $J(t)$ be the set defined as above. If  $2\gamma I\leq c/L_1$ for some $c>0$, %, then $\gamma\leq2\gamma I\leq\frac{c}{L_1}$. %We denote $A=1+e^{c}-\frac{e^{c}-1}{c}$ and $B=\frac{e^{c}-1}{c}$. 
	%and define another quantity $\Tilde{L}=\Tilde{A}L_0+\Tilde{B}L_1\gamma/\eta$. We define another quantity $L=AL_0+BL_1\gamma/\eta$. 
	$AL_0\eta\leq 1/2$, and $\gamma/\eta=5 \sigma$, then we have 
	\small{\begin{align*}
			&\mathbb{E} [f(\bar{\x}_{t+1})-f(\bar{\x}_t)]\\
			\leq & \frac{1}{N}\E\sum_{i\in J(t)} \left[-\frac{2\gamma}{5}\|\nabla f(\bar{\x}_t)\| - \frac{3\gamma^2}{5\eta}+\frac{50 AL_0 \eta^2\sigma^2}{N} + \frac{7\gamma}{5}\|\nabla F(\x_t^i;\xi_t^i)-\nabla f(\bar{\x}_t)\| + AL_0\gamma^2+ \frac{BL_1\gamma^2\|\nabla f(\bar{\x}_t)\|}{2}\right] \\
			+& \E\frac{1}{N}\sum_{i\in\Bar{J}(t)}  \left[ -\frac{\eta c_{\min}}{2}\|\nabla f(\bar{\x}_t)\|^2 + 4\gamma^2I^2A^2L_0^2\eta+4\gamma^2I^2B^2L_1^2\eta\|\nabla f(\bar{\x}_t)\|^2 + \frac{50 AL_0\eta^2 \sigma^2}{N}+ \frac{BL_1\gamma^2\|\nabla f(\bar{\x}_t)\|}{2}\right],
	\end{align*}}

	where $A=1+e^{c}-\frac{e^{c}-1}{c}$ and $B=\frac{e^{c}-1}{c}$.%, $c_{\min}$ is a positive constant defined in Lemma~\ref{lem:matrixnorm}.
\end{lem}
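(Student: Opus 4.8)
\textbf{Proof proposal for Lemma~\ref{lem:descent}.}

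The plan is to perform a one-step descent analysis of $f(\bar\x_t)$, carefully tracking the update rule and splitting the contribution of each machine according to whether or not clipping occurred, i.e.\ according to the partition $[N]=J(t)\cup\bar J(t)$. First I would write down the exact form of the averaged update: since $\bar\x_{t+1}=\bar\x_t-\frac1N\sum_{i=1}^N\min(\eta,\gamma/\|\nabla F_i(\x_t^i;\xi_t^i)\|)\nabla F_i(\x_t^i;\xi_t^i)$ (the averaging step does not change $\bar\x_t$, only the individual iterates), I will denote the per-machine effective step as $\eta_t^i\nabla F_i(\x_t^i;\xi_t^i)$ where $\eta_t^i=\min(\eta,\gamma/\|\nabla F_i\|)$. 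Because $2\gamma I\le c/L_1$ and Lemma~\ref{lemma:6} gives $\|\bar\x_t-\x_t^i\|\le 2\gamma I$, the displacement of the averaged iterate is small relative to $1/L_1$, so I can invoke the descent inequality for $(L_0,L_1)$-smooth functions (Lemma~\ref{descent_inequality}) along the segment from $\bar\x_t$ to $\bar\x_{t+1}$. This yields a bound of the shape $f(\bar\x_{t+1})-f(\bar\x_t)\le \langle \nabla f(\bar\x_t),\bar\x_{t+1}-\bar\x_t\rangle + \tfrac{A L_0 + B L_1\|\nabla f(\bar\x_t)\|}{2}\|\bar\x_{t+1}-\bar\x_t\|^2$ with the stated $A,B$, provided $AL_0\eta\le 1/2$ controls the total step.

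Next I would handle the inner product term $\langle\nabla f(\bar\x_t),\bar\x_{t+1}-\bar\x_t\rangle = -\frac1N\sum_i \eta_t^i\langle\nabla f(\bar\x_t),\nabla F_i(\x_t^i;\xi_t^i)\rangle$, splitting over $i\in J(t)$ and $i\in\bar J(t)$. For $i\in J(t)$ (clipping), $\eta_t^i=\gamma/\|\nabla F_i\|$, so the term is $-\frac{\gamma}{\|\nabla F_i\|}\langle\nabla f(\bar\x_t),\nabla F_i\rangle$; I would write $\nabla F_i = \nabla f(\bar\x_t) + (\nabla F_i-\nabla f(\bar\x_t))$, use Cauchy--Schwarz on the cross term, and the triangle inequality $\|\nabla F_i\|\ge \|\nabla f(\bar\x_t)\| - \|\nabla F_i-\nabla f(\bar\x_t)\|$ together with the clipping threshold $\|\nabla F_i\|\ge\gamma/\eta=5\sigma$ to extract the $-\frac{2\gamma}{5}\|\nabla f(\bar\x_t)\|$ leading term and a $+\frac{7\gamma}{5}\|\nabla F_i(\x_t^i;\xi_t^i)-\nabla f(\bar\x_t)\|$ error term (the constants $2/5$, $7/5$ emerge from the $5\sigma$ threshold bookkeeping). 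For $i\in\bar J(t)$ (no clipping), $\eta_t^i=\eta$ and the step is an honest stochastic gradient step; here is where Lemma~\ref{lem:new3} enters: conditioning on $\x_t^i$, I take expectation over $\xi_t^i$ of $\nabla F_i(\x_t^i;\xi_t^i)\ind(\|\nabla F_i\|\le\gamma/\eta)$ and use the lemma to replace it by $\Pr(\cdot)\Lambda\nabla f_i(\x_t^i)$, which after relating $\nabla f_i(\x_t^i)$ to $\nabla f(\bar\x_t)$ (using $\mathcal D_1=\cdots=\mathcal D_N$ so $\nabla f_i=\nabla f$, plus the $(L_0,L_1)$-smooth gradient-difference bound of Lemma~\ref{L0_L1_diff} controlled by $\|\x_t^i-\bar\x_t\|\le 2\gamma I$) gives the $-\frac{\eta c_{\min}}{2}\|\nabla f(\bar\x_t)\|^2$ negative term plus the $4\gamma^2 I^2 A^2 L_0^2\eta + 4\gamma^2 I^2 B^2 L_1^2\eta\|\nabla f(\bar\x_t)\|^2$ drift-error terms coming from squaring the $2\gamma I$ deviation against the $(A L_0 + B L_1\|\nabla f\|)$ Hessian bound.

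Finally I would bound the quadratic term $\frac{AL_0+BL_1\|\nabla f(\bar\x_t)\|}{2}\|\bar\x_{t+1}-\bar\x_t\|^2$. Using $\|\bar\x_{t+1}-\bar\x_t\|\le \frac1N\sum_i\eta_t^i\|\nabla F_i\|\le\gamma$ (each clipped/unclipped step has norm at most $\gamma$, since in the unclipped case $\eta\|\nabla F_i\|\le\gamma$), and splitting the square via $\|\bar\x_{t+1}-\bar\x_t\|^2\le \frac1N\sum_i (\eta_t^i)^2\|\nabla F_i\|^2$ (Jensen), distribute this over $J(t)$ and $\bar J(t)$: on $J(t)$ it contributes $AL_0\gamma^2+\frac{BL_1\gamma^2\|\nabla f(\bar\x_t)\|}{2}$, and likewise on $\bar J(t)$; the $\frac{50 A L_0\eta^2\sigma^2}{N}$ terms in each bracket arise from the variance contribution of the martingale/noise part of the step, i.e.\ from $\E\|\bar\x_{t+1}-\bar\x_t - \E_t[\bar\x_{t+1}-\bar\x_t]\|^2$, which is $O(\eta^2\sigma^2/N)$ because the $N$ machines' noises are independent and the step on each is $O(\eta\sigma)$ after clipping, with the constant absorbing a factor $\sigma\ge 1$ and the ratio $\gamma/\eta=5\sigma$ (note $\gamma^2=25\sigma^2\eta^2$, and $2\cdot 25 = 50$). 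Collecting the $J(t)$-terms and the $\bar J(t)$-terms into their respective brackets yields the claimed inequality. I expect the main obstacle to be the $\bar J(t)$ analysis: getting an unbiased handle on the truncated (unclipped) stochastic gradient requires Lemma~\ref{lem:new3} precisely because the event $\{\|\nabla F_i\|\le\gamma/\eta\}$ is correlated with $\nabla F_i$ itself, and then one must still absorb the consistency error between $\nabla f_i(\x_t^i)$ and $\nabla f(\bar\x_t)$ — which is only $O(\gamma I)$-small, not zero — into the stated $I^2\gamma^2$ error terms without letting the $\|\nabla f(\bar\x_t)\|^2$ coefficient swamp the negative $-\frac{\eta c_{\min}}{2}\|\nabla f(\bar\x_t)\|^2$ drift term; this is what forces the constraint $2\gamma I\le c/L_1$ and ultimately the hyperparameter choices in Theorem~\ref{thm_clipping}.
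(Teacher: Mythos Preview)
Your proposal is essentially correct and follows the same overall route as the paper: apply the $(L_0,L_1)$ descent inequality to $\bar\x_t\to\bar\x_{t+1}$, split the first-order term over $J(t)$ and $\bar J(t)$, invoke Lemma~\ref{lem:new3} on the unclipped part, and control the second-order term via a variance decomposition across the $N$ independent machines. Two points of detail are worth flagging.

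For the $J(t)$ case, your proposed expansion $\nabla F_i=\nabla f(\bar\x_t)+(\nabla F_i-\nabla f(\bar\x_t))$ followed by Cauchy--Schwarz on the cross term does not by itself produce the $-\tfrac{3\gamma^2}{5\eta}$ contribution, which you do not mention but which the lemma statement contains and which later drives the $P_1$ bound in the theorem. The paper instead uses the vector inequality $-\langle \u,\v\rangle/\|\v\|\le -\mu\|\u\|-(1-\mu)\|\v\|+(1+\mu)\|\v-\u\|$ (Lemma~\ref{first_lemma}) with $\mu=\tfrac25$; this is still Cauchy--Schwarz plus triangle, but applied after writing $\langle \u,\v\rangle=\|\v\|^2+\langle \u-\v,\v\rangle$ rather than $\|\u\|^2+\langle \u,\v-\u\rangle$, so that a full $-\gamma\|\v\|$ appears and can be split using both $\|\v\|\ge\|\u\|-\|\v-\u\|$ and $\|\v\|\ge\gamma/\eta$.

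For the $\bar J(t)$ inner product and the quadratic term, the paper's bookkeeping is a bit more coupled than you sketch. After Lemma~\ref{lem:new3}, it passes to a $\Lambda$-weighted polarization identity, which produces a spare negative term $-\tfrac{\eta}{2}\big\|\tfrac1N\sum_i p_t^i\nabla f_i(\x_t^i)\big\|_\Lambda^2$; meanwhile the variance decomposition of $\|\bar\x_{t+1}-\bar\x_t\|^2$ yields, besides the $\tfrac{100\eta^2\sigma^2}{N}$ noise piece, a squared-mean piece $2\eta^2\big\|\tfrac1N\sum_i p_t^i\Lambda\nabla f_i(\x_t^i)\big\|^2$. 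After multiplying by $AL_0/2$, the latter is absorbed by the former precisely under the hypothesis $AL_0\eta\le\tfrac12$. Your description treats Jensen and the variance decomposition as alternative routes to the same terms; in the actual proof they interact in this cancellation, and a pure Jensen bound on $\|\bar\x_{t+1}-\bar\x_t\|^2$ would not give the $1/N$ factor in the $\tfrac{50AL_0\eta^2\sigma^2}{N}$ term.
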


Lemma~\ref{sigma_term} quantifies an upper bound of the averaged $\ell_2$ error between the local gradient evaluated at the local weight and the gradient evaluated at the averaged weight. 
\begin{lem}
	\label{sigma_term}
	Suppose Assumption~\ref{assumption1} holds. When $2\gamma I\leq c/L_1$ for some $c>0$, the following inequality holds for every $i$ almost surely with $A=1+e^{c}-\frac{e^{c}-1}{c}$ and $B=\frac{e^{c}-1}{c}$:
	\small{\begin{equation*}
			\left\|\nabla F(\x_t^i;\xi_t^i) - \nabla f(\bar{\x}_t)\right\| 
			\leq \sigma  + 2\gamma I(AL_0+BL_1\|\nabla f(\bar{\x}_t)\|). 
	\end{equation*}}
\end{lem}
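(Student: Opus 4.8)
The plan is to decompose the quantity $\|\nabla F_i(\x_t^i;\xi_t^i) - \nabla f(\bar{\x}_t)\|$ by inserting intermediate terms and then bound each piece using the structure of the algorithm and the properties of $(L_0,L_1)$-smooth functions. First I would write
\begin{align*}
\left\|\nabla F_i(\x_t^i;\xi_t^i) - \nabla f(\bar{\x}_t)\right\|
&\leq \left\|\nabla F_i(\x_t^i;\xi_t^i) - \nabla f_i(\x_t^i)\right\|
+ \left\|\nabla f_i(\x_t^i) - \nabla f_i(\bar{\x}_t)\right\|
+ \left\|\nabla f_i(\bar{\x}_t) - \nabla f(\bar{\x}_t)\right\|.
\end{align*}
The first term is bounded by $\sigma$ almost surely by Assumption~\ref{assumption1}(iii). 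The third term vanishes: since $\mathcal{D}_1 = \cdots = \mathcal{D}_N$ by Assumption~\ref{assumption1}(i), all $f_i$ coincide, hence $f_i = f$ and $\nabla f_i(\bar{\x}_t) = \nabla f(\bar{\x}_t)$. So everything reduces to bounding the middle term, the gradient drift of $f_i$ between the local weight $\x_t^i$ and the averaged weight $\bar{\x}_t$.

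For the middle term I would invoke the $(L_0,L_1)$-smoothness property stated in Lemma~\ref{descent_inequality} / Lemma~\ref{L0_L1_diff} (Appendix~\ref{appendix:relaxed}), which — under the condition that the two points are within distance $c/L_1$ of each other — gives a bound of the form $\|\nabla f_i(\x_t^i) - \nabla f_i(\bar{\x}_t)\| \leq (A L_0 + B L_1 \|\nabla f_i(\bar{\x}_t)\|)\,\|\x_t^i - \bar{\x}_t\|$ with $A = 1 + e^c - \frac{e^c-1}{c}$ and $B = \frac{e^c-1}{c}$, which matches the constants appearing in the statement. The hypothesis $2\gamma I \leq c/L_1$ combined with Lemma~\ref{lemma:6}, which gives $\|\bar{\x}_t - \x_t^i\| \leq 2\gamma I$ almost surely, ensures the points are close enough to apply this relaxed-smoothness gradient-difference inequality, and simultaneously supplies the factor $\|\x_t^i - \bar{\x}_t\| \leq 2\gamma I$ that produces the $2\gamma I(AL_0 + BL_1\|\nabla f(\bar{\x}_t)\|)$ term. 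Using $f_i = f$ once more to rewrite $\|\nabla f_i(\bar{\x}_t)\|$ as $\|\nabla f(\bar{\x}_t)\|$ finishes the bound.

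I expect the main (minor) obstacle to be purely bookkeeping: verifying that the precise form of the relaxed-smoothness gradient-difference lemma in the appendix indeed yields exactly the coefficient $A L_0 + B L_1\|\nabla f(\bar{\x}_t)\|$ with these specific $A, B$, rather than some variant with different constants — this requires checking that the distance bound $2\gamma I \leq c/L_1$ is the exact hypothesis that lemma needs. There is no delicate probabilistic argument here since the bound is claimed almost surely: the randomness in $\xi_t^i$ enters only through the $\sigma$ term, which is an almost-sure bound by assumption, and the weight iterates $\x_t^i$ and $\bar{\x}_t$ are treated as fixed (arbitrary realizations) throughout. So the whole argument is a deterministic chain of triangle inequality, the i.i.d.-distribution simplification $f_i \equiv f$, Lemma~\ref{lemma:6}, and the relaxed-smoothness estimate.
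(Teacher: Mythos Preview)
Your proposal is correct and matches the paper's proof essentially step for step: triangle inequality, the $\sigma$ bound from Assumption~\ref{assumption1}(iii), the identification $f_i\equiv f$ from the i.i.d.\ assumption, and the relaxed-smoothness gradient-difference Lemma~\ref{L0_L1_diff} combined with Lemma~\ref{lemma:6}. The only cosmetic difference is the order of the intermediate points in the decomposition (the paper inserts $\nabla f(\x_t^i)$ while you insert $\nabla f_i(\bar{\x}_t)$), which is immaterial since $f_i=f$.
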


\paragraph{Putting all together}
Suppose our algorithm runs $T$ iterations. Taking summation on both sides of Lemma~\ref{lem:descent} over all $t=0,\ldots, T-1$, we are able to get an upper bound of $\sum_{t=0}^{T-1}\E\left[f(\bar{\x}_{t+1})-f(\bar{\x}_t)\right]=\E\left[f(\bar{\x}_{T})-f(\bar{\x}_0)\right]$. Note that  $\E\left[f(\bar{\x}_{T})-f(\bar{\x}_0)\right]\geq -\Delta$ due to Assumption~\ref{assumption1}, so we are able to get an upper bound of gradient norm. For details, please refer to the proof of Theorem~\ref{thm_clipping} in Appendix~\ref{appendix:mainthm}.
%\vspace*{-0.2in}
%\subsection{Other Discussions on Heterogeneous Data}
%\vspace*{-0.15in}
%\vspace*{-0.15in}

\section{Experiments}
%\vspace*{-0.1in}

\label{sec:exp}
We conduct extensive experiments to validate the merits of our algorithm on various tasks (e.g., image classification, language modeling). In the main text below, we consider the homogeneous data setting where each machine has the same data distribution and every machine participate the communication at each round. In the Appendix~\ref{sec:more_exp}, we also test our algorithm in the general federated learning setting (e.g., partial participation of machines), ablation study on small/large batch-sizes, and other experiments, etc. 

%i.i.d. setting where each machine has the same data distribution to verify our theoretical results. Nevertheless, in Appendix~\ref{sec:more_exp}, we also test our algorithm in the general federated learning setting (e.g., partial participation of machines), ablation study on small/large batch-sizes, and other experiments, etc.

%and find the distributed clipping algorithm indeed consistently exhibits substantial speedup compared with the baseline, which is the naive parallel version of the algorithm in~\citet{zhang2020improved}. We want to re-emphasize that the major difference is that the baseline algorithm needs to average the model weights and local gradients at every iteration while ours only requires averaging the model weights after $I$ iterations and does not need to average the gradients at all. This immediately suggests that our algorithm will gain substantial speedup in terms of the wall clock time, which is also supported by our empirical experiments in this section. 

We conduct each experiment in two nodes with 4 Nvidia-V100 GPUs on each node. In our experiments, one ``machine'' corresponds to one GPU, and we use the word ``GPU'' and ``machine'' in this section interchangeably. We compared our algorithm with the baseline across three deep learning benchmarks: CIFAR-10 image classification with ResNet, Penn Treebank language modeling with LSTM, and Wikitext-2 language modeling with LSTM, and ImageNet classification with ResNet. All algorithms and the training framework are implemented in Pytorch 1.4. Due to limited computational resources, for our algorithms, we choose same hyperparameters like the clipping thresholds according to the best-tuned baselines unless otherwise specified.

We compare our algorithm of different $I$ with the baseline, which is the naive parallel version of the algorithm in~\cite{zhang2020improved}. We want to re-emphasize that the difference is that the baseline algorithm needs to average the model weights and local gradients at every iteration while ours only requires averaging the model weights after every $I$ iterations. 
%and does not need to average the gradients at all. 
We find that our algorithm with a range of values of $I$ can match the results of the baseline in terms of epochs on different models and data. This immediately suggests that our algorithm will gain substantial speedup in terms of the wall clock time, which is also supported by our experiments. 
%For more results \textcolor{red}{including small and large batch-sizes, heterogeneous data distribution, and partial participation of machines, we kindly refer readers to the Appendix~\ref{sec:more_exp}}. \par

\begin{figure*}[t]
	\centering
	\begin{subfigure}[b]{0.48\linewidth}
		\includegraphics[width=\linewidth]{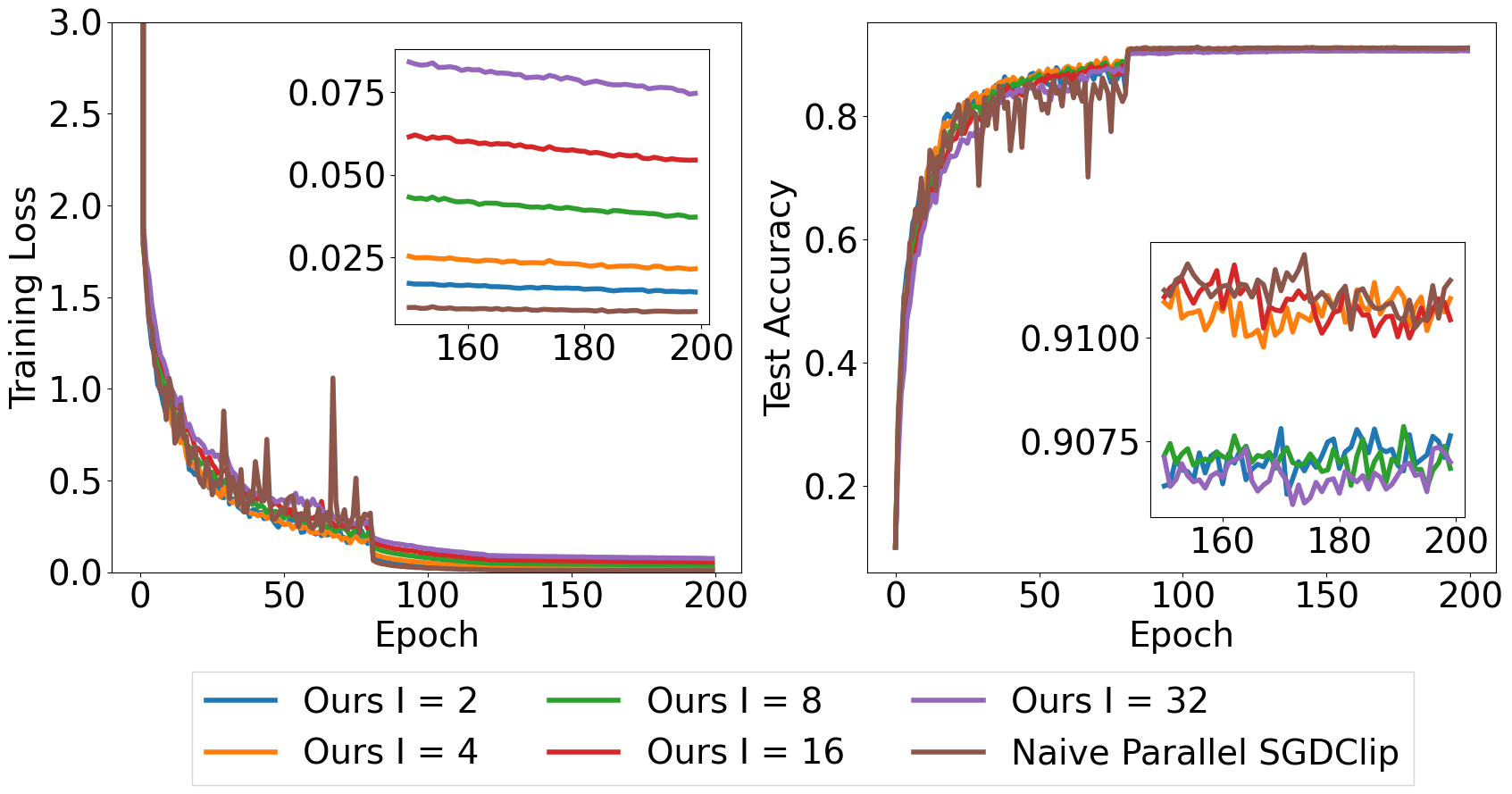}
		\caption{Over epoch}
		\label{fig:cifar10epoch}
	\end{subfigure}
	\hfill
	\begin{subfigure}[b]{0.48\linewidth}
		\includegraphics[width=\linewidth]{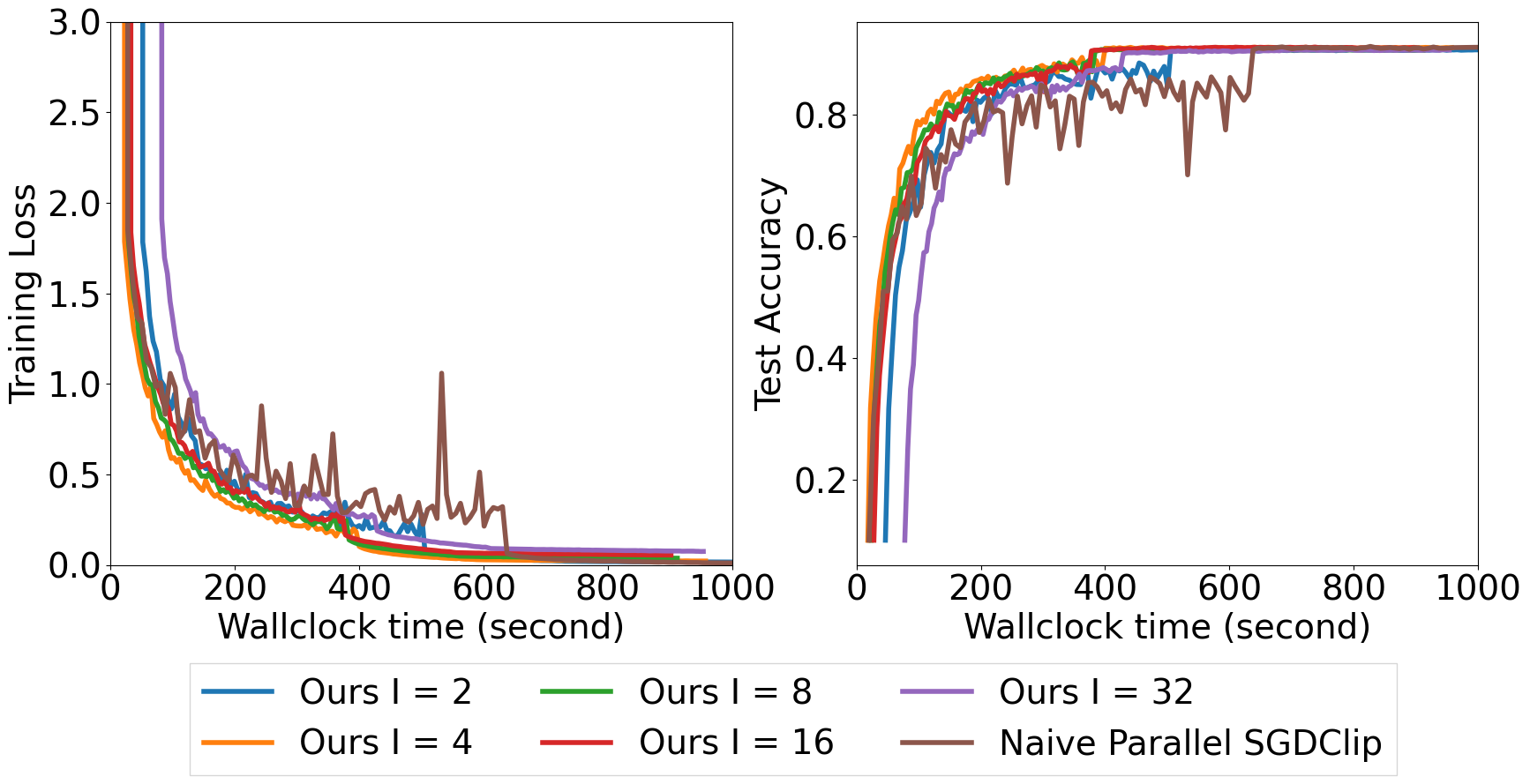}
		\caption{Over wall clock time}
		\label{fig:cifar10wallclock}
	\end{subfigure}
	\caption{Algorithm~\ref{alg:main} with different $I$: Training loss and test accuracy v.s.~(Left) epoch and (right) wall clock time on training a 56 layer Resnet to do image classification on CIFAR10.}
	\label{fig:cifar10comp}
	%\vspace{-1em}
\end{figure*}
\begin{figure*}[t]
	\centering
	\begin{subfigure}[b]{0.48\linewidth}
		\includegraphics[width=\linewidth]{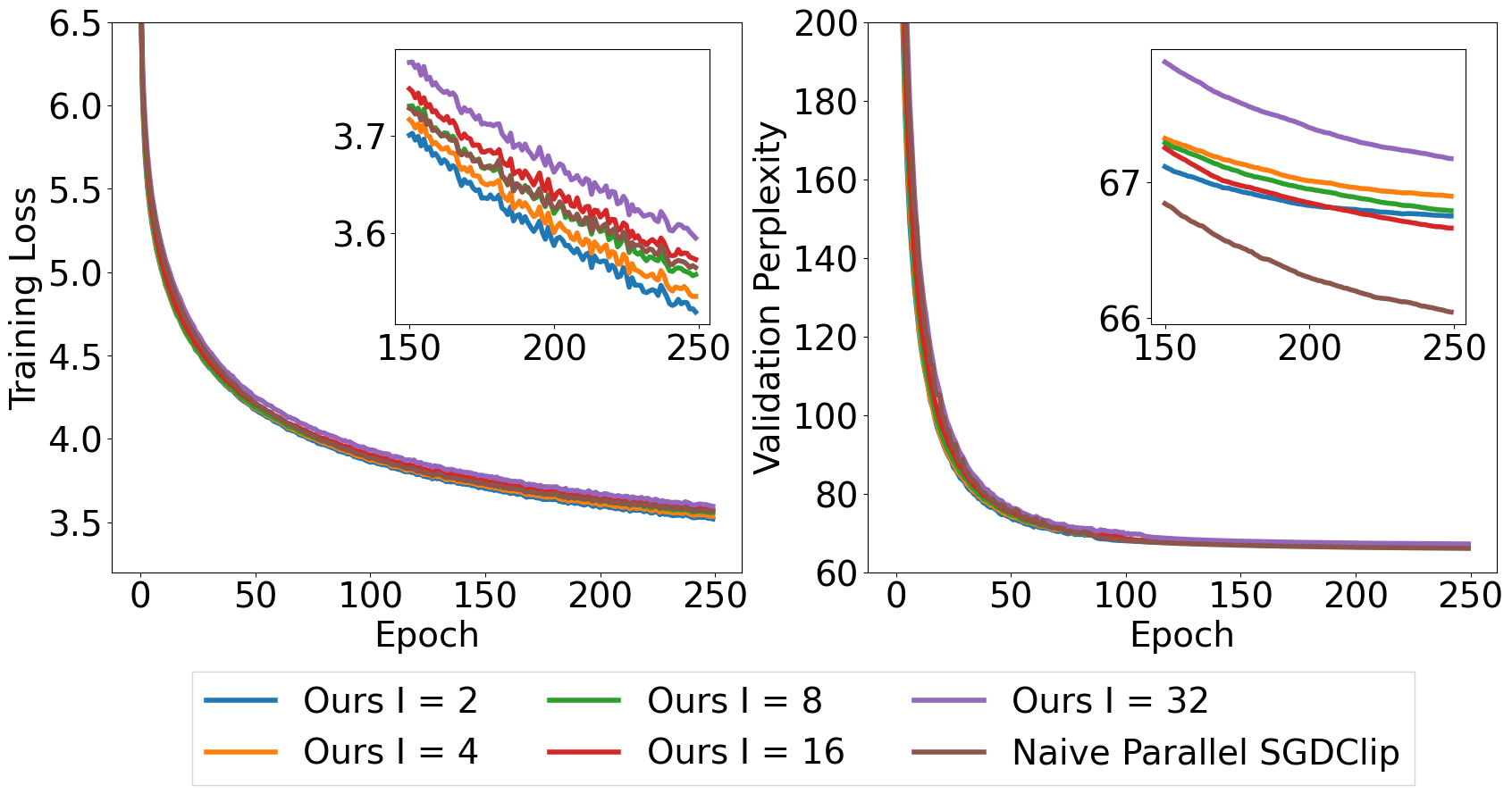}
		\caption{Over epoch}
		\label{fig:pennepoch}
	\end{subfigure}
	\hfill
	\begin{subfigure}[b]{0.48\linewidth}
		\includegraphics[width=\linewidth]{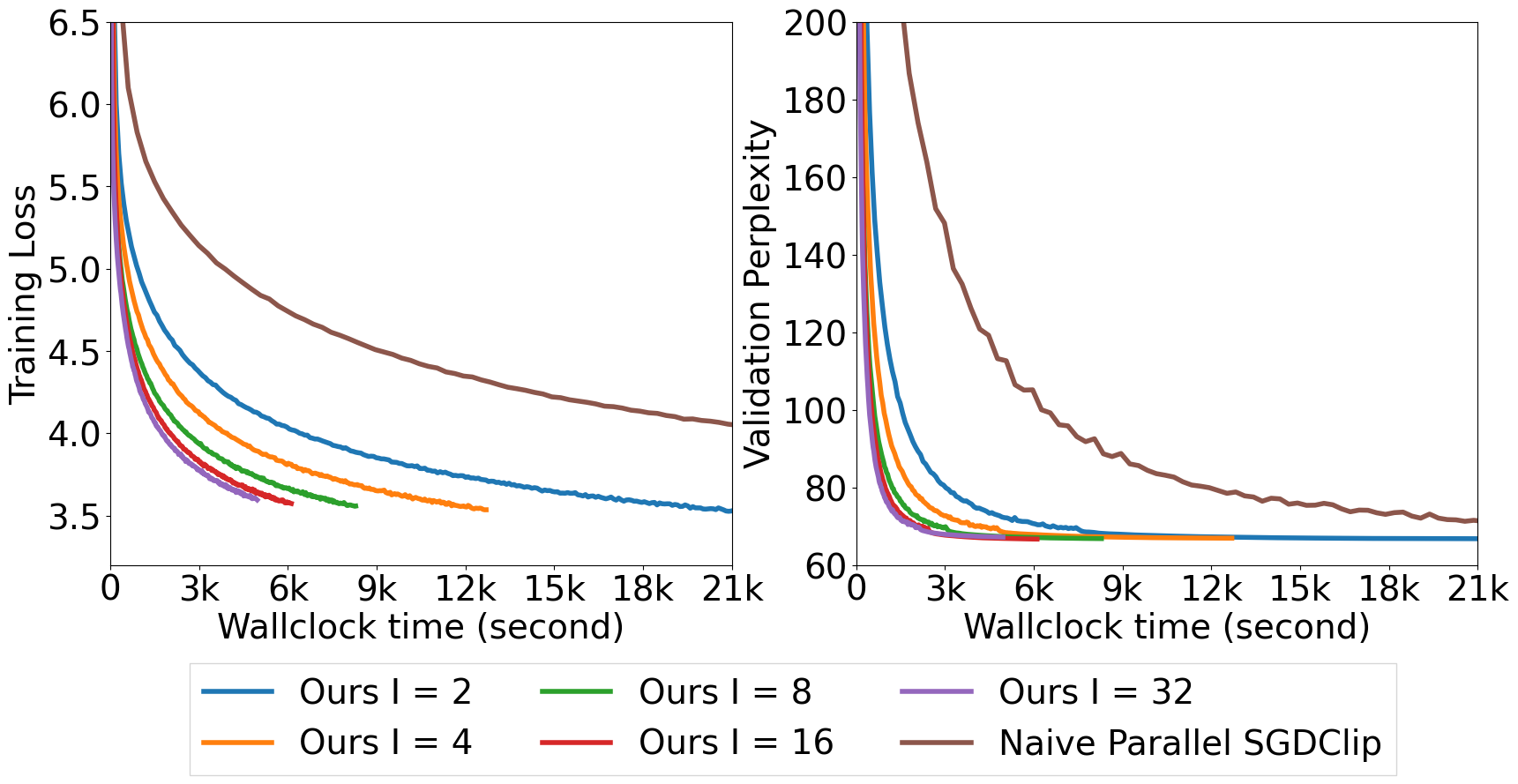}
		\caption{Over Wall clock time}
		\label{fig:pennwallclock}
	\end{subfigure}
	\caption{Algorithm~\ref{alg:main} with different $I$: Training loss and validation perplexity v.s.~(Left) epoch and (right) wall clock time on training an AWD-LSTM to do language modeling on Penn Treebank.}
	\label{fig:penncomp}
	\vspace{-2em}
\end{figure*}
\begin{figure*}[t]
	\centering
	\begin{subfigure}[b]{0.48\linewidth}
		\includegraphics[width=\linewidth]{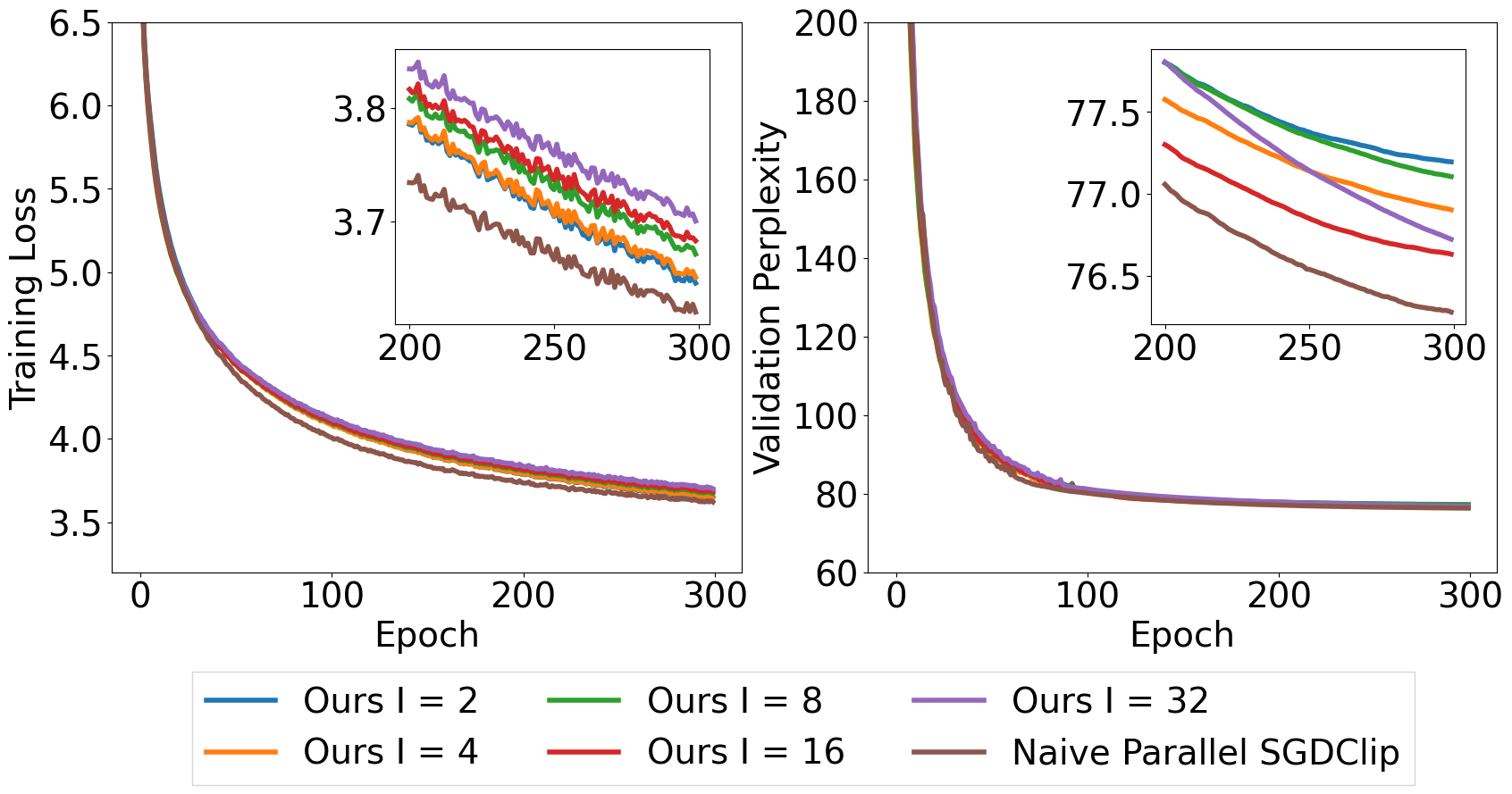}
		\caption{Over epoch}
		\label{fig:wikiepoch}
	\end{subfigure}
	\hfill
	\begin{subfigure}[b]{0.48\linewidth}
		\includegraphics[width=\linewidth]{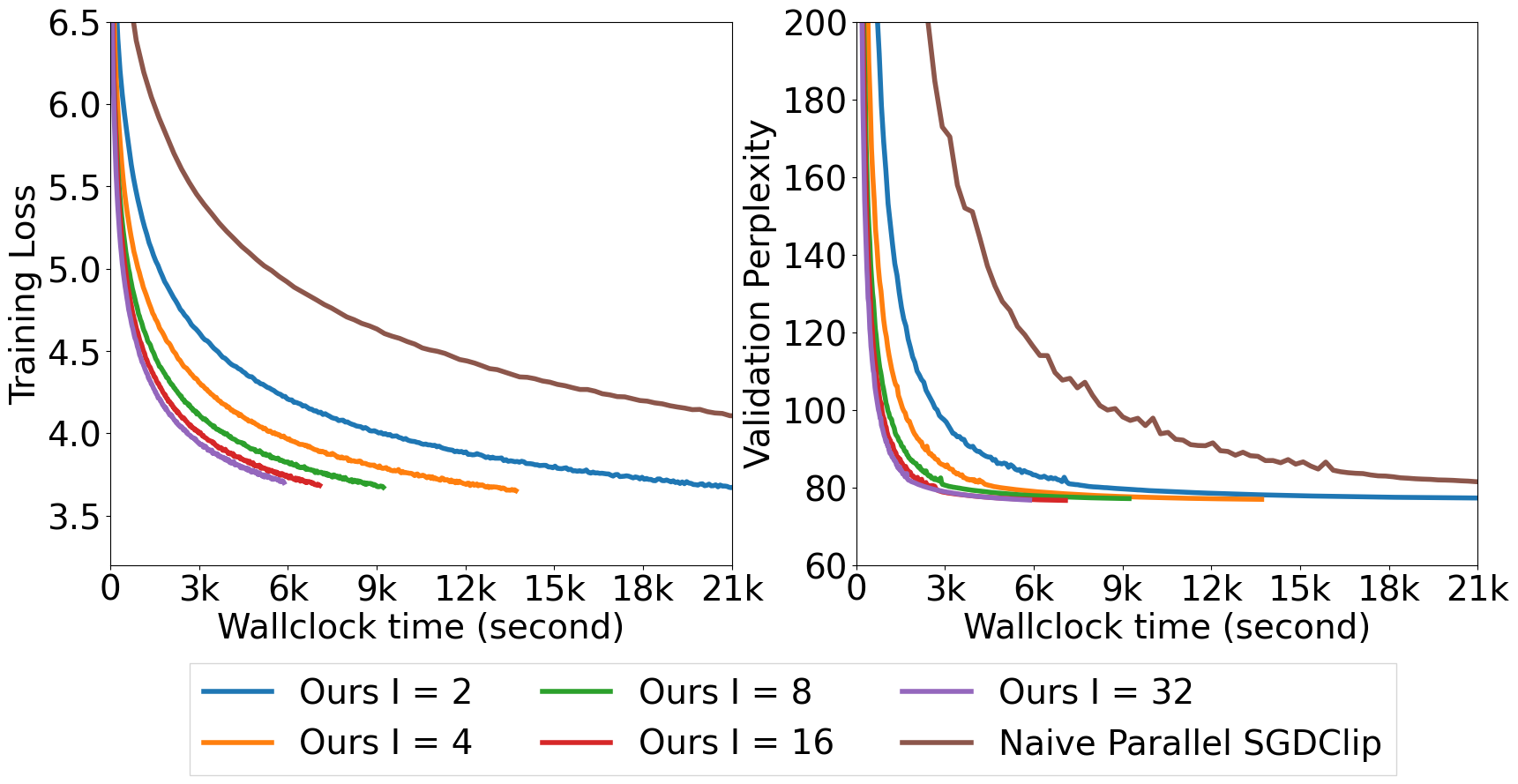}
		\caption{Over Wall clock time}
		\label{fig:wikiwallclock}
	\end{subfigure}
	\caption{Algorithm~\ref{alg:main} with different $I$: Training loss and validation perplexity v.s.~(Left) epoch and (right) wall clock time on training an AWD-LSTM to do language modeling on Wikitext-2.}
	\vspace*{-0.1in}
	
	\label{fig:wikicomp}
\end{figure*}
%\vspace*{-0.1in}
%\vspace*{-0.15in}
\subsection{Effects of Skipping Communication}
\label{ssec:skip_comm}
%\vspace*{-0.1in}
We focus on one feature of our algorithm: skipping communication rounds. Theorem~\ref{thm_clipping} says that our algorithm enjoys reduced communication complexity since every node only communicates with other nodes periodically with node synchronization interval length $I$. %We study how $I$ affects the performance of our algorithm.   
To study how communication skipping
affects the convergence of Algorithm~\ref{alg:main}, we run it with $I\in\{2, 4, 8, 16, 32\}$.

\textbf{CIFAR-10 classification with ResNet-56.}
We train the standard 56-layer ResNet \citep{he2016deep} architecture on CIFAR-10. We use SGD with clipping as the baseline algorithm with a stagewise decaying learning rate schedule, following the widely adopted fashion on training the ResNet architecture. Specifically, we use the initial learning rate $\eta=0.3$, the clipping threshold $\gamma=1.0$, and decrease the learning rate by a factor of $10$ at epoch $80$ and $120$. The local batch size at each GPU is $64$. These parameter settings follow that of~\cite{yu_linear}.

The results are illustrated in Figure~\ref{fig:cifar10comp}. 
Figure~\ref{fig:cifar10epoch} shows the convergence
of training loss and test accuracy v.s.~the number
of epochs that are jointly accessed by all GPUs. This means that, if the x-axis value is 8, then each GPU runs 1 epoch of training data. The same convention applied to all other figures for multiple GPU training in this paper.
Figure~\ref{fig:cifar10wallclock} verifies our algorithm's advantage of skipping communication by plotting the convergence of training loss and test
accuracy v.s.~the wall clock time.
Overall, we can clearly see that our algorithm matches the baseline epoch-wise but greatly speeds up wall-clock-wise.

\textbf{Language modeling with LSTM on Penn Treebank.} We adopt the 3-layer AWD-LSTM \citep{merity2018regularizing} to do language modeling on Penn Treebank (PTB) dataset \citep{marcus1993building}(word level). We use SGD with clipping as the baseline algorithm with the clipping threshold $\gamma=7.5$. The local batch size at each GPU is 3. These parameter settings follow that of~\cite{zhang2020improved}. We fine-tuned the initial learning rate $\eta$ for all algorithms (including the baseline) by choosing the one giving the smallest final training loss in the range $\{0.1, 0.5, 1, 5, 10, 20, 30, 40, 50, 100\}$. 

We report the results in Figure~\ref{fig:penncomp}. It can be seen that we can match the baseline in both training loss and validation perplexity epoch-wise while gaining substantial speedup (4x faster for $I=4$) wall-clock-wise.

\textbf{Language modeling with LSTM on Wikitext-2.} 
We adopt the 3-layer AWD-LSTM \citep{merity2018regularizing} to do language modeling on Wikitext-2 dataset \citep{marcus1993building}(word level). We use SGD with clipping as the baseline algorithm with the clipping threshold $\gamma=7.5$. The local batch size at each GPU is 10. These parameter settings follow that of~\cite{merity2018regularizing}. We fine-tuned the initial learning rate $\eta$ for all algorithms (including the baseline) by choosing the one giving the smallest final training loss in the range $\{0.1, 0.5, 1, 5, 10, 20, 30, 40, 50, 100\}$. 

We report the results in Figure~\ref{fig:wikicomp}. We can match the baseline in both training loss and validation perplexity epoch-wise, but we again obtain large speedup (4x faster for $I=4$) wall-clock-wise. This, together with the above two experiments, clearly show our algorithm's effectiveness in speeding up the training in distributed settings. Another observation is that Algorithm~\ref{alg:main} can allow relatively large $I$ without hurting the convergence behavior.

\paragraph{ImageNet Classification with ResNet-50} We compared our algorithm with several baselines in training a ResNet-50 on ImageNet. We compared with two strong baselines: one is the Naive Parallel SGDClip, another is a well-accepted baseline for ImageNet by~\cite{goyal2017accurate}. We run the experiments on 8 GPUs. We follow the settings of~\cite{goyal2017accurate} to setup hyperparameter for baselines. Specifically, for every method, the initial learning rate is $0.0125$, and we use the warmup with 5 epochs, batch size 32, momentum parameter $0.9$, the weight decay $5\times 10^{-4}$. The learning rate multiplying factor is $1$ for the epoch $5\sim 30$, and $0.1$ for epoch $30\sim 60$, $0.01$ for epochs $60\sim80$, and $0.001$ for epoch $80\sim 90$. The clipping threshold for Naive Parallel SGDClip and our method CELGC are both set to be $1$. We consider our algorithm with $I=4$, i.e., our algorithm CELGC performs weight averaging after $4$ steps of local stochastic gradient descent with gradient clipping on each GPU.

The results are shown in Figure~\ref{fig:imagenet}. We report the performance of these methods from several different perspectives (training accuracy/validation accuracy versus epoch, and training accuracy/validation accuracy versus wallclock time). We can see that the training accuracy of our algorithm CELGC with $I=4$ can match both baselines in terms of epoch (Figure \ref{fig:imagenet_epoch}), but it is much better in terms of running time (Figure \ref{fig:imagenet_wallclock}).

% \textbf{Parallel Speedup}
% Figure~\ref{fig:parallel_cifar10} shows the training loss and test accuracy v.s.~the number of iterations each GPU runs.
% We can see that, in our setting where 8 GPUs are used, each GPU only needs $1/8$ number of iterations to obtain competitive performance compared with the single GPU training setting. (\emph{Note that the batch size in each GPU is all 64}). This clearly shows the great benefits of distributed training that with multiple machines used for training, each machine only needs to run a fraction of iterations while still being able to reach matching results.
%\vspace*{-0.1in}
%\vspace*{-0.15in}
\subsection{Verifying Parallel Speedup}
\label{ssec:para_cifar}
%\vspace*{-0.1in}

Figure~\ref{fig:parallel_cifar10} show the training loss and test accuracy v.s.~the number of iterations. In the distributed setting, one iteration means running one step of Algorithm~\ref{alg:main} on all machines; while in the single machine setting, one iteration means running one step of SGD with clipping. In our experiment, we use minibatch size $64$ on every GPU in the distributed setting to run Algorithm~\ref{alg:main}, while we also use $64$ minibatch size on the single GPU to run SGD with clipping. In the left two panels of Figure~\ref{fig:parallel_cifar10}, we can clearly find that even with $I>1$, our algorithm still enjoys parallel speedup, since our algorithm requires less number of iterations to converge to the same targets (e.g., training loss, test accuracy). This observation is consistent with our iteration complexity results in Theorem~\ref{thm_clipping}.

\begin{figure}[t]
\begin{minipage}{.48\textwidth}
\centering
\includegraphics[width=\linewidth]{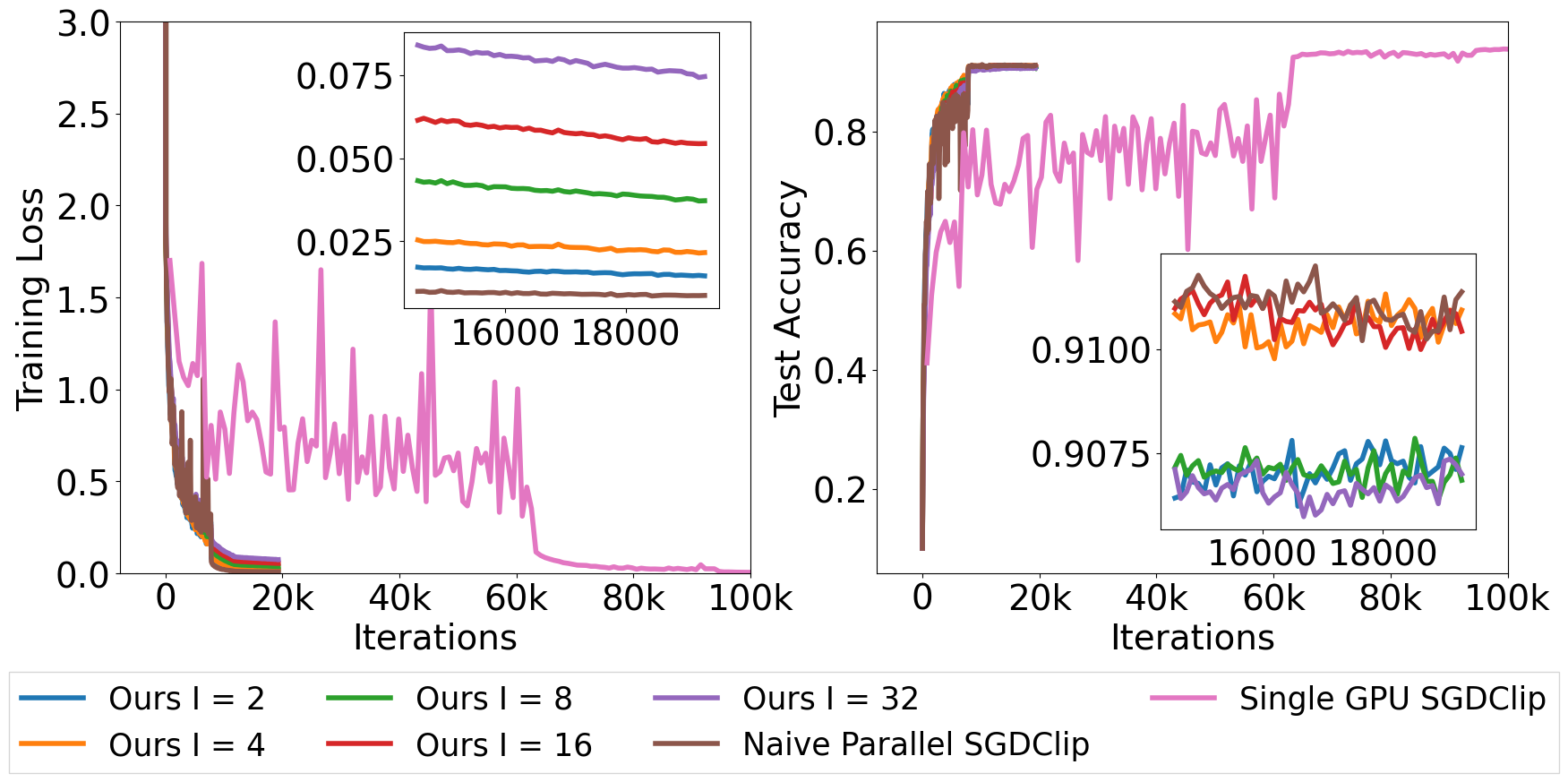}
%\vspace{-0.25in}
\captionof{figure}{Performance v.s.~\# of iterations each GPU runs on training ResNet-56 on CIFAR-10 showing the parallel speedup.}
\label{fig:parallel_cifar10}
\end{minipage}
\hspace{.02\textwidth}
\begin{minipage}{.48\textwidth}
\centering
\begin{minipage}{0.48\linewidth}
\includegraphics[width=\linewidth]{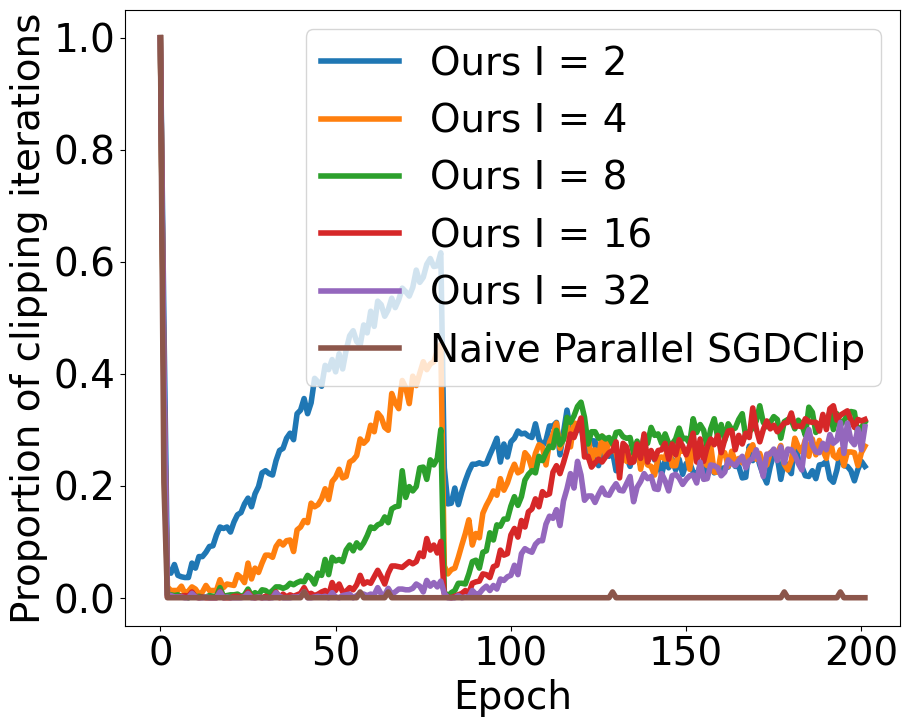}
{\centerline{(a) CIFAR-10}}
\end{minipage}
\hfill
\begin{minipage}{0.48\linewidth}
\includegraphics[width=\linewidth]{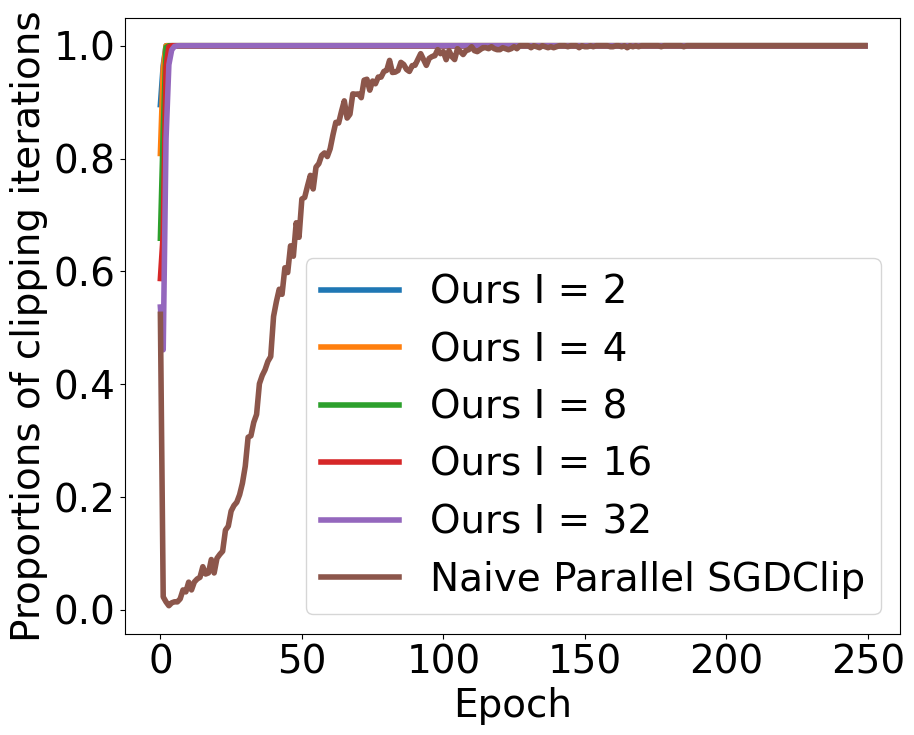}
{\centerline{(b) Penn Treebank}}
\end{minipage}
 \captionof{figure}{Proportions of iterations in each epoch in which clipping is triggered v.s.~epochs showing clipping is very frequent.}
\label{fig:clipping_frequency}
\end{minipage}
%\vspace{-1em}
\end{figure}

%\vspace*{-0.15in}
\subsection{Clipping Operation Happens Frequently} 
%\vspace*{-0.15in}
%\vspace*{-0.1in}

Figure~\ref{fig:clipping_frequency} reports the proportion of iterations in each epoch that clipping is triggered. We observe that for our algorithm, clipping happens more frequently than the baseline, especially for NLP tasks. We conjecture that this is because we only used local gradients in each GPU to do the clipping without averaging them across all machines as the baseline did. This leads to more stochasticity of the norm of the gradient in our algorithm than the baseline, and thus causes more clippings to happen. This observation highlights the importance of studying clipping algorithms in the distributed setting. Another interesting observation is that clipping happens much more frequently when training language models than image classification models. Hence this algorithm is presumably more effective in training deep models in NLP tasks.

\begin{figure}[t]
		\centering
		\begin{subfigure}[b]{0.48\linewidth}
			\includegraphics[width=\linewidth]{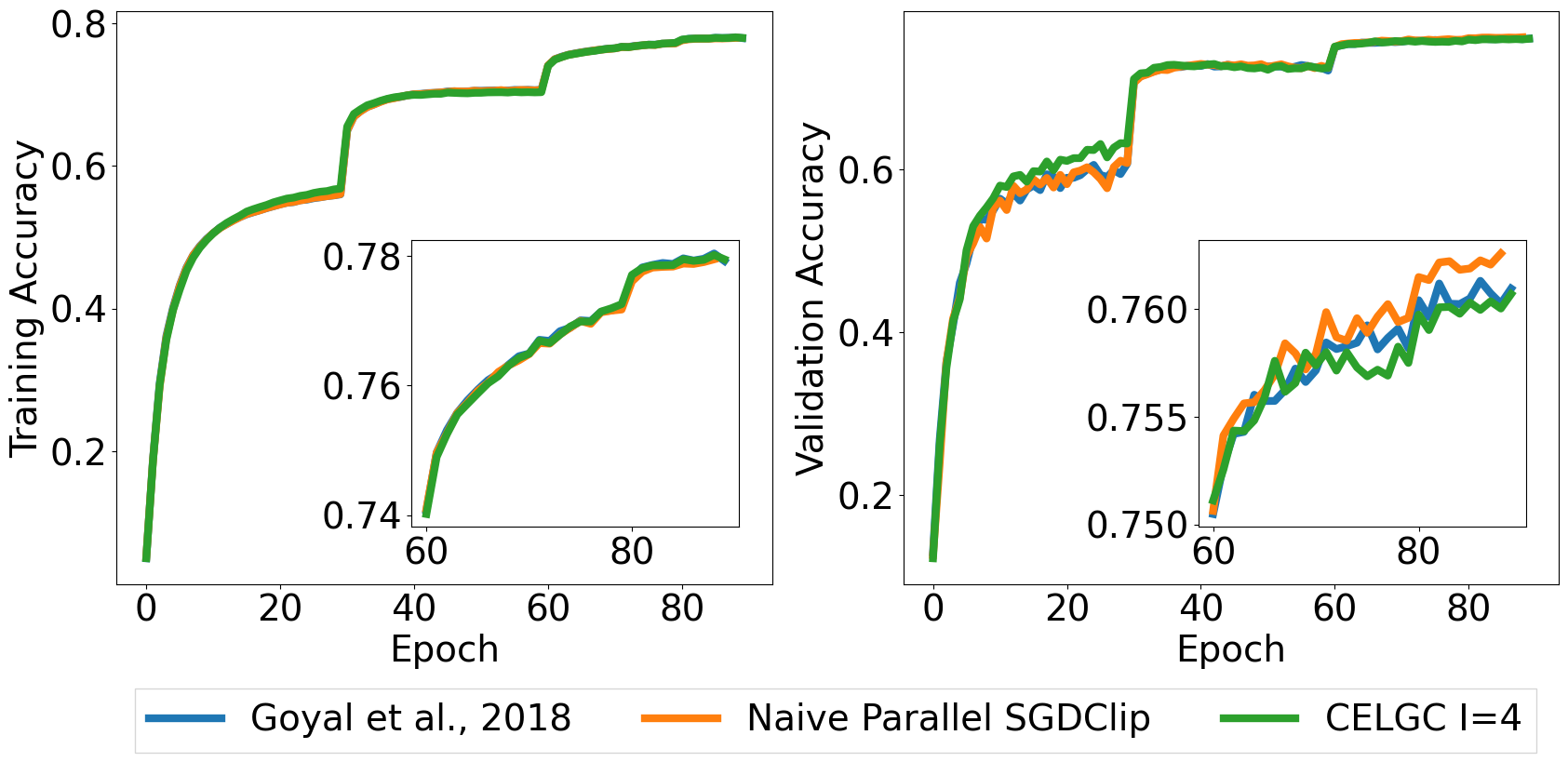}
			\caption{Performance over epoch}
			
		\label{fig:imagenet_epoch}
		\end{subfigure}
		\hfill
		\begin{subfigure}[b]{0.48\linewidth}
			\includegraphics[width=\linewidth]{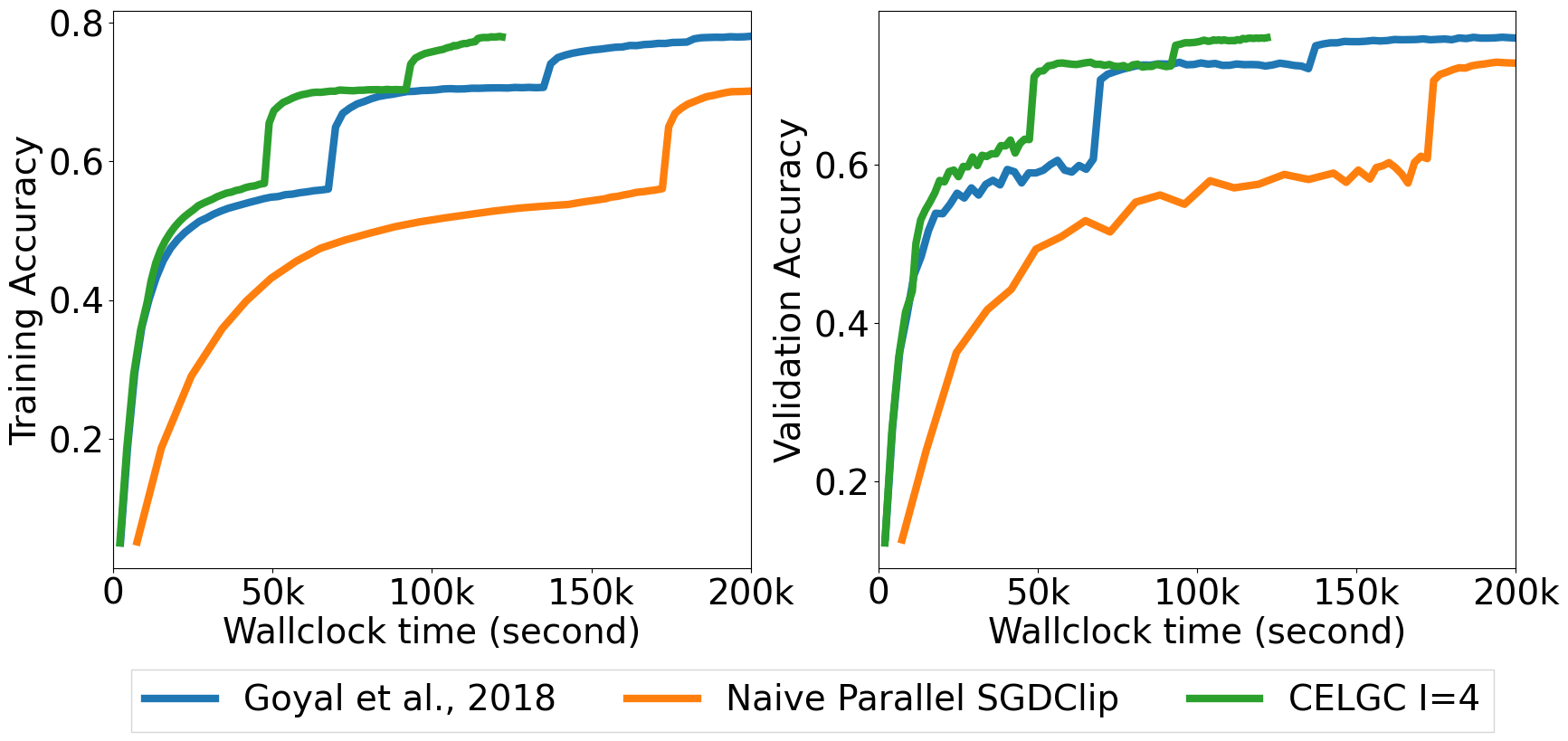}
			\caption{Performance over wall clock time}
			\label{fig:imagenet_wallclock}
		\end{subfigure}
		\caption{Training loss and test accuracy v.s.~epoch (left) and wall clock time (right) on training a Resnet-50 to do image classification on ImageNet.}
		\label{fig:imagenet}
	\end{figure}
\vspace*{-0.15in}
\section{Conclusion}
\vspace*{-0.1in}

In this paper, we design a communication-efficient distributed stochastic local gradient clipping algorithm to train deep neural networks. By exploring the relaxed smoothness condition which was shown to be satisfied for certain neural networks, we theoretically prove both the linear speedup property and the improved communication complexity when the data distribution across machines is homogeneous. Our empirical studies show that our algorithm indeed enjoys parallel speedup and greatly improves the runtime performance in various federated learning scenarios (e.g., partial client participation). One limitation of our work is that our convergence analysis is only applicable for homogeneous data, and it would be interesting to analyze the settings of heterogeneous data theoretically in the future.

\section*{Acknowledgements}
We would like to thank the anonymous reviewers for their help comments. Mingrui Liu is supported by a grant at George Mason University. Computations were run on ARGO, a research computing cluster provided by the Office of Research Computing at George Mason University (URL: https://orc.gmu.edu).
The majority of work of Zhenxun Zhuang was done when he was a Ph.D. student at Boston University.

\bibliographystyle{plain}
\bibliography{ref}

\begin{thebibliography}{10}

\bibitem{alber1998projected}
Ya~I Alber, Alfredo~N. Iusem, and Mikhail~V. Solodov.
\newblock On the projected subgradient method for nonsmooth convex optimization
  in a hilbert space.
\newblock {\em Mathematical Programming}, 81(1):23--35, 1998.

\bibitem{basu2019qsparse}
Debraj Basu, Deepesh Data, Can Karakus, and Suhas Diggavi.
\newblock Qsparse-local-sgd: Distributed sgd with quantization, sparsification
  and local computations.
\newblock In {\em Advances in Neural Information Processing Systems}, pages
  14668--14679, 2019.

\bibitem{cutkosky2020momentum}
Ashok Cutkosky and Harsh Mehta.
\newblock Momentum improves normalized sgd.
\newblock In {\em International Conference on Machine Learning}, pages
  2260--2268. PMLR, 2020.

\bibitem{dean2012large}
Jeffrey Dean, Greg Corrado, Rajat Monga, Kai Chen, Matthieu Devin, Mark Mao,
  Marc'aurelio Ranzato, Andrew Senior, Paul Tucker, Ke~Yang, et~al.
\newblock Large scale distributed deep networks.
\newblock In {\em Advances in neural information processing systems}, pages
  1223--1231, 2012.

\bibitem{devlin2018bert}
Jacob Devlin, Ming-Wei Chang, Kenton Lee, and Kristina Toutanova.
\newblock Bert: Pre-training of deep bidirectional transformers for language
  understanding.
\newblock {\em arXiv preprint arXiv:1810.04805}, 2018.

\bibitem{dieuleveut2019communication}
Aymeric Dieuleveut and Kumar~Kshitij Patel.
\newblock Communication trade-offs for local-sgd with large step size.
\newblock {\em Advances in Neural Information Processing Systems},
  32:13601--13612, 2019.

\bibitem{ermoliev1988stochastic}
Yuri Ermoliev.
\newblock Stochastic quasigradient methods. numerical techniques for stochastic
  optimization.
\newblock {\em Springer Series in Computational Mathematics}, (10):141--185,
  1988.

\bibitem{gehring2017convolutional}
Jonas Gehring, Michael Auli, David Grangier, Denis Yarats, and Yann~N Dauphin.
\newblock Convolutional sequence to sequence learning.
\newblock In {\em International Conference on Machine Learning}, pages
  1243--1252. PMLR, 2017.

\bibitem{ghadimi2013stochastic}
Saeed Ghadimi and Guanghui Lan.
\newblock Stochastic first-and zeroth-order methods for nonconvex stochastic
  programming.
\newblock {\em SIAM Journal on Optimization}, 23(4):2341--2368, 2013.

\bibitem{gorbunov2020stochastic}
Eduard Gorbunov, Marina Danilova, and Alexander Gasnikov.
\newblock Stochastic optimization with heavy-tailed noise via accelerated
  gradient clipping.
\newblock {\em arXiv preprint arXiv:2005.10785}, 2020.

\bibitem{gorbunov2021local}
Eduard Gorbunov, Filip Hanzely, and Peter Richt{\'a}rik.
\newblock Local sgd: Unified theory and new efficient methods.
\newblock In {\em International Conference on Artificial Intelligence and
  Statistics}, pages 3556--3564. PMLR, 2021.

\bibitem{goyal2017accurate}
Priya Goyal, Piotr Doll{\'a}r, Ross Girshick, Pieter Noordhuis, Lukasz
  Wesolowski, Aapo Kyrola, Andrew Tulloch, Yangqing Jia, and Kaiming He.
\newblock Accurate, large minibatch sgd: Training imagenet in 1 hour.
\newblock {\em arXiv preprint arXiv:1706.02677}, 2017.

\bibitem{haddadpour2019local}
Farzin Haddadpour, Mohammad~Mahdi Kamani, Mehrdad Mahdavi, and Viveck Cadambe.
\newblock Local sgd with periodic averaging: Tighter analysis and adaptive
  synchronization.
\newblock In {\em Advances in Neural Information Processing Systems}, pages
  11080--11092, 2019.

\bibitem{hazan2015beyond}
Elad Hazan, Kfir~Y Levy, and Shai Shalev-Shwartz.
\newblock Beyond convexity: Stochastic quasi-convex optimization.
\newblock {\em arXiv preprint arXiv:1507.02030}, 2015.

\bibitem{he2016deep}
Kaiming He, Xiangyu Zhang, Shaoqing Ren, and Jian Sun.
\newblock Deep residual learning for image recognition.
\newblock In {\em Proceedings of the IEEE conference on computer vision and
  pattern recognition}, pages 770--778, 2016.

\bibitem{jain2017parallelizing}
Prateek Jain, Sham~M Kakade, Rahul Kidambi, Praneeth Netrapalli, and Aaron
  Sidford.
\newblock Parallelizing stochastic gradient descent for least squares
  regression: Mini-batching, averaging, and model misspecification.
\newblock {\em Journal of Machine Learning Research}, 18:223--1, 2017.

\bibitem{jiang2018linear}
Peng Jiang and Gagan Agrawal.
\newblock A linear speedup analysis of distributed deep learning with sparse
  and quantized communication.
\newblock In {\em Advances in Neural Information Processing Systems}, pages
  2525--2536, 2018.

\bibitem{kairouz2019advances}
Peter Kairouz, H~Brendan McMahan, Brendan Avent, Aur{\'e}lien Bellet, Mehdi
  Bennis, Arjun~Nitin Bhagoji, Kallista Bonawitz, Zachary Charles, Graham
  Cormode, Rachel Cummings, et~al.
\newblock Advances and open problems in federated learning.
\newblock {\em arXiv preprint arXiv:1912.04977}, 2019.

\bibitem{karimireddy2020scaffold}
Sai~Praneeth Karimireddy, Satyen Kale, Mehryar Mohri, Sashank Reddi, Sebastian
  Stich, and Ananda~Theertha Suresh.
\newblock Scaffold: Stochastic controlled averaging for federated learning.
\newblock In {\em International Conference on Machine Learning}, pages
  5132--5143. PMLR, 2020.

\bibitem{khaled2020tighter}
Ahmed Khaled, Konstantin Mishchenko, and Peter Richt{\'a}rik.
\newblock Tighter theory for local sgd on identical and heterogeneous data.
\newblock In {\em International Conference on Artificial Intelligence and
  Statistics}, pages 4519--4529. PMLR, 2020.

\bibitem{koloskova2020unified}
Anastasia Koloskova, Nicolas Loizou, Sadra Boreiri, Martin Jaggi, and Sebastian
  Stich.
\newblock A unified theory of decentralized sgd with changing topology and
  local updates.
\newblock In {\em International Conference on Machine Learning}, pages
  5381--5393. PMLR, 2020.

\bibitem{DBLP:conf/icml/KoloskovaSJ19}
Anastasia Koloskova, Sebastian~U. Stich, and Martin Jaggi.
\newblock Decentralized stochastic optimization and gossip algorithms with
  compressed communication.
\newblock In {\em Proceedings of the 36th International Conference on Machine
  Learning, {ICML} 2019, 9-15 June 2019, Long Beach, California, {USA}}, pages
  3478--3487, 2019.

\bibitem{krizhevsky2012imagenet}
Alex Krizhevsky, Ilya Sutskever, and Geoffrey~E Hinton.
\newblock Imagenet classification with deep convolutional neural networks.
\newblock In {\em Advances in neural information processing systems}, pages
  1097--1105, 2012.

\bibitem{levy2016power}
Kfir~Y Levy.
\newblock The power of normalization: Faster evasion of saddle points.
\newblock {\em arXiv preprint arXiv:1611.04831}, 2016.

\bibitem{lin2018don}
Tao Lin, Sebastian~U Stich, Kumar~Kshitij Patel, and Martin Jaggi.
\newblock Don't use large mini-batches, use local sgd.
\newblock {\em arXiv preprint arXiv:1808.07217}, 2018.

\bibitem{mai2021stability}
Vien~V Mai and Mikael Johansson.
\newblock Stability and convergence of stochastic gradient clipping: Beyond
  lipschitz continuity and smoothness.
\newblock {\em arXiv preprint arXiv:2102.06489}, 2021.

\bibitem{marcus1993building}
Mitchell~P. Marcus, Mary~Ann Marcinkiewicz, and Beatrice Santorini.
\newblock Building a large annotated corpus of english: The penn treebank.
\newblock {\em Comput. Linguist.}, 19(2):313–330, June 1993.

\bibitem{mcdonald2010distributed}
Ryan McDonald, Keith Hall, and Gideon Mann.
\newblock Distributed training strategies for the structured perceptron.
\newblock In {\em Human Language Technologies: The 2010 Annual Conference of
  the North American Chapter of the Association for Computational Linguistics},
  pages 456--464. Association for Computational Linguistics, 2010.

\bibitem{mcmahan2016communication}
H~Brendan McMahan, Eider Moore, Daniel Ramage, Seth Hampson, et~al.
\newblock Communication-efficient learning of deep networks from decentralized
  data.
\newblock {\em AISTATS}, 2017.

\bibitem{menon2019can}
Aditya~Krishna Menon, Ankit~Singh Rawat, Sashank~J Reddi, and Sanjiv Kumar.
\newblock Can gradient clipping mitigate label noise?
\newblock In {\em International Conference on Learning Representations}, 2019.

\bibitem{merity2018regularizing}
Stephen Merity, Nitish~Shirish Keskar, and Richard Socher.
\newblock Regularizing and optimizing {LSTM} language models.
\newblock In {\em International Conference on Learning Representations}, 2018.

\bibitem{nesterov1984minimization}
Yurii~E Nesterov.
\newblock Minimization methods for nonsmooth convex and quasiconvex functions.
\newblock {\em Matekon}, 29:519--531, 1984.

\bibitem{pascanu2012understanding}
Razvan Pascanu, Tomas Mikolov, and Yoshua Bengio.
\newblock Understanding the exploding gradient problem. corr abs/1211.5063
  (2012).
\newblock {\em arXiv preprint arXiv:1211.5063}, 2012.

\bibitem{pascanu2013difficulty}
Razvan Pascanu, Tomas Mikolov, and Yoshua Bengio.
\newblock On the difficulty of training recurrent neural networks.
\newblock In {\em International conference on machine learning}, pages
  1310--1318. PMLR, 2013.

\bibitem{peters2018deep}
Matthew~E Peters, Mark Neumann, Mohit Iyyer, Matt Gardner, Christopher Clark,
  Kenton Lee, and Luke Zettlemoyer.
\newblock Deep contextualized word representations.
\newblock {\em arXiv preprint arXiv:1802.05365}, 2018.

\bibitem{reddi2020adaptive}
Sashank Reddi, Zachary Charles, Manzil Zaheer, Zachary Garrett, Keith Rush,
  Jakub Konecny, Sanjiv Kumar, and H~Brendan McMahan.
\newblock Adaptive federated optimization.
\newblock {\em ICLR}, 2021.

\bibitem{shamir2014distributed}
Ohad Shamir and Nathan Srebro.
\newblock Distributed stochastic optimization and learning.
\newblock In {\em 2014 52nd Annual Allerton Conference on Communication,
  Control, and Computing (Allerton)}, pages 850--857. IEEE, 2014.

\bibitem{shor2012minimization}
Naum~Zuselevich Shor.
\newblock {\em Minimization methods for non-differentiable functions},
  volume~3.
\newblock Springer Science \& Business Media, 2012.

\bibitem{silver2016mastering}
David Silver, Aja Huang, Chris~J Maddison, Arthur Guez, Laurent Sifre, George
  Van Den~Driessche, Julian Schrittwieser, Ioannis Antonoglou, Veda
  Panneershelvam, Marc Lanctot, et~al.
\newblock Mastering the game of go with deep neural networks and tree search.
\newblock {\em nature}, 529(7587):484, 2016.

\bibitem{stich2018local}
Sebastian~U Stich.
\newblock Local sgd converges fast and communicates little.
\newblock {\em arXiv preprint arXiv:1805.09767}, 2018.

\bibitem{wang2018cooperative}
Jianyu Wang and Gauri Joshi.
\newblock Cooperative sgd: A unified framework for the design and analysis of
  communication-efficient sgd algorithms.
\newblock {\em arXiv preprint arXiv:1808.07576}, 2018.

\bibitem{woodworth2021min}
Blake Woodworth, Brian Bullins, Ohad Shamir, and Nathan Srebro.
\newblock The min-max complexity of distributed stochastic convex optimization
  with intermittent communication.
\newblock {\em arXiv preprint arXiv:2102.01583}, 2021.

\bibitem{woodworth2020minibatch}
Blake Woodworth, Kumar~Kshitij Patel, and Nathan Srebro.
\newblock Minibatch vs local sgd for heterogeneous distributed learning.
\newblock {\em arXiv preprint arXiv:2006.04735}, 2020.

\bibitem{woodworth2020local}
Blake Woodworth, Kumar~Kshitij Patel, Sebastian Stich, Zhen Dai, Brian Bullins,
  Brendan Mcmahan, Ohad Shamir, and Nathan Srebro.
\newblock Is local sgd better than minibatch sgd?
\newblock In {\em International Conference on Machine Learning}, pages
  10334--10343. PMLR, 2020.

\bibitem{you2017scaling}
Yang You, Igor Gitman, and Boris Ginsburg.
\newblock Scaling sgd batch size to 32k for imagenet training.
\newblock {\em arXiv preprint arXiv:1708.03888}, 6:12, 2017.

\bibitem{you2019large}
Yang You, Jing Li, Sashank Reddi, Jonathan Hseu, Sanjiv Kumar, Srinadh
  Bhojanapalli, Xiaodan Song, James Demmel, Kurt Keutzer, and Cho-Jui Hsieh.
\newblock Large batch optimization for deep learning: Training bert in 76
  minutes.
\newblock {\em arXiv preprint arXiv:1904.00962}, 2019.

\bibitem{yu_linear}
Hao Yu, Rong Jin, and Sen Yang.
\newblock On the linear speedup analysis of communication efficient momentum
  {SGD} for distributed non-convex optimization.
\newblock In {\em Proceedings of the 36th International Conference on Machine
  Learning, {ICML} 2019, 9-15 June 2019, Long Beach, California, {USA}}, pages
  7184--7193, 2019.

\bibitem{yu2019parallel}
Hao Yu, Sen Yang, and Shenghuo Zhu.
\newblock Parallel restarted sgd with faster convergence and less
  communication: Demystifying why model averaging works for deep learning.
\newblock In {\em Proceedings of the AAAI Conference on Artificial
  Intelligence}, volume~33, pages 5693--5700, 2019.

\bibitem{yuan2021federated}
Honglin Yuan, Manzil Zaheer, and Sashank Reddi.
\newblock Federated composite optimization.
\newblock In {\em International Conference on Machine Learning}, pages
  12253--12266. PMLR, 2021.

\bibitem{zhang2020improved}
Bohang Zhang, Jikai Jin, Cong Fang, and Liwei Wang.
\newblock Improved analysis of clipping algorithms for non-convex optimization.
\newblock {\em arXiv preprint arXiv:2010.02519}, 2020.

\bibitem{zhang2019gradient}
Jingzhao Zhang, Tianxing He, Suvrit Sra, and Ali Jadbabaie.
\newblock Why gradient clipping accelerates training: A theoretical
  justification for adaptivity.
\newblock {\em arXiv preprint arXiv:1905.11881}, 2019.

\bibitem{zhang2021understanding}
Xinwei Zhang, Xiangyi Chen, Mingyi Hong, Zhiwei~Steven Wu, and Jinfeng Yi.
\newblock Understanding clipping for federated learning: Convergence and
  client-level differential privacy.
\newblock {\em arXiv preprint arXiv:2106.13673}, 2021.

\bibitem{zhang2020fedpd}
Xinwei Zhang, Mingyi Hong, Sairaj Dhople, Wotao Yin, and Yang Liu.
\newblock Fedpd: A federated learning framework with optimal rates and
  adaptivity to non-iid data.
\newblock {\em arXiv preprint arXiv:2005.11418}, 2020.

\bibitem{zhang2013communication}
Yuchen Zhang, John~C Duchi, and Martin~J Wainwright.
\newblock Communication-efficient algorithms for statistical optimization.
\newblock {\em The Journal of Machine Learning Research}, 14(1):3321--3363,
  2013.

\bibitem{zhou2017convergence}
Fan Zhou and Guojing Cong.
\newblock On the convergence properties of a $ k $-step averaging stochastic
  gradient descent algorithm for nonconvex optimization.
\newblock {\em arXiv preprint arXiv:1708.01012}, 2017.

\bibitem{zinkevich2010parallelized}
Martin Zinkevich, Markus Weimer, Lihong Li, and Alex~J Smola.
\newblock Parallelized stochastic gradient descent.
\newblock In {\em Advances in neural information processing systems}, pages
  2595--2603, 2010.

\end{thebibliography}
%%%%%%%%%%%%%%%%%%%%%%%%%%%%%%%%%%%%%%%%%%%%%%%%%%%%%%%%%%%%
\section*{Checklist}

%%% BEGIN INSTRUCTIONS %%%
% The checklist follows the references.  Please
% read the checklist guidelines carefully for information on how to answer these
% questions.  For each question, change the default \answerTODO{} to \answerYes{},
% \answerNo{}, or \answerNA{}.  You are strongly encouraged to include a {\bf
% justification to your answer}, either by referencing the appropriate section of
% your paper or providing a brief inline description.  For example:
% \begin{itemize}
%   \item Did you include the license to the code and datasets? \answerYes{See Section~\ref{gen_inst}.}
%   \item Did you include the license to the code and datasets? \answerNo{The code and the data are proprietary.}
%   \item Did you include the license to the code and datasets? \answerNA{}
% \end{itemize}
% Please do not modify the questions and only use the provided macros for your
% answers.  Note that the Checklist section does not count towards the page
% limit.  In your paper, please delete this instructions block and only keep the
% Checklist section heading above along with the questions/answers below.
%%% END INSTRUCTIONS %%%

\begin{enumerate}

\item For all authors...
\begin{enumerate}
  \item Do the main claims made in the abstract and introduction accurately reflect the paper's contributions and scope?
    \answerYes{}
  \item Did you describe the limitations of your work?
    \answerYes{See the Section 6}
  \item Did you discuss any potential negative societal impacts of your work?
    \answerNA{}
  \item Have you read the ethics review guidelines and ensured that your paper conforms to them?
    \answerYes{}
\end{enumerate}

\item If you are including theoretical results...
\begin{enumerate}
  \item Did you state the full set of assumptions of all theoretical results?
    \answerYes{See Section 3}
        \item Did you include complete proofs of all theoretical results?
    \answerYes{See the Appendix}
\end{enumerate}

\item If you ran experiments...
\begin{enumerate}
  \item Did you include the code, data, and instructions needed to reproduce the main experimental results (either in the supplemental material or as a URL)?
    \answerYes{in the supplemental material}
  \item Did you specify all the training details (e.g., data splits, hyperparameters, how they were chosen)?
    \answerYes{See the Section 5}
        \item Did you report error bars (e.g., with respect to the random seed after running experiments multiple times)?
    \answerNo{}
        \item Did you include the total amount of compute and the type of resources used (e.g., type of GPUs, internal cluster, or cloud provider)?
    \answerYes{See the Section 5}
\end{enumerate}

\item If you are using existing assets (e.g., code, data, models) or curating/releasing new assets...
\begin{enumerate}
  \item If your work uses existing assets, did you cite the creators?
    \answerYes{}
  \item Did you mention the license of the assets?
    \answerNA{}
  \item Did you include any new assets either in the supplemental material or as a URL?
    \answerNo{}
  \item Did you discuss whether and how consent was obtained from people whose data you're using/curating?
    \answerNA{}
  \item Did you discuss whether the data you are using/curating contains personally identifiable information or offensive content?
    \answerNA{}
\end{enumerate}

\item If you used crowdsourcing or conducted research with human subjects...
\begin{enumerate}
  \item Did you include the full text of instructions given to participants and screenshots, if applicable?
    \answerNA{}
  \item Did you describe any potential participant risks, with links to Institutional Review Board (IRB) approvals, if applicable?
    \answerNA{}
  \item Did you include the estimated hourly wage paid to participants and the total amount spent on participant compensation?
    \answerNA{}
\end{enumerate}

\end{enumerate}

%%%%%%%%%%%%%%%%%%%%%%%%%%%%%%%%%%%%%%%%%%%%%%%%%%%%%%%%%%%%

\appendix

\newpage
\section*{Appendix}

	\section{Properties of $(L_0,L_1)$ functions and Useful Lemmas}
	\label{appendix:relaxed}
	\begin{lem}[Descent inequality in \cite{zhang2020improved}]
		\label{descent_inequality}
		Let $f$ be $(L_0,L_1)$-smooth, and $c>0$ be a constant. For any $\x_k$ and $\x_{k+1}$, as long as $\|\x_k-\x_{k+1}\|\leq c/L_1$, we have
		\begin{equation}
			f(\x_{k+1}) \leq f(\x_k) + \langle \nabla f(\x_k),\x_{k+1}-\x_k\rangle + \frac{AL_0+BL_1\|\nabla f(\x_k)\|}{2}\|\x_{k+1}-\x_k\|^2,
		\end{equation}
		where $A=1+e^c-\frac{e^c-1}{c}$ and $B=\frac{e^c-1}{c}$.
	\end{lem}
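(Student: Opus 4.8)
The plan is to reduce the inequality to a one-dimensional Grönwall-type estimate for the gradient norm along the segment joining $\x_k$ and $\x_{k+1}$, and then integrate. Write $\v=\x_{k+1}-\x_k$, so the hypothesis reads $\|\v\|\le c/L_1$. Since $(L_0,L_1)$-smoothness presupposes $f\in C^2$, the fundamental theorem of calculus along $s\mapsto \x_k+s\v$, $s\in[0,1]$, gives
\begin{equation*}
	f(\x_{k+1})-f(\x_k)-\langle \nabla f(\x_k),\v\rangle=\int_0^1\langle \nabla f(\x_k+s\v)-\nabla f(\x_k),\v\rangle\,ds .
\end{equation*}
By Cauchy--Schwarz the integrand is at most $\|\nabla f(\x_k+s\v)-\nabla f(\x_k)\|\,\|\v\|$; writing the gradient difference as $\int_0^s\nabla^2 f(\x_k+r\v)\v\,dr$ and invoking $\|\nabla^2 f(\x_k+r\v)\|\le L_0+L_1\|\nabla f(\x_k+r\v)\|$ reduces the whole estimate to controlling the scalar $g(s):=\|\nabla f(\x_k+s\v)\|$ along the segment. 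So the first step is purely this one-dimensional reduction.

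The crux is to bound $g(s)$ in terms of $g(0)=\|\nabla f(\x_k)\|$. From the same integral representation one gets the inequality
\begin{equation*}
	g(s)\le g(0)+\|\v\|\int_0^s\bigl(L_0+L_1\,g(r)\bigr)\,dr ,
\end{equation*}
which is of Grönwall type. Applying Grönwall (equivalently, comparing with the solution of $G'=(L_0+L_1 g(0))+L_1\|\v\|G$, $G(0)=0$) yields $g(s)\le\bigl(g(0)+L_0/L_1\bigr)e^{L_1\|\v\|s}-L_0/L_1$, hence
\begin{equation*}
	L_0+L_1 g(s)\le \bigl(L_0+L_1 g(0)\bigr)e^{L_1\|\v\|s}\le \bigl(L_0+L_1\|\nabla f(\x_k)\|\bigr)e^{cs},
\end{equation*}
where the last step is the \emph{only} place the hypothesis $\|\v\|\le c/L_1$ is used. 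I expect this Grönwall step to be the main obstacle: it is what converts the non-uniform Hessian bound into a controlled multiplicative factor $e^{c}$, and it must be justified with care. Phrasing it through the integral inequality above (rather than differentiating $s\mapsto\|\nabla f(\x_k+s\v)\|$) conveniently sidesteps the non-differentiability of the norm at points where the gradient vanishes.

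Feeding the bound back gives $\|\nabla f(\x_k+s\v)-\nabla f(\x_k)\|\le \|\v\|\bigl(L_0+L_1\|\nabla f(\x_k)\|\bigr)\tfrac{e^{cs}-1}{c}$, so
\begin{equation*}
	f(\x_{k+1})-f(\x_k)-\langle \nabla f(\x_k),\v\rangle\le \|\v\|^2\bigl(L_0+L_1\|\nabla f(\x_k)\|\bigr)\int_0^1\frac{e^{cs}-1}{c}\,ds .
\end{equation*}
The remaining work is elementary: evaluate $\int_0^1\tfrac{e^{cs}-1}{c}\,ds=\tfrac{e^c-1}{c^2}-\tfrac1c=:K$, and then verify the two scalar estimates $K\le \tfrac12\bigl(1+e^c-\tfrac{e^c-1}{c}\bigr)=\tfrac A2$ and $K\le \tfrac12\cdot\tfrac{e^c-1}{c}=\tfrac B2$. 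Each follows from a one-line convexity argument on $[0,\infty)$; for instance $K\le \tfrac B2$ is equivalent to $\phi(c):=\tfrac{c}{2}e^c-e^c+1+\tfrac c2\ge0$, and $\phi$ satisfies $\phi(0)=\phi'(0)=0$ with $\phi''(c)=\tfrac{c}{2}e^c\ge0$, and the estimate $K\le \tfrac A2$ is analogous. Splitting $K\bigl(L_0+L_1\|\nabla f(\x_k)\|\bigr)\le \tfrac A2 L_0+\tfrac B2 L_1\|\nabla f(\x_k)\|$ and recalling $\|\v\|=\|\x_{k+1}-\x_k\|$ then delivers exactly the claimed descent inequality with the stated constants $A=1+e^c-\tfrac{e^c-1}{c}$ and $B=\tfrac{e^c-1}{c}$.
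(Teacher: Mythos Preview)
Your argument is correct. The paper does not supply its own proof of this lemma; it simply quotes the result from \citet{zhang2020improved}. Your route---reduce to the segment, derive the Gr\"onwall-type inequality $g(s)\le g(0)+\|\v\|\int_0^s(L_0+L_1 g(r))\,dr$ for $g(s)=\|\nabla f(\x_k+s\v)\|$, exponentiate to get $L_0+L_1 g(s)\le (L_0+L_1 g(0))e^{cs}$, and then integrate---is exactly the standard proof of this descent inequality under $(L_0,L_1)$-smoothness, and your two scalar checks $K\le A/2$ and $K\le B/2$ via $\phi(0)=\phi'(0)=0$, $\phi''\ge0$ are clean and correct.
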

	
	\begin{lem}[\cite{zhang2020improved}]
		\label{L0_L1_diff}
		Let $f$ be $(L_0,L_1)$-smooth, and $c>0$ be a constant. For any $\x_k$ and $\x_{k+1}$, as long as $\|\x_k-\x_{k+1}\|\leq c/L_1$, we have
		\begin{equation}
			\|\nabla f(\x_{k+1}) - \nabla f(\x_k)\| \leq \left(AL_0+BL_1\|\nabla f(\x_k)\|\right)\|\x_{k+1}-\x_k\|,
		\end{equation}
		where $A=1+e^c-\frac{e^c-1}{c}$ and $B=\frac{e^c-1}{c}$.
	\end{lem}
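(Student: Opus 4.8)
The plan is to reduce the inequality to a one-dimensional Gronwall argument along the segment joining $\x_k$ and $\x_{k+1}$. First I would parametrize $\x(s) = \x_k + s(\x_{k+1}-\x_k)$ for $s \in [0,1]$, abbreviate $\delta := \|\x_{k+1}-\x_k\|$ (so the hypothesis $\|\x_k-\x_{k+1}\| \le c/L_1$ reads $L_1\delta \le c$), and set $\phi(s) := \|\nabla f(\x(s)) - \nabla f(\x_k)\|$, which satisfies $\phi(0) = 0$. Since $f$ is twice differentiable, $\frac{d}{ds}\nabla f(\x(s)) = \nabla^2 f(\x(s))(\x_{k+1}-\x_k)$, so $\phi$ is Lipschitz and $\phi'(s) \le \|\nabla^2 f(\x(s))\|\,\delta$ for almost every $s$. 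Applying the $(L_0,L_1)$-smoothness bound together with the triangle inequality $\|\nabla f(\x(s))\| \le \|\nabla f(\x_k)\| + \phi(s)$ converts this into the linear differential inequality $\phi'(s) \le (L_0 + L_1\|\nabla f(\x_k)\|)\,\delta + (L_1\delta)\,\phi(s)$.

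Next I would integrate this inequality. Writing $a := L_1\delta$ and $P := (L_0 + L_1\|\nabla f(\x_k)\|)\,\delta$, multiplication by the integrating factor $e^{-as}$ gives $\frac{d}{ds}(e^{-as}\phi(s)) \le P e^{-as}$; integrating from $0$ to $1$ and using $\phi(0)=0$ yields $\phi(1) \le P\,\frac{e^{a}-1}{a} = (L_0 + L_1\|\nabla f(\x_k)\|)\,\frac{e^{L_1\delta}-1}{L_1}$. I expect the main obstacle to be exactly this step: the Hessian bound driving the growth of $\phi$ is itself proportional to the current gradient norm, so the estimate is self-referential and a naive triangle-inequality bound would lose the correct dependence; the integrating-factor (Gronwall) step is what closes this loop cleanly.

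Finally I would invoke the step-size restriction. Because $x \mapsto (e^x-1)/x$ is increasing on $(0,\infty)$ --- its derivative has numerator $e^x(x-1)+1$, which vanishes at $0$ and has positive derivative $xe^x$ --- the constraint $L_1\delta \le c$ gives $\frac{e^{L_1\delta}-1}{L_1\delta} \le \frac{e^c-1}{c} = B$, hence $e^{L_1\delta}-1 \le BL_1\delta$. Substituting produces $\phi(1) \le B(L_0 + L_1\|\nabla f(\x_k)\|)\,\delta$, which is the claim with $B$ in place of $A$ on the $L_0$ term. To recover the stated constant it remains to check $B \le A$: setting $\psi(c) := c(A-B) = c + (c-2)e^c + 2$, one has $\psi(0)=\psi'(0)=0$ and $\psi''(c) = ce^c > 0$, so $\psi(c) > 0$ for $c>0$ and therefore $A > B$. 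Combining, $\|\nabla f(\x_{k+1}) - \nabla f(\x_k)\| = \phi(1) \le (AL_0 + BL_1\|\nabla f(\x_k)\|)\,\delta$, as required.
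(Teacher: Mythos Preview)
The paper does not supply its own proof of this lemma: it is quoted verbatim from \citet{zhang2020improved} and used as a black box. Your Gronwall-along-the-segment argument is the standard way to establish such a bound and is correct as written; in fact you obtain the slightly sharper inequality $\|\nabla f(\x_{k+1})-\nabla f(\x_k)\|\le \bigl(BL_0+BL_1\|\nabla f(\x_k)\|\bigr)\|\x_{k+1}-\x_k\|$ and then relax $BL_0$ to $AL_0$ via the elementary check $A\ge B$, which is a nice observation.
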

	\begin{lem}[\cite{zhang2020improved}]
		\label{first_lemma}
		Let $\mu\geq0$ be a real constant. For any vectors $\u$ and $\v$,
		\begin{equation}
			-\frac{\langle \u,\v\rangle}{\|\v\|} \leq -\mu\|\u\| -(1-\mu)\|\v\| + (1+\mu)\|\v-\u\|.
		\end{equation}
	\end{lem}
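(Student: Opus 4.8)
\emph{Plan.} The two hypotheses together say that the density of $\g$ has the form $p(\g)=\phi(\|\g-\mu\|)$ for a non-increasing $\phi$, where $\mu=\E[\g]$; in particular $p$ is invariant under every reflection fixing $\mu$. I would prove \eqref{eqn:newequal} coordinate by coordinate. Writing $P=\Pr(\|\g\|\le\alpha)$ and $\mu_j=\E[g_j]$, it suffices to show for each $j$ that $\E[g_j\mathbb{I}(\|\g\|\le\alpha)]$ has the same sign as $\mu_j$ and that $|\E[g_j\mathbb{I}(\|\g\|\le\alpha)]|\le P|\mu_j|$; then $c_j:=\E[g_j\mathbb{I}(\|\g\|\le\alpha)]/(P\mu_j)$ lies in $(0,1]$ whenever $\mu_j\neq0$, and I set $c_j:=1$ otherwise. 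Taking $\Lambda=\diag(c_1,\dots,c_d)$ makes \eqref{eqn:newequal} hold by construction, so the lemma reduces to two one-dimensional estimates, each established by a well-chosen reflection.

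\emph{Same sign (giving $c_j>0$).} Fix $j$ and assume WLOG $\mu_j>0$. Let $R_j$ flip the $j$-th coordinate, $\g\mapsto\g-2g_j\e_j$. The ball $\{\|\g\|\le\alpha\}$ is $R_j$-invariant, so pairing each point having $g_j>0$ with its reflection gives
\begin{equation*}
\E\left[g_j\mathbb{I}(\|\g\|\le\alpha)\right]=\int_{\{\|\g\|\le\alpha,\,g_j>0\}} g_j\,\left[\,p(\g)-p(R_j\g)\,\right]\,d\g .
\end{equation*}
For $g_j>0$ and $\mu_j>0$ one checks $\|\g-\mu\|^2-\|R_j\g-\mu\|^2=-4g_j\mu_j<0$, so $p(\g)=\phi(\|\g-\mu\|)\ge\phi(\|R_j\g-\mu\|)=p(R_j\g)$ because $\phi$ is non-increasing. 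Hence the integrand is $\ge0$ and $\E[g_j\mathbb{I}(\|\g\|\le\alpha)]\ge0$; strict monotonicity of $\phi$ makes it strictly positive (the half-ball has positive measure and the radius inequality there is strict), which yields $c_j>0$.

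\emph{Bounded magnitude (giving $c_j\le1$).} Pass to centred coordinates $\w=\g-\mu$, with spherically symmetric density $q(\w)=\phi(\|\w\|)$ and constraint $\w\in B:=\{\|\w+\mu\|\le\alpha\}$. The target is $\E[(g_j-\mu_j)\mathbb{I}(\|\g\|\le\alpha)]=\int_B w_j\,q(\w)\,d\w\le0$. Splitting off $\{w_j>0\}$ and using $q(R_j\w)=q(\w)$,
\begin{equation*}
\int_B w_j\,q(\w)\,d\w=\int_{\{w_j>0\}} w_j\,q(\w)\,\left[\,\mathbb{I}_B(\w)-\mathbb{I}_B(R_j\w)\,\right]\,d\w .
\end{equation*}
Here $\|\w+\mu\|^2-\|R_j\w+\mu\|^2=4w_j\mu_j>0$ for $w_j,\mu_j>0$, so $\w\in B\Rightarrow R_j\w\in B$ and the bracket is $\le0$; with $w_j>0$ and $q\ge0$ the integral is $\le0$, i.e.\ $\E[g_j\mathbb{I}(\|\g\|\le\alpha)]\le P\mu_j$ and $c_j\le1$. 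The case $\mu_j<0$ follows by flipping the sign of the $j$-th coordinate, and the case $\mu_j=0$ is immediate, since then $p$ is even in $g_j$ and both sides of \eqref{eqn:newequal} vanish in coordinate $j$.

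\emph{Main obstacle.} Everything hinges on the reflection-invariance of $p$, which is exactly what ``symmetric and radially decreasing about the mean'' provides; the genuine care is (i) selecting the correct reflection in each step---the coordinate flip $R_j$ leaves the ball invariant but moves $B$ in the first step, and leaves $q$ invariant but moves the ball in the second---and (ii) upgrading $c_j\ge0$ to the strict $c_j>0$ required by the statement, which uses strict monotonicity of $\phi$ (as for the truncated Gaussian and student-$t$ examples) together with $P>0$; any degenerate case with $P=0$ satisfies \eqref{eqn:newequal} trivially.
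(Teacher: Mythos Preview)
Your proposal proves the wrong statement. The lemma under consideration is the elementary vector inequality
\[
-\frac{\langle \u,\v\rangle}{\|\v\|} \;\leq\; -\mu\|\u\| -(1-\mu)\|\v\| + (1+\mu)\|\v-\u\|,
\]
which is a deterministic Cauchy--Schwarz/triangle-inequality fact about two vectors and a scalar $\mu\ge 0$ (cited from \citealt{zhang2020improved}). Your write-up instead proves Lemma~\ref{lem:new3} of the paper---the identity $\E[\g\,\mathbb{I}(\|\g\|\le\alpha)]=\Pr(\|\g\|\le\alpha)\,\Lambda\,\E[\g]$ for a random vector $\g$ with radially symmetric, unimodal density---complete with references to equation~\eqref{eqn:newequal}, coordinate reflections $R_j$, and the constants $c_j$. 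None of that machinery bears on the stated inequality: there are no random variables, no densities, and $\mu$ here is a nonnegative \emph{scalar}, not a mean vector.

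For the record, the actual proof is a two-line computation. From $\langle \v-\u,\v\rangle\le\|\v-\u\|\,\|\v\|$ one gets $-\langle \u,\v\rangle/\|\v\|\le \|\v-\u\|-\|\v\|$, and from $\|\u\|\le\|\v\|+\|\v-\u\|$ one gets $0\le -\|\u\|+\|\v\|+\|\v-\u\|$; multiplying the second by $\mu\ge 0$ and adding yields the claim. As an aside, your argument for Lemma~\ref{lem:new3} is fine and in fact cleaner than the paper's case analysis in Appendix~\ref{appendix:prooflem}, but it belongs with that lemma, not this one.
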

	%\section{Useful Lemmas}
	\begin{lem}
		\label{lem:matrixnorm}
		Suppose $\Lambda=\text{diag}(c_1,c_2,\ldots,c_d)\in\R^{d\times d}$ with $0<c_i\leq 1$, $i=1,\ldots, d$. Define $c_{\max}=\max(c_1,c_2,\ldots,c_d)$, $c_{\min}=\min(c_1,c_2,\ldots,c_d)$. Then we have
		\begin{equation}
			\sqrt{c_{\min}}\|\x\|_2\leq \|\x\|_{\Lambda}\leq \sqrt{c_{\max}}\|\x\|_2.
		\end{equation}
	\end{lem}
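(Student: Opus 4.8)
The plan is to compute $\|\x\|_\Lambda^2$ directly and sandwich it between scalar multiples of $\|\x\|_2^2$. First I would fix the definition: since $\Lambda = \text{diag}(c_1,\ldots,c_d)$ with each $c_i > 0$, the bilinear form $\langle \x, \Lambda \x\rangle = \sum_{i=1}^d c_i x_i^2$ is nonnegative, so $\|\x\|_\Lambda := \sqrt{\langle \x, \Lambda\x\rangle} = \sqrt{\sum_{i=1}^d c_i x_i^2}$ is well-defined. (If the paper instead means $\|\x\|_\Lambda = \|\Lambda \x\|_2$, the argument is cosmetically different but identical in spirit; I would state which convention is in force at the top of the proof.)

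The key step is the elementwise bound $c_{\min} \le c_i \le c_{\max}$ for every $i$, which follows immediately from the definitions of $c_{\min}$ and $c_{\max}$ as the min and max of the $c_i$'s. Multiplying by $x_i^2 \ge 0$ and summing over $i=1,\ldots,d$ gives
\[
  c_{\min}\sum_{i=1}^d x_i^2 \;\le\; \sum_{i=1}^d c_i x_i^2 \;\le\; c_{\max}\sum_{i=1}^d x_i^2,
\]
that is, $c_{\min}\|\x\|_2^2 \le \|\x\|_\Lambda^2 \le c_{\max}\|\x\|_2^2$. Taking square roots (all three quantities are nonnegative, and $t \mapsto \sqrt t$ is monotone) yields $\sqrt{c_{\min}}\,\|\x\|_2 \le \|\x\|_\Lambda \le \sqrt{c_{\max}}\,\|\x\|_2$, which is the claim.

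There is essentially no obstacle here: the lemma is a one-line consequence of monotonicity of the sum under the pointwise inequality $c_{\min}\le c_i\le c_{\max}$, and the hypothesis $0<c_i\le 1$ is not even needed for the stated inequality (it only guarantees $c_{\max}\le 1$, which is used elsewhere, e.g.\ to conclude $\|\x\|_\Lambda \le \|\x\|_2$). The only thing to be careful about is to make explicit the convention for $\|\cdot\|_\Lambda$ so the reader knows we mean the $\Lambda$-weighted quadratic form; everything else is a direct computation.
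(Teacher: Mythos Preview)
Your proposal is correct and follows essentially the same approach as the paper: compute $\|\x\|_\Lambda^2 = \sum_i c_i x_i^2$, sandwich each term using $c_{\min}\le c_i\le c_{\max}$, and take square roots. Your side remarks about the convention for $\|\cdot\|_\Lambda$ and about the hypothesis $0<c_i\le 1$ being unnecessary for the inequality itself are accurate and do not deviate from the paper's argument.
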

	\begin{proof}
		By definition, we have $\|\x\|_{\Lambda}=\sqrt{\x^\top\Lambda\x}$. Suppose $\x=(x_1,\ldots,x_d)$, then we have
		\begin{equation*}
			\begin{aligned}
				&\|\x\|_{\Lambda}^2=c_1x_1^2+c_2x_2^2+\ldots+c_dx_d^2\leq c_{\max}(x_1^2+\ldots+x_d^2)=c_{\max}\|\x\|_2^2,\\
				&\|\x\|_{\Lambda}^2=c_1x_1^2+c_2x_2^2+\ldots+c_dx_d^2\geq c_{\min}(x_1^2+\ldots+x_d^2)=c_{\min}\|\x\|_2^2.
			\end{aligned}
		\end{equation*}
		Combining these inequalities and taking square root on both sides, the Lemma is proved.
	\end{proof}

	\section{Proof of Lemma~\ref{lem:new3}}
	\label{appendix:prooflem}
	\begin{proof}
		We first prove the 1-dimensional case. Let $g\in\R$ be a 1-dimensional random variable. Denote $\mu=\E[g]$ and $h(g)$ by the density of $g$ over $\R$. Then by the assumption, we know that $h(\mu)\geq h(g)$ for any $g\in\R$, $h(\mu+x)=h(\mu-x)$ for any $x\in\R$ and $h(x)<h(y)$ if $|x-\mu|>|y-\mu|$ for $x,y\in\R$. We aim to prove that there exists $0<c\leq 1$ such that  $\mathbb{E}\left[g\mathbb{I}(|g|\leq\alpha)\right]=c\cdot\text{Pr}(|g|\leq\alpha)\cdot\mu$. We consider the following three cases:
		\begin{itemize}
			\item $\mu=0$. In this case, by the symmetry of probability density around its mean, we know that $\text{LHS}=\E\left[g\mathbb{I}(|g|\leq \alpha)\right]=0=c\cdot\text{Pr}(|g|\leq\alpha)\cdot \mu=\text{RHS}$, where $c=1$.
			\item $\mu>0$. We first prove that $\E\left[g\mathbb{I}(|g|\leq \alpha)\right]$ is positive. By definition, we have $\E\left[g\mathbb{I}(|g|\leq \alpha)\right]=\int_{-\alpha}^{0}g h(g)dg+\int_{0}^{\alpha}gh(g)dg:=A_1+A_2$. Note that $A_1=\int_{0}^{\alpha}-gh(-g)dg$ and $h(g)>h(-g)$ for $g>0$ (due to $\mu>0$ and the monotonic decreasing property of the probability density), and then we have $A_1+A_2=\int_{0}^{\alpha}g(h(g)-h(-g))dg>0$.
			Next, we show that there exists some $0< c\leq 1$, such that $\E\left[g\mathbb{I}(|g|\leq \alpha)\right]=c\cdot\text{Pr}(|g|\leq \alpha)\cdot \mu$. To show this we further consider two cases:
			\begin{itemize}
				\item [(i)] $\alpha\leq \mu$. In this case, we have $\mathbb{E}\left[g\mathbb{I}(|g|\leq\alpha)\right]\leq \mu\E\left[\mathbb{I}(|g|\leq\alpha)\right]= \mu\cdot\text{Pr}(|g|\leq\alpha)$. This fact together with $\E\left[g\mathbb{I}(|g|\leq \alpha)\right]>0$ imply that there exists $0<c\leq 1$ such that $\mathbb{E}\left[g\mathbb{I}(|g|\leq\alpha)\right]=c\cdot\text{Pr}(|g|\leq\alpha)\cdot\mu$.
				\item [(ii)] $\alpha>\mu$. In this case, we have
				\begin{equation*}
					\mathbb{E}\left[g\mathbb{I}(|g|\leq\alpha)\right]=\int_{-\alpha}^{2\mu-\alpha}gh(g)dg+\int_{2\mu-\alpha}^{\mu}gh(g)dg+\int_{\mu}^{\alpha}gh(g)dg:=A_1+A_2+A_3.
				\end{equation*}
				Define $\hat{g}=2\mu-g$, then we have $A_2=\int_{\mu}^{\alpha}(2\mu-\hat{g})h(2\mu-\hat{g})d\hat{g}=\int_{\mu}^{\alpha}(2\mu-\hat{g})h(\hat{g})d\hat{g}$ due to the symmetry around $\mu$ of the probability density. Hence we have
				$A_2+A_3=2\mu\int_{\mu}^{\alpha}h(g)dg=\mu(\int_{2\mu-\alpha}^{\mu}h(g)dg+\int_{\mu}^{\alpha}h(g)dg)$ again due to the symmetry around $\mu$ of the probability density.
				As a result, we have
				\begin{equation*}
					\mathbb{E}\left[g\mathbb{I}(|g|\leq\alpha)\right]=\int_{-\alpha}^{2\mu-\alpha}gh(g)dg+\mu\left(\int_{2\mu-\alpha}^{\mu}h(g)dg+\int_{\mu}^{\alpha}h(g)dg\right)\leq  \mu\int_{-\alpha}^{\alpha}h(g)dg=\mu\cdot\text{Pr}(|g|\leq\alpha).
				\end{equation*}
				This fact together with $\E\left[g\mathbb{I}(|g|\leq \alpha)\right]>0$ imply that there exists $0<c\leq 1$ such that $\mathbb{E}\left[g\mathbb{I}(|g|\leq\alpha)\right]=c\cdot\text{Pr}(|g|\leq\alpha)\cdot\mu$.
			\end{itemize}
			\item $\mu<0$. In this case we can use parallel argument as in the case of $\mu>0$ to prove the same result.
		\end{itemize}
		Now we extend the proof from $1$-dimensional to $d$-dimensional setting. Denote $\g_{[i]}$ by the $i$-th coordinate of $\g$, $\mu_i=\E\left[\g_{[i]}\right]$, where $i=1,\ldots, d$, define $h(\g)$ by the density of $\g$, and $\boldsymbol\mu=(\mu_1,\ldots,\mu_d)$. Then we have $h(\boldsymbol\mu)\geq h(\g)$ for any $\g\in\R^d$, $h(\boldsymbol\mu+\x)=h(\boldsymbol\mu-\x)$ for any $\x\in\R^d$ and $h(\x)<h(\y)$ if $\|\x-\boldsymbol\mu\|>\|\y-\boldsymbol\mu\|$ with $\x,\y\in\R^d$. Without loss of generality, it suffices to show that there exists a constant $0<c_1\leq 1$ such that $
		\E\left[\g_{[1]}\mathbb{I}(\|\g\|\leq\alpha)\right]=c_1\cdot\text{Pr}(\|\g\|\leq\alpha)\E\left[\g_{[1]}\right]$.
		Note that
		\begin{equation*}
			\begin{aligned}
				& \E\left[\g_{[1]}\mathbb{I}(\|\g\|\leq\alpha)\right]=\int_{\|\g\|\leq\alpha}\g_{[1]}h(\g)d\g\\
				&=\int_{-\alpha}^{\alpha}\g_{[1]} \int_{\sum_{i=2}^{d}\g_{[i]}^2\leq \alpha^2-\g_{[1]}^2}h(\g_{[1]}|\g_{[2]},\ldots,\g_{[d]})h(\g_{[2]},\ldots,\g_{[d]})d\g_{[1]}\g_{[2]}\ldots\g_{[d]}\\
				&=\int_{-\alpha}^{0}\g_{[1]} \int_{\sum_{i=2}^{d}\g_{[i]}^2\leq \alpha^2-\g_{[1]}^2}h(\g_{[1]}|\g_{[2]},\ldots,\g_{[d]})h(\g_{[2]},\ldots,\g_{[d]})d\g_{[1]}\g_{[2]}\ldots\g_{[d]}\\
				&\quad+\int_{0}^{\alpha}\g_{[1]} \int_{\sum_{i=2}^{d}\g_{[i]}^2\leq \alpha^2-\g_{[1]}^2}h(\g_{[1]}|\g_{[2]},\ldots,\g_{[d]})h(\g_{[2]},\ldots,\g_{[d]})d\g_{[1]}\g_{[2]}\ldots\g_{[d]}\\
				&:=B_1+B_2.
			\end{aligned}
		\end{equation*}
		By change of variable (e.g., changing $\g_{[1]}$ by $-\g_{[1]}$), we know that 
		\begin{equation*}
			B_1=\int_{0}^{\alpha}-\g_{[1]} \int_{\sum_{i=2}^{d}\g_{[i]}^2\leq \alpha^2-\g_{[1]}^2}h(-\g_{[1]}|\g_{[2]},\ldots,\g_{[d]})h(\g_{[2]},\ldots,\g_{[d]})d\g_{[1]}\g_{[2]}\ldots\g_{[d]}.
		\end{equation*}
		As a result, we have
		\begin{equation*}
			\begin{aligned}
				&\E\left[\g_{[1]}\mathbb{I}(\|\g\|\leq\alpha)\right]\\
				&=\int_{0}^{\alpha}\g_{[1]} \int_{\sum_{i=2}^{d}\g_{[i]}^2\leq \alpha^2-\g_{[1]}^2}(h(\g_{[1]}|\g_{[2]},\ldots,\g_{[d]})-h(-\g_{[1]}|\g_{[2]},\ldots,\g_{[d]}))h(\g_{[2]},\ldots,\g_{[d]})d\g_{[1]}\g_{[2]}\ldots\g_{[d]}\\
				&=\int_{0}^{\alpha}\g_{[1]}\int_{\sum_{i=2}^{d}\g_{[i]}^2\leq \alpha^2-\g_{[1]}^2}(h(\g_{[1]},\g_{[2]},\ldots,\g_{[d]})-h(-\g_{[1]},\g_{[2]},\ldots,\g_{[d]}))d\g_{[1]}\g_{[2]}\ldots\g_{[d]}.
			\end{aligned}
		\end{equation*}
		Similar to the $1$-dimensional analysis, we consider the following three cases:
		\begin{itemize}
			\item $\mu_1=0$. By the symmetry of the probability density around its mean, we know that $h(x_1,x_2,\ldots,x_d)=h(-x_1,x_2,\ldots,x_d)$, since $\mu_1=0$ and $\|(x_1,x_2,\ldots,x_d)-(\mu_1,\mu_2,\ldots,\mu_d)\|=\|(-x_1,x_2,\ldots,x_d)-(\mu_1,\mu_2,\ldots,\mu_d)\|$. Hence we know that $\text{LHS}=\E\left[\g_{[1]}\mathbb{I}(\|\g\|\leq\alpha)\right]=0=c_1
			\cdot \text{Pr}(|\g|\leq\alpha)\cdot \E\left[\g_{[1]}\right]$ for $c_1=1$.
			\item $\mu_1>0$. We first prove that $\E\left[\g_{[1]}\mathbb{I}(\|\g\|\leq\alpha)\right]$ is also positive. In this case, by the symmetry of the probability density around its mean and the monotonically decreasing property of the density over the $\ell_2$ distance between the mean and the random variable, we know that $h(x_1,x_2,\ldots,x_d)>h(-x_1,x_2,\ldots,x_d)$, since $\mu_1>0$ and $\|(x_1,x_2,\ldots,x_d)-(\mu_1,\mu_2,\ldots,\mu_d)\|<\|(-x_1,x_2,\ldots,x_d)-(\mu_1,\mu_2,\ldots,\mu_d)\|$ for any $0\leq x_1\leq \alpha$. Then we show that there exists some $0<c_1\leq 1$ such that
			$\E\left[\g_{[1]}\mathbb{I}(\|\g\|\leq\alpha)\right]=c_1\cdot\text{Pr}(\|g\|\leq\alpha)\cdot\E\left[\g_{[1]}\right]$. We further consider two cases.
			\begin{itemize}
				\item [(i)] $\alpha\leq\mu_1$. In this case, we have $\E\left[\g_{[1]}\mathbb{I}(\|\g\|\leq\alpha)\right]\leq\mu_1\E\left[\mathbb{I}(\|\g\|\leq\alpha)\right]$, since $\|\g\|\leq\alpha$ implies $\g_{[1]}\leq\alpha\leq\mu_1$. This fact together with $\E\left[\g_{[1]}\mathbb{I}(\|\g\|\leq\alpha)\right]>0$ imply that there exists $0<c_1\leq 1$ such that $\E\left[\g_{[1]}\mathbb{I}(\|\g\|\leq\alpha)\right]=c_1\cdot\text{Pr}(\|\g\|\leq\alpha)\cdot \mu_1$.
				\item [(ii)] $\alpha > \mu_1$. In this case, we have 
				\begin{equation*}
					\begin{aligned}
						&\E\left[\g_{[1]}\mathbb{I}(\|\g\|\leq\alpha)\right]=\int_{-\alpha}^{2\mu_1-\alpha}\g_{[1]}\int_{\sum_{i=2}^{d}\g_{[i]}^2\leq \alpha^2-\g_{[1]}^2}h(\g_{[1]},\ldots,\g_{[d]})d\g_{[1]}\ldots d\g_{[d]}\\
						&\quad+\int_{2\mu_1-\alpha}^{\mu_1}\g_{[1]}\int_{\sum_{i=2}^{d}\g_{[i]}^2\leq \alpha^2-\g_{[1]}^2}h(\g_{[1]},\ldots,\g_{[d]})d\g_{[1]}\ldots d\g_{[d]}\\
						&+\int_{\mu_1}^{\alpha}\g_{[1]}\int_{\sum_{i=2}^{d}\g_{[i]}^2\leq \alpha^2-\g_{[1]}^2}h(\g_{[1]},\ldots,\g_{[d]})d\g_{[1]}\ldots d\g_{[d]}\\
						&\quad:=D_1+D_2+D_3.
					\end{aligned}
				\end{equation*}
				Define $\gh_{[1]}=2\mu_1-\g_{[1]}$, then we have
				\begin{equation*}
					\begin{aligned}
						D_2&=\int_{\mu_1}^{\alpha}(2\mu_1-\gh_{[1]})\int_{\sum_{i=2}^{d}\g_{[i]}^2\leq \alpha^2-\g_{[1]}^2}h(2\mu_1-\gh_{[1]},\g_{[2]},\ldots,\g_{[d]})d\gh_{[1]}\ldots\g_{[d]}\\
						&=\int_{\mu_1}^{\alpha}(2\mu_1-\gh_{[1]})\int_{\sum_{i=2}^{d}\g_{[i]}^2\leq \alpha^2-\g_{[1]}^2}h(\gh_{[1]},\g_{[2]},\ldots,\g_{[d]})d\gh_{[1]}\ldots\g_{[d]},
					\end{aligned}
				\end{equation*}
				where the second equality holds since the distribution is symmetric around $\boldsymbol\mu$ and $\|(2\mu_1-x_1,x_2,\ldots,x_d)-(\mu_1,\mu_2,\ldots,\mu_d)\|=\|(x_1,x_2,\ldots,x_d)-(\mu_1,\mu_2,\ldots,\mu_d)\|$. As a result, we have
				\begin{equation*}
					\begin{aligned}
						&D_2+D_3=2\mu_1\int_{\mu_1}^{\alpha}\int_{\sum_{i=2}^{d}\g_{[i]}^2\leq \alpha^2-\g_{[1]}^2}h(\g_{[1]},\g_{[2]},\ldots,\g_{[d]})d\g_{[1]}d\g_{[2]}\ldots d\g_{[d]}\\
						&=\mu_1\int_{2\mu_1-\alpha}^{\mu_1}\int_{\sum_{i=2}^{d}\g_{[i]}^2\leq \alpha^2-\g_{[1]}^2}h(\g_{[1]},\g_{[2]},\ldots,\g_{[d]})d\g_{[1]}d\g_{[2]}\ldots d\g_{[d]}\\
						&\quad+\mu_1\int_{\mu_1}^{\alpha}\int_{\sum_{i=2}^{d}\g_{[i]}^2\leq \alpha^2-\g_{[1]}^2}h(\g_{[1]},\g_{[2]},\ldots,\g_{[d]})d\g_{[1]}d\g_{[2]}\ldots d\g_{[d]},
					\end{aligned}
				\end{equation*}
				where the second inequality holds again due to the symmetry of the probability density around $\boldsymbol\mu$. As a result we have
				\begin{equation*}
					\begin{aligned}
						&\E\left[\g_{[1]}\mathbb{I}(\|\g\|\leq\alpha)\right]=\int_{-\alpha}^{2\mu_1-\alpha}\g_{[1]}\int_{\sum_{i=2}^{d}\g_{[i]}^2\leq \alpha^2-\g_{[1]}^2}h(\g_{[1]},\ldots,\g_{[d]})d\g_{[1]}\ldots d\g_{[d]}\\
						&+\mu_1\int_{2\mu_1-\alpha}^{\mu_1}\int_{\sum_{i=2}^{d}\g_{[i]}^2\leq \alpha^2-\g_{[1]}^2}h(\g_{[1]},\ldots,\g_{[d]})d\g_{[1]}\ldots d\g_{[d]}
						\\
						&+\mu_1\int_{\mu_1}^{\alpha}\int_{\sum_{i=2}^{d}\g_{[i]}^2\leq \alpha^2-\g_{[1]}^2}h(\g_{[1]},\ldots,\g_{[d]})d\g_{[1]}\ldots d\g_{[d]}\\
						&\leq \mu_1\int_{-\alpha}^{\alpha}\int_{\sum_{i=2}^{d}\g_{[i]}^2\leq \alpha^2-\g_{[1]}^2}h(\g_{[1]},\ldots,\g_{[d]})d\g_{[1]}\ldots d\g_{[d]}\\
						&=\mu_1\cdot\int_{\|\g\|\leq\alpha}h(\g)d\g=\mu_1\cdot\text{Pr}(\|\g\|\leq\alpha).
					\end{aligned}
				\end{equation*}
				This fact together with  $\E\left[\g_{[1]}\mathbb{I}(\|\g\|\leq\alpha)\right]>0$ imply that there exists $0<c_1\leq 1$ such that $\E\left[\g_{[1]}\mathbb{I}(\|\g\|\leq\alpha)\right]=c_1\cdot\text{Pr}(\|\g\|\leq\alpha)\cdot \mu_1$.
			\end{itemize}
			\item $\mu_1<0$. In this case we can use parallel argument as in the case of $\mu_1>0$ to prove the same result.
		\end{itemize}
		Now we have proved that there exists a constant $0<c_1\leq 1$ such that $\E\left[\g_{[1]}\mathbb{I}(\|\g\|\leq\alpha)\right]=c_1\cdot\text{Pr}(\|\g\|\leq\alpha)\E\left[\g_{[1]}\right]$. Using the same argument we can show that there exists $0<c_i\leq 1$ with $i=1,\ldots,d$ such that $\E\left[\g_{[i]}\mathbb{I}(\|\g\|\leq\alpha)\right]=c_i\cdot\text{Pr}(\|\g\|\leq\alpha)\E\left[\g_{[i]}\right]$. Hence the Lemma is proved.
		
		% Then we have
		% \begin{equation*}
		%     \E\left[\g\mathbb{I}(\|\g\|\leq\alpha)\right]=\int_{\|\g\|\leq\alpha}[\g_{[1]},\ldots,\g_{[d]}]dH,
		% \end{equation*}
		% where $H$ is the cumulative distribution function of random variable $\g$. Since $\|\g\|=\sqrt{\g_{[1]}^2+\ldots+\g_{[d]}^2}$, then we know that $\left\{\g:\|\g\|\leq\alpha\right\}\subset\left\{\g:\|\g_{[i]}\|\leq\alpha\right\}$. This implies that the value of the $i$-th coordinate of $\E\left[\g\mathbb{I}(\|\g\|\leq\alpha)\right]$ is between $\E\left[\g_{[i]}\mathbb{I}(\|\g_{[i]}\|\leq 0)\right]$  and $\E\left[\g_{[i]}\mathbb{I}(\|\g_{[i]}\|\leq \alpha)\right]$
	\end{proof}
	%\subsection{Useful Lemmas}
	\vspace*{-0.2in}
	\section{Proof of Theorem~\ref{thm_clipping}}
	\label{appendix:mainthm}

	During the proof of Theorem~\ref{thm_clipping}, we need the following simple fact for our algorithm.
	
	\paragraph{Fact} Recall the definition of $\bar{\x}_t=\frac{1}{N}\sum_{i=1}^{N}\x_t^i$ for a fixed iteration $t$, Algorithm \ref{alg:main} immediately gives us $\bar{\x}_{t+1} - \bar{\x}_t = -\frac{1}{N}\sum_{i=1}^{N} \min\left(\eta,\frac{\gamma}{\|\nabla F(\x_t^i;\xi_t^i)\|}\right)\nabla F(\x_t^i;\xi_t^i)$, and hence $	\| \bar{\x}_{t+1} - \bar{\x}_t\|^2 \leq \gamma^2$ holds for any $t$.

	\subsection{Proof of Lemma~\ref{lemma:6}}
	\paragraph{Lemma~\ref{lemma:6} restated}
	Under Assumption~\ref{assumption1}, Algorithm~\ref{alg:main} ensures the following inequality almost surely:
	\begin{equation}
		\| \bar{\x}_t-\x_t^i\|  \leq 2\gamma I, \forall i,\forall t% \quad \mbox{almost surely}.
	\end{equation}
	
	\begin{proof}
		Fix $t\geq1$ and $i$. Note that Algorithm 1 calculates the node average every $I$ iterations. Let $t_0\leq t$ be the largest multiple of $I$. Then we know $\bar{\x}_{t_0} = \x_{t_0}^i$ for all $i$ (Note that $t-t_0\leq I$). We further notice that 
		\begin{equation*}
			\x_t^i = \bar{\x}_{t_0} - \sum_{\tau=t_0}^{t-1} \min\left( \eta,\frac{\gamma}{\|\nabla F(\x_{\tau}^i;\xi_{\tau}^i)\|}\right)\nabla F(\x_{\tau}^i;\xi_{\tau}^i).
		\end{equation*}
		By the definition of $\bar{\x}_t$, we have
		\begin{equation*}
			\bar{\x}_t = \bar{\x}_{t_0} - \sum_{\tau=t_0}^{t-1} \frac{1}{N}\sum_{j=1}^{N} \min\left( \eta,\frac{\gamma}{\|\nabla F(\x_{\tau}^j;\xi_{\tau}^j)\|}\right)\nabla F(\x_{\tau}^j;\xi_{\tau}^j).
		\end{equation*}
		Thus, we have
		\begin{equation}
			\begin{aligned}
				& \| \bar{\x}_t-\x_{t}^i\|^2 
				= \left\|  \sum_{\tau=t_0}^{t-1} \min\left( \eta,\frac{\gamma}{\|\nabla F(\x_{\tau}^i;\xi_{\tau}^i)\|}\right)\nabla F(\x_{\tau}^i,\xi_{\tau}^i) - \sum_{\tau=t_0}^{t-1} \frac{1}{N}\sum_{j=1}^{N} \min\left( \eta,\frac{\gamma}{\|\nabla F(\x_{\tau}^j,\xi_{\tau}^j)\|}\right)\nabla F(\x_{\tau}^j;\xi_{\tau}^j)\right\|^2 \\
				&\leq  2 \left\|  \sum_{\tau=t_0}^{t-1} \min\left( \eta,\frac{\gamma}{\|\nabla F(\x_{\tau}^i;\xi_{\tau}^i)\|}\right)\nabla F(\x_{\tau}^i;\xi_{\tau}^i)\right\|^2 + 2 \left\|\sum_{\tau=t_0}^{t-1} \frac{1}{N}\sum_{j=1}^{N} \min\left( \eta,\frac{\gamma}{\|\nabla F(\x_{\tau}^j;\xi_{\tau}^j)\|}\right)\nabla F(\x_{\tau}^j;\xi_{\tau}^j)\right\|^2 \\
				&\leq  2(t-t_0) \sum_{\tau=t_0}^{t-1}\left\| \min\left( \eta,\frac{\gamma}{\|\nabla F(\x_{\tau}^i; \xi_{\tau}^i)\|}\right)\nabla F(\x_{\tau}^i;\xi_{\tau}^i)\right\|^2
				+ \frac{2(t-t_0)}{N^2} \sum_{\tau=t_0}^{t-1} \left\|\sum_{j=1}^{N} \min\left( \eta,\frac{\gamma}{\|\nabla F(\x_{\tau}^j;\xi_{\tau}^j)\|}\right)\nabla F(\x_{\tau}^j;\xi_{\tau}^j)\right\|^2 \\
				&\leq  2\gamma^2(t-t_0)\sum_{\tau=t_0}^{t-1} \left\|\frac{\nabla F(\x_{\tau}^i,\xi_{\tau}^i)}{\|\nabla F(\x_{\tau}^i;\xi_{\tau}^i)\|}\right\|^2 + \frac{2(t-t_0)}{N^2}N\sum_{\tau=t_0}^{t-1}\sum_{j=1}^{N} \gamma^2\left\| \frac{\nabla F(\x_{\tau}^j;\xi_{\tau}^j)}{\|\nabla F(\x_{\tau}^j,\xi_{\tau}^j)\|}\right\|^2 \\
				&\leq  2\gamma^2I^2 + \frac{2(t-t_0)}{N}\sum_{\tau=t_0+1}^{t}\sum_{j=1}^{N}\gamma^2 \\
				&\leq 4\gamma^2 I^2.
			\end{aligned}
		\end{equation}
		where the first inequality holds since $\|\sum_{j=1}^{K}\z_j\|^2\leq K\sum_{j=1}^{K}\|\z_j\|^2$ for any $K$ vectors $\z_1,\ldots,\z_K$. Taking the square root on both sides, we obtain the desired result.
	\end{proof}

	\subsection{Proof of Lemma~\ref{lem:descent}}
	Note that $J(t)=\{i\in[N] \;|\; \|\nabla F(\x_t^i,\xi_t^i)\| \geq \frac{\gamma}{\eta}\}$. 
	\paragraph{Lemma~\ref{lem:descent} restated}
	If $AL_0\eta\leq 1/2$ and $2\gamma I\leq c/L_1$ for some $c>0$, then we have 
	\begin{align*}
		&\E \left[f(\bar{\x}_{t+1})-f(\bar{\x}_t)\right]\\
		\leq & \frac{1}{N}\E\sum_{i\in J(t)} \left[-\frac{2\gamma}{5}\|\nabla f(\bar{\x}_t)\| - \frac{3\gamma^2}{5\eta} + \frac{7\gamma}{5}\|\nabla F(\x_t^i;\xi_t^i)-\nabla f(\bar{\x}_t)\| + AL_0\gamma^2 + \frac{BL_1\gamma^2\|\nabla f(\bar{\x}_t)\|}{2}+\frac{50AL_0\eta^2\sigma^2}{N}\right] \\
		+& \frac{1}{N}\E\sum_{i\in\Bar{J}(t)}  \left[ -\frac{\eta c_{\min}}{2}\|\nabla f(\bar{\x}_t)\|^2 + 4\gamma^2I^2A^2 L_0^2\eta+4\gamma^2I^2B^2 L_1^2\eta\|\nabla f(\bar{\x}_t)\|^2 + \frac{50 AL_0 \eta^2\sigma^2}{N} + \frac{BL_1 \gamma^2\|\nabla f(\bar{\x}_t)\|}{2}\right],
	\end{align*}
	where $A=1+e^{c}-\frac{e^{c}-1}{c}$ and $B=\frac{e^{c}-1}{c}$.
	\begin{proof}
		By the $(L_0,L_1)$-smoothness and invoking Lemma~\ref{descent_inequality} and noting that $\|\Bar{\x}_{t+1}-\Bar{\x}_t\|\leq\gamma$ for any $t$, we have
		\begin{equation}
			\label{case2}
			\begin{aligned}
				&  f(\bar{\x}_{t+1})-f(\bar{\x}_t) 
				\leq  \langle \nabla f(\bar{\x}_t),\bar{\x}_{t+1}-\bar{\x}_t\rangle+\frac{\left(AL_0+BL_1\left\|\nabla f(\bar{\x}_t)\right\|\right)}{2}\| \bar{\x}_{t+1}-\bar{\x}_t \|^2 \\ 
				\leq& -\gamma\left\langle \frac{1}{N}\sum_{i\in J(t)}\frac{\nabla F(\x_t^i;\xi_t^i)}{\|\nabla F(\x_t^i;\xi_t^i)\|}, \nabla f(\bar{\x}_t)\right\rangle   -\eta\left\langle \frac{1}{N}\sum_{i\in \Bar{J}(t)}\nabla F(\x_t^i;\xi_t^i), \nabla f(\bar{\x}_t)\right\rangle \\
				&+\frac{AL_0}{2}\| \bar{\x}_{t+1}-\bar{\x}_t \|^2 + \frac{BL_1\|\nabla f(\bar{\x}_t)\|\gamma^2}{2}.
			\end{aligned}
		\end{equation}
		For each iteration $t$, we first consider the case where $i\in J(t)$, i.e. $\|\nabla F(\x_t^i;\xi_t^i)\|  \geq \frac{\gamma}{\eta}$. We use Lemma \ref{first_lemma} with $\mu=\frac{2}{5}$ to obtain the following result
		\begin{equation}
			\label{S_compJ}
			- \gamma \frac{\left\langle \nabla F(\x_t^i;\xi_t^i), \nabla f(\bar{\x}_t)\right\rangle}{\|\nabla F(\x_t^i;\xi_t^i)\|} \leq -\frac{2\gamma}{5}\|\nabla f(\bar{\x}_t)\| - \frac{3\gamma^2}{5\eta} + \frac{7\gamma}{5}\|\nabla F(\x_t^i;\xi_t^i)-\nabla f(\bar{\x}_t)\|. 
		\end{equation}
		%Taking the expectation for both sides of above inequality, then we obtain
		Taking the summation over $i\in J(t)$, we have
		\begin{equation}
			\label{expectation_J}
			\begin{aligned}
				&-\gamma\left\langle \frac{1}{N}\sum_{i\in J(t)}\frac{\nabla F(\x_t^i;\xi_t^i)}{\|\nabla F(\x_t^i;\xi_t^i)\|}, \nabla f(\bar{\x}_t)\right\rangle 
				\leq \frac{1}{N} \sum_{i\in J(t)}\left[ -\frac{2\gamma}{5}\|\nabla f(\bar{\x}_t)\| - \frac{3\gamma^2}{5\eta} + \frac{7\gamma}{5}\|\nabla F(\x_t^i;\xi_t^i)-\nabla f(\bar{\x}_t)\| \right].
			\end{aligned}
		\end{equation}
		
		Next, we consider the case in which $i\in \Bar{J}(t)$, i.e., $\|\nabla F(\x_t^i,\xi_t^i)\|  \leq \frac{\gamma}{\eta}$  at fixed iteration $t$.  Denote by $\xi^{[t-1]}$ the $\sigma$-algebra generated by all random variables until $(t-1)$-th iteration, i.e., $\xi^{[t-1]}=\sigma(\xi_0^1,\ldots,\xi_0^N,\xi_1^1,\ldots,\xi_1^N,\ldots,\xi_{t-1}^{1},\ldots,\xi_{t-1}^N)$. 
	
		Define $p_t^i=\text{Pr}(\|\nabla F(\x_t^i;\xi_t^i)\|\leq \gamma/\eta\ |\ \xi^{[t-1]})=\text{Pr}(i\in \bar{J}(t)\ |\ \xi^{[t-1]})$, and denote $\E_t[\cdot] = \E[\cdot\ |\ \xi^{[t-1]}]$. We notice that
		\begin{equation}
			\label{inner_product}
			\begin{aligned}
				% 1
				& -\eta \E_t\left\langle \frac{1}{N} \sum_{i\in \bar{J}(t)}\nabla F(\x_t^i;\xi_t^i), \frac{\E_t\left[|\bar{J}(t)|\right]}{N} \nabla f(\bar{\x}_t)\right\rangle\\
				% 2
				&=-\eta  \left\langle \E_t \left[\frac{1}{N}\sum_{i\in \bar{J}(t)}\nabla F(\x_t^i;\xi_t^i)\right], \frac{\E_t\left[|\bar{J}(t)|\right]}{N}\nabla f(\bar{\x}_t)\right\rangle\\
				% 3
				&=-\eta \left\langle \E_t \left[\frac{1}{N}\sum_{i=1}^{N}\nabla F(\x_t^i;\xi_t^i)\mathbb{I}(i\in \bar{J}(t))\right], \frac{\E_t\left[|\bar{J}(t)|\right]}{N}\nabla f(\bar{\x}_t)\right\rangle\\
				% 4
				&\stackrel{(a)}{=}-\eta \left\langle \frac{1}{N}\sum_{i=1}^{N}p_t^i \Lambda\nabla f(\x_t^i), \frac{\E_t\left[|\bar{J}(t)|\right]}{N} \nabla f(\bar{\x}_t)\right\rangle
				=-\eta \left\langle \frac{1}{N}\sum_{i=1}^{N}p_t^i\nabla f(\x_t^i), \frac{\E_t\left[|\bar{J}(t)|\right]}{N} \nabla f(\bar{\x}_t)\right\rangle_{ \Lambda}\\
				% 5
				&\stackrel{(b)}{=}-\frac{\eta}{2}\left[\left\|\frac{\E_t\left[|\bar{J}(t)|\right]}{N}\nabla f(\bar{\x}_t)\right\|_{ \Lambda}^2+\left\|\frac{1}{N}\sum_{i=1}^{N}p_t^i \nabla f(\x_t^i)\right\|_{ \Lambda}^2-\left\|\frac{1}{N}\sum_{i=1}^{N}p_t^i \nabla f(\x_t^i)-\frac{\E_t\left[|\bar{J}(t)|\right]}{N} \nabla f(\bar{\x}_t)\right\|_{\Lambda}^2\right],
			\end{aligned}
		\end{equation}
		where (a) holds due to Lemma~\ref{lem:new3}, $p_t^i=\text{Pr}(\|\nabla F(\x_t^i;\xi_t^i)\|\leq \gamma/\eta\ |\ \xi^{[t-1]})$, $\|\x\|_\Lambda=\|\Lambda^{1/2}\x\|_2$ and $\langle\x,\y\rangle_{\Lambda}=\x^\top\Lambda\y$ for any $\x,\y$;
		(b) holds since $\langle \x_1,\x_2\rangle_{\Lambda} = \frac{1}{2}\left(\|\x_1\|_{\Lambda}^2+\|\x_2\|_{\Lambda}^2-\|\x_1-\x_2\|_{\Lambda}^2\right)$ for any two vectors $\x_1,\x_2$. We further bound the RHS of~(\ref{inner_product}) as the following:
		\begin{equation}
			\label{inner_product_2}
			\small{\begin{aligned}
				% 1
				&\text{RHS of~(\ref{inner_product})}=  -\frac{\eta(\E_t|\Bar{J}(t)|)^2}{2N^2}\left\| \nabla f(\bar{\x}_t)\right\|_{\Lambda}^2 -\frac{\eta}{2}\left\| \frac{1}{N}\sum_{i=1}^{N}p_t^i \nabla f(\x_t^i)\right\|_{\Lambda}^2 + \frac{\eta}{2} \left\| \frac{1}{N}\sum_{i=1}^{N}p_t^i(\nabla f(\bar{\x}_t)-\nabla f(\x_t^i))\right\|_{\Lambda}^2 \\
				&	\stackrel{(c)}{\leq}  -\frac{\eta(\E_t|\Bar{J}(t)|)^2}{2N^2}\left\| \nabla f(\bar{\x}_t)\right\|_{\Lambda}^2 -\frac{\eta}{2}\left\| \frac{1}{N}\sum_{i=1}^{N}p_t^i \nabla f(\x_t^i)\right\|_{\Lambda}^2 + \frac{\eta}{2} \frac{(\E_t|\Bar{J}(t)|)^2}{N^2}\cdot \max_i\left\| (\nabla f(\bar{\x}_t)-\nabla f(\x_t^i))\right\|_{\Lambda}^2 \\
				&	\stackrel{(d)}{\leq}  -\frac{\eta(\E_t|\Bar{J}(t)|)^2}{2N^2}\left\| \nabla f(\bar{\x}_t)\right\|_{\Lambda}^2 -\frac{\eta}{2}\left\| \frac{1}{N}\sum_{i=1}^{N}p_t^i \nabla F(\x_t^i)\right\|_{\Lambda}^2 + \frac{\eta}{2} \frac{(\E_t|\Bar{J}(t)|)^2}{N^2} \left[ 4\gamma^2I^2c_{\max}\left(AL_0+BL_1\|\nabla f(\bar{\x}_t)\| \right)^2 \right]\\
				%	&		\stackrel{(e)}{\leq}  -\frac{\eta(\E_t|\Bar{J}(t)|)^2}{2N^2}\left\| \nabla f(\bar{\x}_t)\right\|_{\Lambda}^2 -\frac{\eta}{2}\left\| \frac{1}{N}\sum_{i=1}^{N}p_t^i \nabla F(\x_t^i)\right\|_{\Lambda}^2 + \frac{\eta}{2} \frac{(\E_t|\Bar{J}(t)|)^2}{N^2}\E \left[4\gamma^2I^2c_{\max}\left(2A^2L_0^2+2B^2L_1^2\|\nabla f(\bar{\x}_t)\|^2 \right)\right] \\
				&		\stackrel{(e)}{\leq}  -\frac{\eta c_{\min}(\E_t|\Bar{J}(t)|)^2}{2N^2}\left\| \nabla f(\bar{\x}_t)\right\|^2 - \frac{\eta}{2}\left\|\frac{1}{N}\sum_{i=1}^{N}p_t^i \nabla f(\x_t^i)\right\|_{\Lambda}^2 \\
				&\quad+ \frac{(\E_t|\Bar{J}(t)|)^2}{N^2}\left[ 4\gamma^2I^2A^2c_{\max}L_0^2\eta+4\gamma^2I^2B^2c_{\max}L_1^2\eta\|\nabla f(\bar{\x}_t)\|^2 \right],
			\end{aligned}}
		\end{equation}
		%where (a) holds due to Lemma~\ref{lem:new2} with $A_i=\nabla F(\x_t^i;\xi_t^i)$, $B_i=\mathbb{I}(i\in\bar{J}(t))$, $0\leq p_i\leq 1$ is the corresponding constant defined in Lemma~\ref{lem:new2} and $\sum_{i=1}^{N}p_i=\left[|\bar{J}(t)|\right]$; 
		
		where (c) holds due to $\sum_{i=1}^{N}p_t^i=\E_t|\Bar{J}(t)|$, and $\|\frac{1}{N}\sum_{i=1}^{N}p_t^i \a_t^i\|_{\Lambda}\leq \frac{1}{N}\sum_{i=1}^{N}p_t^i\|\a_t^i\|_{\Lambda} \leq \frac{1}{N}\sum_{i=1}^{N}p_t^i \cdot \max_i \|\a_t^i\|_{\Lambda} $ with $\a_t^i=\nabla f(\bar{\x}_t)-\nabla f(\x_t^i)$;
		%(c)  holds because  $\|\sum_{j=1}^{N}p_i\z_i\|^2\leq (\sum_{i=1}^{N}p_i^2)(\sum_{i=1}^{N}\|\z_i\|^2)$ and $\sum_{i=1}^{N}p_i^2\leq \sum_{i=1}^{N}p_i=\E_t|\Bar{J}(t)|$ due to $0\leq p_i\leq 1$; 
		(d) holds due to Lemma~\ref{lemma:6}, Lemma~\ref{L0_L1_diff}, and Lemma~\ref{lem:matrixnorm}, (e) holds since $(a+b)^2\leq 2a^2+2b^2$ and Lemma~\ref{lem:matrixnorm}.
		
		Divide both sides of~(\ref{inner_product}) and~(\ref{inner_product_2}) by $\frac{\E_t\left[|\bar{J}(t)|\right]}{N}$ we have:
		\begin{equation}
			\label{innerproduct_final}
			\begin{aligned}
				&-\eta \E_t\left\langle \frac{1}{N} \sum_{i\in \bar{J}(t)}\nabla F(\x_t^i;\xi_t^i), \nabla f(\bar{\x}_t)\right\rangle
				\le-\frac{\eta c_{\min}\E_t\left[|\bar{J}(t)|\right]}{2N}\left\| \nabla f(\bar{\x}_t)\right\|^2
				- \frac{\eta N}{2\E_t\left[|\bar{J}(t)|\right]}\left\|\frac{1}{N}\sum_{i=1}^{N}p_t^i \nabla f(\x_t^i)\right\|_{\Lambda}^2\\
				&+ \frac{\E_t\left[|\Bar{J}(t)|\right]}{N}\left[ 4\gamma^2I^2A^2c_{\max}L_0^2\eta+4\gamma^2I^2B^2c_{\max}L_1^2\eta\|\nabla f(\bar{\x}_t)\|^2 \right].
			\end{aligned}
		\end{equation}
		Next, we compute $\E_t \|\bar{\x}_{t+1}-\bar{\x}_t\|^2$. By definition, we have
		\begin{equation}	
			\label{xbar_diff}
			\begin{aligned} 
				% 1
				& \E_t \|\bar{\x}_{t+1}-\bar{\x}_t\|^2
				= \E_t \left\| \frac{1}{N}\sum_{i=1}^{N}\min\left(\eta,\frac{\gamma}{\|\nabla F(\x_t^i;\xi_t^i)\|}\right)\nabla F(\x_t^i;\xi_t^i)\right\|^2 \\
				% 2
				&=  \frac{1}{N^2} \E_t  \left\|\sum_{i=1}^{N}\min\left(\eta,\frac{\gamma}{\|\nabla F(\x_t^i;\xi_t^i)\|}\right)\nabla F(\x_t^i,\xi_t^i)\right\|^2 
				= \frac{1}{N^2} \E_t  \left\|\gamma\sum_{i\in J(t)}\frac{\nabla F(\x_t^i;\xi_t^i)}{\|\nabla F(\x_t^i;\xi_t^i)\|} + \eta\sum_{i\in \Bar{J}(t)}\nabla F(\x_t^i;\xi_t^i) \right\|^2 \\
				% 3
				&\leq  2\gamma^2 \E_t \left\|\frac{1}{N} \sum_{i\in J(t)}\frac{\nabla F(\x_t^i;\xi_t^i)}{\|\nabla F(\x_t^i;\xi_t^i)\|} \right\|^2 + 2\eta^2\E_t\left\| \frac{1}{N}\sum_{i=1}^{N}\nabla F(\x_t^i;\xi_t^i)\mathbb{I}\left(i \in \Bar{J}(t)\right)\right\|^2 \\
				% 4
				&\leq  \frac{2\gamma^2}{N^2} \E_t \left\| \sum_{i\in J(t)}\frac{\nabla F(\x_t^i;\xi_t^i)}{\|\nabla F(\x_t^i;\xi_t^i)\|} \right\|^2 + 2\eta^2\E_t\left\| \frac{1}{N}\sum_{i=1}^{N}\nabla F(\x_t^i;\xi_t^i)\mathbb{I}\left(i\in \Bar{J}(t)\right)-\E_t\left[\frac{1}{N}\sum_{i=1}^{N}\nabla F(\x_t^i;\xi_t^i)\mathbb{I}\left(i\in \Bar{J}(t)\right)\right]\right\|^2\\
				% 5
				&\quad +2\eta^2\left\|\E_t\left(\frac{1}{N}\sum_{i=1}^{N}\nabla F(\x_t^i;\xi_t^i)\mathbb{I}\left(i\in \Bar{J}(t)\right)\right)\right\|^2\\
				%	&\stackrel{(a)}{\leq} \frac{2\gamma^2(\E|{J}(t)|)^2}{N^2}+2\eta^2\E\left\| \frac{1}{N}\sum_{i=1}^{N}\nabla F(\x_t^i,\xi_t^i)-\E\left[\frac{1}{N}\sum_{i=1}^{N}\nabla F(\x_t^i,\xi_t^i)\right]\right\|^2+2\eta^2\left\|\E\left(\frac{1}{N}\sum_{i=1}^{N}\nabla F(\x_t^i,\xi_t^i)\mathbb{I}\left(i\in \Bar{J}(t)\right)\right)\right\|^2\\
				% 6
				&\stackrel{(a)}{\leq}  \frac{2\gamma^2\E_t\left[|{J}(t)|^2\right]}{N^2}+\frac{100\eta^2\sigma^2}{N}+2\eta^2\left\|\E_t\left(\frac{1}{N}\sum_{i=1}^{N}\nabla F(\x_t^i;\xi_t^i)\mathbb{I}\left(i\in \Bar{J}(t)\right)\right)\right\|^2\\
				% 7
				&\stackrel{(b)}{=}\frac{2\gamma^2\E_t\left[|{J}(t)|^2\right]}{N^2}+\frac{100\eta^2\sigma^2}{N}+2\eta^2\left\|\frac1N\sum_{i=1}^{N}p_t^i\Lambda\nabla  f(\x_t^i)\right\|^2,
			\end{aligned}
		\end{equation}
		%where (a) holds due to Lemma~\ref{lem:new} with $A_i=\nabla F(\x_t^i;\xi_t^i)$, $B_i=\mathbb{I}\left(i\in\bar{J}(t)\right)$;
		where (a) holds since we have $\E_t\left\|\nabla F(\x_t^i;\xi_t^i)\mathbb{I}(i\in \bar{J}(t))-\E_t\left[\nabla F(\x_t^i;\xi_t^i)\mathbb{I}(i\in \bar{J}(t))\right] \right\|^2\leq 100\sigma^2$, which is further due to $\|\nabla F(\x_t^i;\xi_t^i)\mathbb{I}(i\in \bar{J}(t))\|\leq \gamma/\eta=5\sigma$ almost surely; (b) holds due to Lemma~\ref{lem:new3}.

		Substituting \eqref{expectation_J}, \eqref{innerproduct_final}, and \eqref{xbar_diff} into \eqref{case2} yields
		{\allowdisplaybreaks
			\begin{equation}
				\label{equation:123}
				\begin{aligned}
					% 1
					& \E_t [f(\bar{\x}_{t+1})-f(\bar{\x}_t)] \\
					&\leq \frac{1}{N}\E_t\sum_{i\in J(t)} \left[ -\frac{2\gamma}{5}\left\|\nabla f(\bar{\x}_t)\right\|
					- \frac{3\gamma^2}{5\eta}
					+ \frac{7\gamma}{5}\left\|\nabla F(\x_t^i;\xi_t^i)-\nabla f(\bar{\x}_t)\right\| \right]  -\frac{\eta c_{\min}\E_t|\Bar{J}(t)|}{2N}
					\| \nabla f(\bar{\x}_t)\|^2 \\
					& -\frac{\eta}{2}\frac{N}{\E_t|\Bar{J}(t)|}\left\| \frac{1}{N}\sum_{i=1}^{N}p_t^i\nabla F(\x_t^i)\right\|_{\Lambda}^2 + \frac{\E_t|\Bar{J}(t)|}{N}\left[ 4\gamma^2I^2A^2c_{\max}L_0^2\eta+4\gamma^2I^2B^2c_{\max}L_1^2\eta\|\nabla f(\bar{\x}_t)\|^2\right] \\
					&+\frac{AL_0\gamma^2\E_t\left[|{J}(t)|^2\right]}{N^2}+ \frac{50 AL_0\eta^2 \sigma^2}{N} 
					+ AL_0\eta^2\left\|\frac1N\sum_{i=1}^{N}p_t^i\Lambda\nabla  F(\x_t^i)\right\|^2 + \frac{BL_1\gamma^2\left\|\nabla f(\bar{\x}_t)\right\|}{2}\\
					%	\leq & \textcolor{black}{start from here}\\
					&\leq  \frac{1}{N}\E_t\sum_{i\in J(t)} \left[-\frac{2\gamma}{5}\|\nabla f(\bar{\x}_t)\| - \frac{3\gamma^2}{5\eta} + \frac{7\gamma}{5}\|\nabla F(\x_t^i;\xi_t^i)-\nabla f(\bar{\x}_t)\| + AL_0\gamma^2 + \frac{BL_1\gamma^2\|\nabla f(\bar{\x}_t)\|}{2}+\frac{50 AL_0 \eta^2\sigma^2}{N}\right] \\
					&+ \frac{1}{N}\E_t\sum_{i\in\Bar{J}(t)}\left[ -\frac{\eta c_{\min}}{2}\|\nabla f(\bar{\x}_t)\|^2 + 4\gamma^2I^2A^2L_0^2\eta+4\gamma^2I^2B^2L_1^2\eta\|\nabla f(\bar{\x}_t)\|^2 + \frac{50 AL_0\eta^2\sigma^2}{N} + \frac{BL_1\gamma^2\|\nabla f(\bar{\x}_t)\|}{2}\right],
				\end{aligned}
		\end{equation}}
		where the last inequality holds since $AL_0\eta\leq\frac{1}{2}$, $|\bar{J}(t)| \le N$ and $|J(t)| \le N$ almost surely, $\E_t\left[|J(t)|^2\right]\leq N\E_t\left[|J(t)|\right]$, and all diagonal entries of $\Lambda$ are positive and not greater than 1, and $0<c_{\max}\leq 1$.
		
		Take the total expectation over inequality~\eqref{equation:123} completes the proof.
	\end{proof}

	\subsection{Proof of Lemma~\ref{sigma_term}}
	\paragraph{Lemma~\ref{sigma_term} restated} %\textcolor{red}{Assume $2\gamma I\leq c_2/L_1$ for some $c_2>0$, then we denote $\Tilde{A}=1+e^{c_2}-\frac{e^{c_2}-1}{c_2}$ and $\Tilde{B}=\frac{e^{c_2}-1}{c_2}$, just the same as descent inequality (Lemma \ref{descent_inequality}).}
	Suppose Assumption~\ref{assumption1} holds. 
	If $2\gamma I\leq c/L_1$ for some $c>0$, then we obtain
	\begin{equation*}
		\|\nabla F(\x_t^i;\xi_t^i)-\nabla f(\bar{\x}_t)\| 
		\leq  \sigma  + 2\gamma I(AL_0+BL_1\|\nabla f(\bar{\x}_t)\|)\quad \mbox{ almost surely},
	\end{equation*}
	where $A=1+e^{c}-\frac{e^{c}-1}{c}$ and $B=\frac{e^{c}-1}{c}$.
	\begin{proof}
		By using triangular inequality and Lemma \ref{L0_L1_diff}, we obtain
		\begin{align*}
			& \|\nabla F(\x_t^i;\xi_t^i)-\nabla f(\bar{\x}_t)\| 
			\leq  \|\nabla F(\x_t^i;\xi_t^i)-\nabla f(\x_t^i)  \| + \|\nabla f(\x_t^i) - \nabla f(\bar{\x}_t) \|  \\
			%&\leq  \sigma + \|\nabla F(\x_t^i) - \nabla f(\x_t^i) \| + 2\gamma I(AL_0+BL_1\|\nabla f(\bar{\x}_t)\|)
			&\leq \sigma + 2\gamma I(AL_0+BL_1\|\nabla f(\bar{\x}_t)\|).
		\end{align*}
		% Summing over $i=1$ to $N$ and by Assumption~\ref{assumption1}, we have
		% \begin{align*}
		% 	 \frac{1}{N}\sum_{i=1}^{N}\|\nabla F(\x_t^i;\xi_t^i)-\nabla f(\bar{\x}_t)\| 
		% 	\leq  \sigma + 2\gamma I(AL_0+BL_1\|f(\bar{\x}_t)\|).
		% \end{align*}
	\end{proof}
	\subsection{Main Proof of Theorem \ref{thm_clipping}}
	\begin{proof}
		We consider $0<\epsilon\leq\min(\frac{AL_0}{BL_1},0.1)$ be a small enough constant and assume $\sigma\geq 1$. Choose $N\leq\min\{\frac{1}{\epsilon},\frac{14AL_0}{5BL_1\epsilon}\}$, $\gamma\leq c_{\min}\frac{N\epsilon}{28\sigma}\min\{ \frac{\epsilon}{AL_0}, \frac{1}{BL_1}\}$, ratio $\gamma/\eta = 5 \sigma$ and $I\leq \sqrt{\frac{1}{c_{\min}}}\frac{\sigma}{N\epsilon}$, we first check several important conditions which were used in previous Lemmas and we will use in our later proof.
		\begin{equation*}
			\begin{aligned}
				AL_0\eta \leq AL_0\frac{c_{\min}N\epsilon}{5\sigma\cdot28\sigma}\frac{\epsilon}{AL_0} 
				\overset{(a)}{\leq} \frac{c_{\min}\epsilon}{140\sigma^2}\overset{(b)}{\leq}& \frac{\epsilon}{140} \leq \frac{1}{2},
			\end{aligned}
		\end{equation*}
		where (a) holds because of $N\epsilon\leq1$ and (b) is true due to $\sigma\geq 1$ and $c_{\min}\leq 1$. Note that $N\epsilon\leq 1$, $\sigma\geq 1$, $A\geq 1$, $B\geq 1$, and so we have
		\begin{equation*}
			2\gamma I \leq  2c_{\min}\frac{N\epsilon}{28\sigma}\frac{1}{BL_1}\cdot\sqrt{\frac{1}{c_{\min}}}\frac{7\sigma}{N\epsilon} \leq \frac{1}{2L_1} = \frac{c}{L_1},
		\end{equation*}
		where $c=\frac{1}{2}$. Recall the descent inequality (Lemma \ref{descent_inequality}), we can explicitly define
		\begin{align*}
			& A = 1+e^{1/2} - \frac{e^{1/2}-1}{1/2}, \\
			& B = \frac{e^{1/2}-1}{1/2}.
		\end{align*}
		We can show that our choice of $\epsilon$ guarantees that $\frac{\epsilon}{AL_0}\leq\frac{1}{BL_1}$. Then, the upper bound of the parameter $\gamma$ becomes $\frac{c_{\min}N\epsilon^2}{28AL_0\sigma}$.
		Based on Lemma \ref{lem:descent} and Lemma \ref{sigma_term}, we take summation over $t$ and obtain
		\begin{equation}
			\label{summation_over_t}
			\begin{aligned}
				& \E \left[\sum_{t=0}^{T-1} f(\bar{\x}_{t+1}) - f(\bar{\x}_t)\right] \\%= \E \left[\sum_{t\in S} \left[f(\bar{\x}_{t+1}) - f(\bar{\x}_t)\right]\right] + \E \left[\sum_{t\in \Bar{S}} \left[f(\bar{\x}_{t+1}) - f(\bar{\x}_t)\right]\right] \\
				\stackrel{(a)}{\leq} &  \frac{1}{N}\sum_{t=0}^{T-1}\E\sum_{i\in J(t)} \left[ P_1 + P_2\|\nabla f(\bar{\x}_t)\|\right] +\\
				& \frac{1}{N}\sum_{t=0}^{T-1}\E\sum_{i\in\Bar{J}(t)}  \left[ -\frac{\eta c_{\min}}{2}\|\nabla f(\bar{\x}_t)\|^2 +  4\gamma^2I^2A^2L_0^2\eta+4\gamma^2I^2B^2L_1^2\eta\|\nabla f(\bar{\x}_t)\|^2 + \frac{50 AL_0\eta^2\sigma^2}{N}+\frac{BL_1\gamma^2\|\nabla f(\bar{\x}_t)\|}{2}\right] \\
				\stackrel{(b)}{\leq} & \frac{1}{N}\sum_{t=0}^{T-1}\E\sum_{i\in J(t)} \left[ P_1 + P_2\|\nabla f(\bar{\x}_t)\|\right] \\
				& + \frac{1}{N}\sum_{t=0}^{T-1}\E\sum_{i\in\Bar{J}(t)}  \left[ -\frac{\eta c_{\min}}{4}\|\nabla f(\bar{\x}_t)\|^2 +  4\gamma^2I^2A^2L_0^2\eta+ \frac{50 AL_0\eta^2\sigma^2}{N}+\frac{BL_1\gamma^2\|\nabla f(\bar{\x}_t)\|}{2}\right],
			\end{aligned}
		\end{equation}
		where (a) holds due to Lemma~\ref{lem:descent}, and \begin{align*}
			%	& P_1 = -\frac{3\gamma^2}{5\eta} + \frac{\gamma^2AL_0}{2} + \frac{7\gamma}{5}(\sigma+\kappa) + \frac{14\gamma^2IAL_0}{5} \\
			%	& P_2 = -\frac{2\gamma}{5} +\frac{\gamma^2BL_1}{2} + \frac{14\gamma^2IbL_1}{5} \\
			& P_1 = -\frac{3\gamma^2}{5\eta} + \frac{7\gamma}{5}\sigma + \frac{14\gamma^2IAL_0}{5} + AL_0\gamma^2+\frac{50 AL_0\eta^2\sigma^2}{N}, \\
			& P_2 = -\frac{2\gamma}{5} + \frac{14\gamma^2IBL_1}{5} + \frac{BL_1\gamma^2}{2},
		\end{align*}
		(b) is derived by computing
		\begin{equation}
			4\gamma^2I^2B^2L_1^2\eta \leq \frac{4N^2\epsilon^4I^2B^2c_{\min}^2L_1^2\eta }{28^2\sigma^2A^2L_0^2}\leq \frac{4B^2c_{\min}^2L_1^2\epsilon^4\eta }{28^2\sigma^2A^2L_0^2}\underbrace{\frac{28^2A^2L_0^2}{25B^2L_1^2\epsilon^2}}_{\mbox{ bound of } N^2} \underbrace{\frac{\sigma^2}{c_{\min}N^2\epsilon^2}}_{\mbox{bound of }I^2} \leq \frac{\eta}{4}c_{\min}
		\end{equation}
		with $N\leq\min\{\frac{1}{\epsilon},\frac{14AL_0}{5BL_1\epsilon}\}$, $\gamma\leq c_{\min}\frac{N\epsilon}{28\sigma}\min\{ \frac{\epsilon}{AL_0}, \frac{1}{BL_1}\}$, and $I\leq \sqrt{\frac{1}{c_{\min}}}\frac{\sigma}{N\epsilon}$.
		
		Now we give the upper bound of $P_1$ and $P_2$.
		
		From the choice of $N\leq\min\{\frac{1}{\epsilon},\frac{14AL_0}{5BL_1\epsilon}\}$, $\gamma\leq c_{\min}\frac{N\epsilon}{28\sigma}\min\{ \frac{\epsilon}{AL_0}, \frac{1}{BL_1}\}$, $I\leq \sqrt{\frac{1}{c_{\min}}}\frac{\sigma}{N\epsilon}$ and the fixed ratio $\frac{\gamma}{\eta}=5 \sigma$, and noting that $\sigma\geq 1$, we have
		\begin{equation}
			\label{u_x_upper_bound}
			\begin{aligned}
				%	& P_1 \leq \left(-\frac{3}{5}\cdot 5(\sigma+\kappa) + \frac{\epsilon}{56} + \frac{7}{5}(\sigma+\kappa) + \frac{\epsilon}{20}\right)\gamma \leq -\frac{3}{10}\gamma(\sigma+\kappa) \\
				%	& P_2 \leq \left(-\frac{2}{5} + \frac{\epsilon}{10} + \frac{1}{20}\right) \gamma \leq -\frac{1}{10}\gamma \\
				& P_1 \leq \left(-\frac{3}{5}\cdot 5\sigma + \frac{7}{5}\sigma + \frac{\epsilon}{10} + \frac{\epsilon}{28}\right)\gamma+\frac{50 AL_0\eta^2\sigma^2}{N} \leq -\frac{3}{10}\gamma \sigma +\frac{50 AL_0\eta^2\sigma^2}{N},\\
				& P_2 \leq \left(-\frac{2}{5} + \frac{1}{10} +\frac{\epsilon}{10}\right)\gamma \leq -\frac{1}{10}\gamma.
			\end{aligned}
		\end{equation}
		
		Plugging \eqref{u_x_upper_bound} back to \eqref{summation_over_t}, we obtain
		\begin{equation*}
			-\Delta \leq -( f(\bar{\x}_0) - f_*) \leq \E f(\bar{\x}_T) - f(\bar{\x}_0) \leq \E \left[ \frac{1}{N}\sum_{t=0}^{T-1}\sum_{i\in J(t)} U(\bar{\x}_t) + \frac{1}{N}\sum_{t=0}^{T-1}\sum_{i\in \Bar{J}(t)} V(\bar{\x}_t) \right],
		\end{equation*}
		where 
		\begin{align*}
			U(\bar{\x}_t) &= -\frac{1}{10}\gamma\| \nabla f(\bar{\x}_t)\| - \frac{3}{10}\gamma \sigma +\frac{50 AL_0\eta^2\sigma^2}{N},\\
			%U_2(\bar{\x}_t) &= -\frac{1}{10}\gamma\| \nabla f(\bar{\x}_t)\| - \frac{3}{10}\gamma(\sigma+\kappa) \\
			V(\bar{\x}_t) &= -\frac{\eta c_{\min}}{4}\|\nabla f(\bar{\x}_t)\|^2 +  4\gamma^2I^2 A^2L_0^2\eta+ \frac{50 AL_0\eta^2\sigma^2}{N}+\frac{BL_1\gamma^2\|\nabla f(\bar{\x}_t)\|}{2}\\
			&\overset{(a)}{\leq} -\frac{\eta c_{\min}\epsilon}{2}\| \nabla f(\bar{\x}_t)\| + \frac{BL_1\gamma^2\|\nabla f(\bar{\x}_t)\|}{2}  + \frac{\eta c_{\min}\epsilon^2}{4} + 4\gamma^2I^2A^2L_0^2\eta +  50AL_0\frac{\eta^2}{N} \sigma^2 \\
			&\overset{(b)}{\leq} -\frac{\eta c_{\min}\epsilon}{4}\| \nabla f(\bar{\x}_t)\| + \frac{\eta c_{\min}\epsilon^2}{4} + 4\gamma^2I^2A^2L_0^2\eta +  50AL_0\frac{\eta^2}{N} \sigma^2.
		\end{align*}
		Inequality (a) follows by using the standard inequality $x^2\geq2\epsilon x-\epsilon^2$
		and (b) is true because 
		\begin{equation}
			\label{eqn:check}
			-\frac{\eta c_{\min}\epsilon}{2} + \frac{BL_1\gamma^2}{2} \leq \frac{-\eta c_{\min}\epsilon}{4}.  
		\end{equation}
		To check (\ref{eqn:check}), note that $\gamma\leq c_{\min}\frac{N\epsilon^2}{28AL_0\sigma}$, we compute
		\begin{equation}
			BL_1\gamma^2 \leq BL_1 \cdot5 \eta\sigma\frac{c_{\min}N\epsilon^2}{28AL_0 \sigma} = \eta\frac{5B c_{\min}L_1N\epsilon^2 }{28AL_0} \leq \frac{\eta c_{\min}\epsilon}{2},
		\end{equation}
		where the last inequality holds because we assume $N\leq\frac{14AL_0}{5BL_1\epsilon}$.
		
		It's clear that $U(\x) \leq V(\x)$. Then we obtain
		\begin{align*}
			-\Delta \leq -( f(\bar{\x}_0) - f_*) \leq \E \left[ \sum_{t=0}^{T-1}V(\bar{\x}_t)\right] &\leq \sum_{t=0}^{T-1}\left[-\frac{\eta c_{\min}\epsilon}{4}\E\| \nabla f(\bar{\x}_t)\| + \frac{\eta c_{\min}\epsilon^2}{4} + 4\gamma^2I^2A^2 L_0^2\eta +  50AL_0\frac{\eta^2}{N} \sigma^2\right].
		\end{align*}
		Then we rearrange the above inequality, take $T \geq \frac{1}{c_{\min}^2}\frac{560AL_0\Delta\sigma^2 }{N\epsilon^4}$ and $I\leq \sqrt{\frac{1}{c_{\min}}}\frac{\sigma}{N\epsilon}$. Then we can guarantee that
		\begin{equation*}
			\begin{aligned}
				&\frac{4\Delta}{\epsilon\eta c_{\min} T}\leq \epsilon, \frac{4AL_0\eta \sigma^2}{c_{\min}N\epsilon}\leq \frac{4\epsilon}{28\times 5}=\frac{\epsilon}{35}
				,\frac{16\gamma^2I^2A^2  L_0^2}{c_{\min}\epsilon} \leq \epsilon,
			\end{aligned}
		\end{equation*}
		and hence we have
		\begin{equation*}
			\frac{1}{T}\sum_{t=1}^{T}\E\| \nabla f(\bar{\x}_t)\| \leq \frac{4\Delta}{\epsilon\eta c_{\min} T} + \epsilon + \frac{200AL_0\eta\sigma^2}{c_{\min}N\epsilon} +  \frac{16\gamma^2I^2A^2  L_0^2}{c_{\min}\epsilon} \leq 9\epsilon.
		\end{equation*}
		The theorem is proved.
	\end{proof}

	\newpage
	\section{More Experiment Results}
	\label{sec:more_exp}
	\subsection{CIFAR-10 classification with ResNet-32}
	\label{ssec:cifar10resnet32}

	Apart from the 56 layer Resnet used in Section~\ref{sec:exp}, we also trained the 32 layer Resnet on CIFAR-10. Here, we used a smaller minibatch size of 16 per GPU. We also used SGD with clipping for the baseline algorithm with a stagewise decaying learning rate schedule, We set the initial learning rate $\eta=0.1$ and the clipping parameter $\gamma=1.0$. We decrease both $\eta$ and $\gamma$ by a factor of $10$ at epoch $80$ and $120$. These parameter settings follow that of~\cite{zhang2020improved}.
	
	Results reported in Figure~\ref{fig:cifar10resnet32comp} again show that our algorithm can not only match the baseline in both training loss and test accuracy epoch-wise but is way better in terms of wall-clock time for moderate values of $I$.

	\subsection{Parallel Speedup on NLP Tasks}
	\label{ssec:para_nlp}
	In Section~\ref{ssec:para_cifar} we have shown the parallel speedup effects of our algorithm on training a Resnet model on the CIFAR10 dataset. Here we present the same phenomena but on training AWD-LSTMs on doing language modeling on Penn Treebank and Wikitext-2 in Figure~\ref{fig:para_nlp}.
	
	Again, note that in the distributed setting, one iteration means running one step of Algorithm~\ref{alg:main} on all machines; while in the single machine setting, one iteration means running one step of SGD with clipping.
	For Penn Treebank, we use minibatch size $3$ on every GPU in the distributed setting to run Algorithm~\ref{alg:main}, while we also use $3$ minibatch size on the single GPU to run SGD with clipping. For Wikitext-2, the corresponding minibatch size is $10$.
	
	Figure~\ref{fig:para_nlp} again clearly shows that even with $I>1$, our algorithm still enjoys parallel speedup, since our algorithm requires less number of iterations to converge to the same targets (e.g., training loss, validation perplexity).
	This observation is consistent with our iteration complexity results in Theorem~\ref{thm_clipping}.
	\begin{figure}[t]
		\centering
		\begin{subfigure}[b]{0.48\linewidth}
			\includegraphics[width=\linewidth]{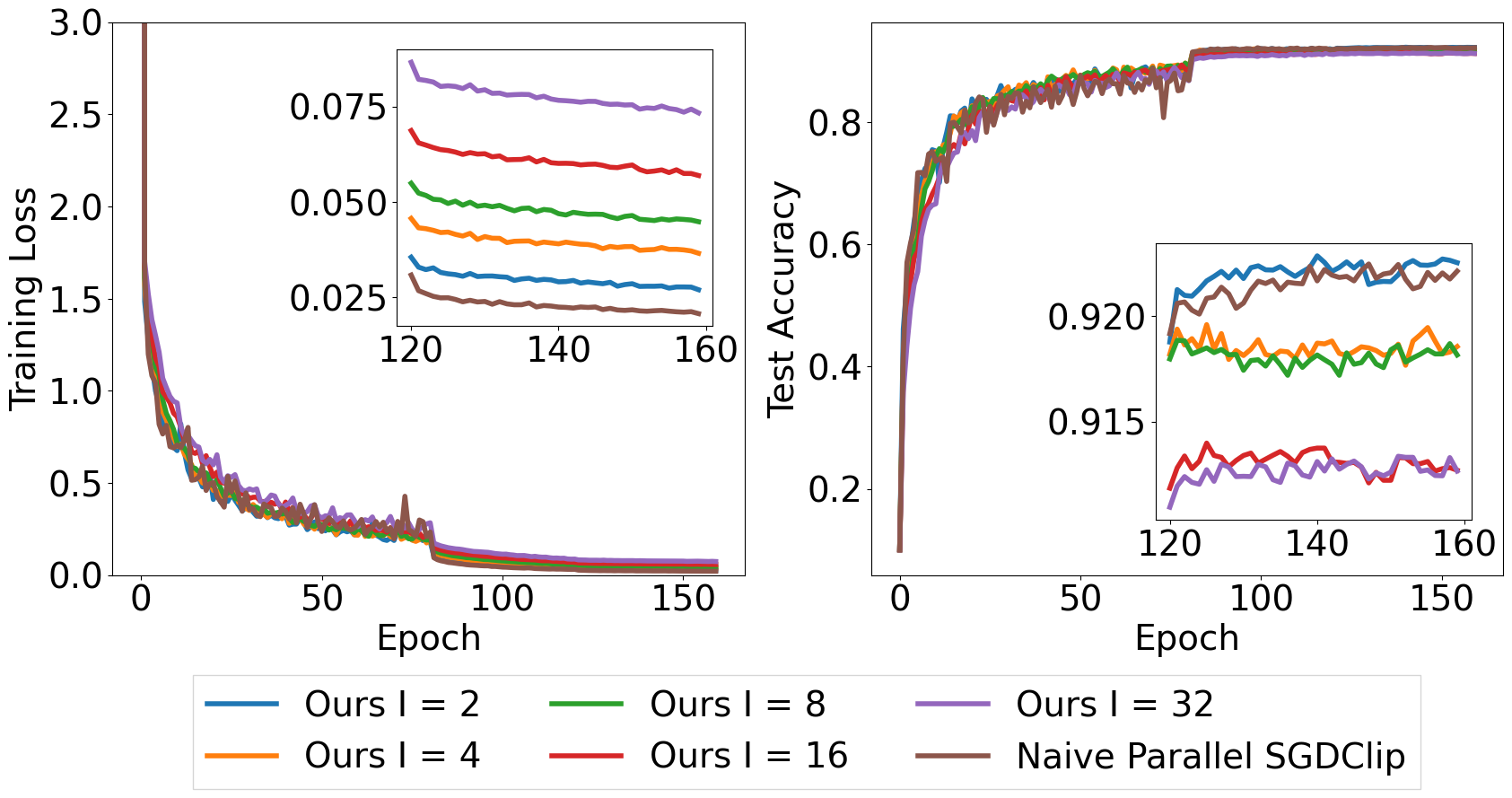}
			\caption{Over epoch}
		\end{subfigure}
		\hfill
		\begin{subfigure}[b]{0.48\linewidth}
			\includegraphics[width=\linewidth]{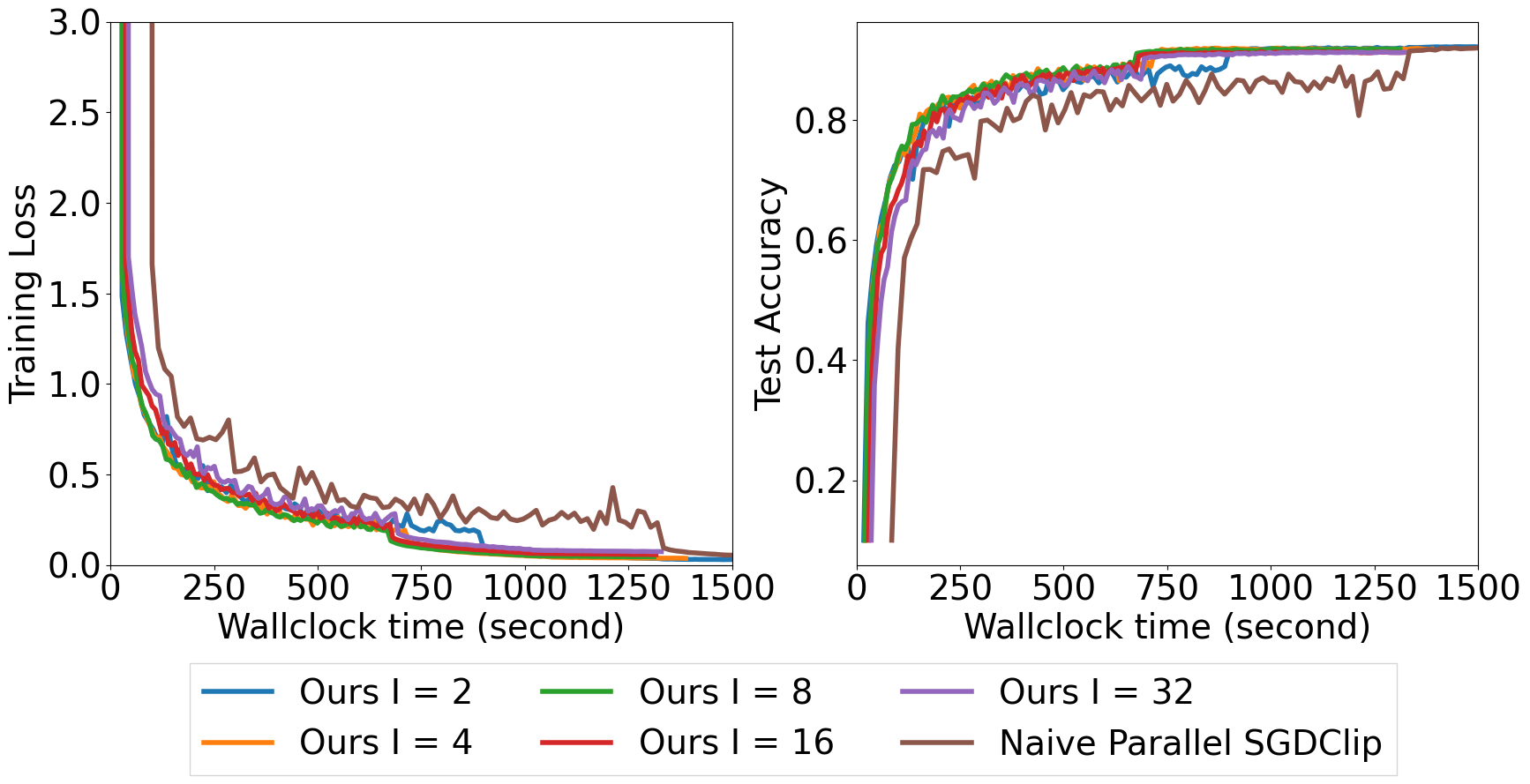}
			\caption{Over wall clock time}
		\end{subfigure}
		\caption{Training loss and test accuracy v.s.~(Left) epoch and (right) wall clock time on training a 32 layer Resnet to do image classification on CIFAR10.}
		\label{fig:cifar10resnet32comp}
	\end{figure}
	\begin{figure}[t]
		\centering
		\begin{subfigure}[b]{0.48\linewidth}
			\includegraphics[width=\linewidth]{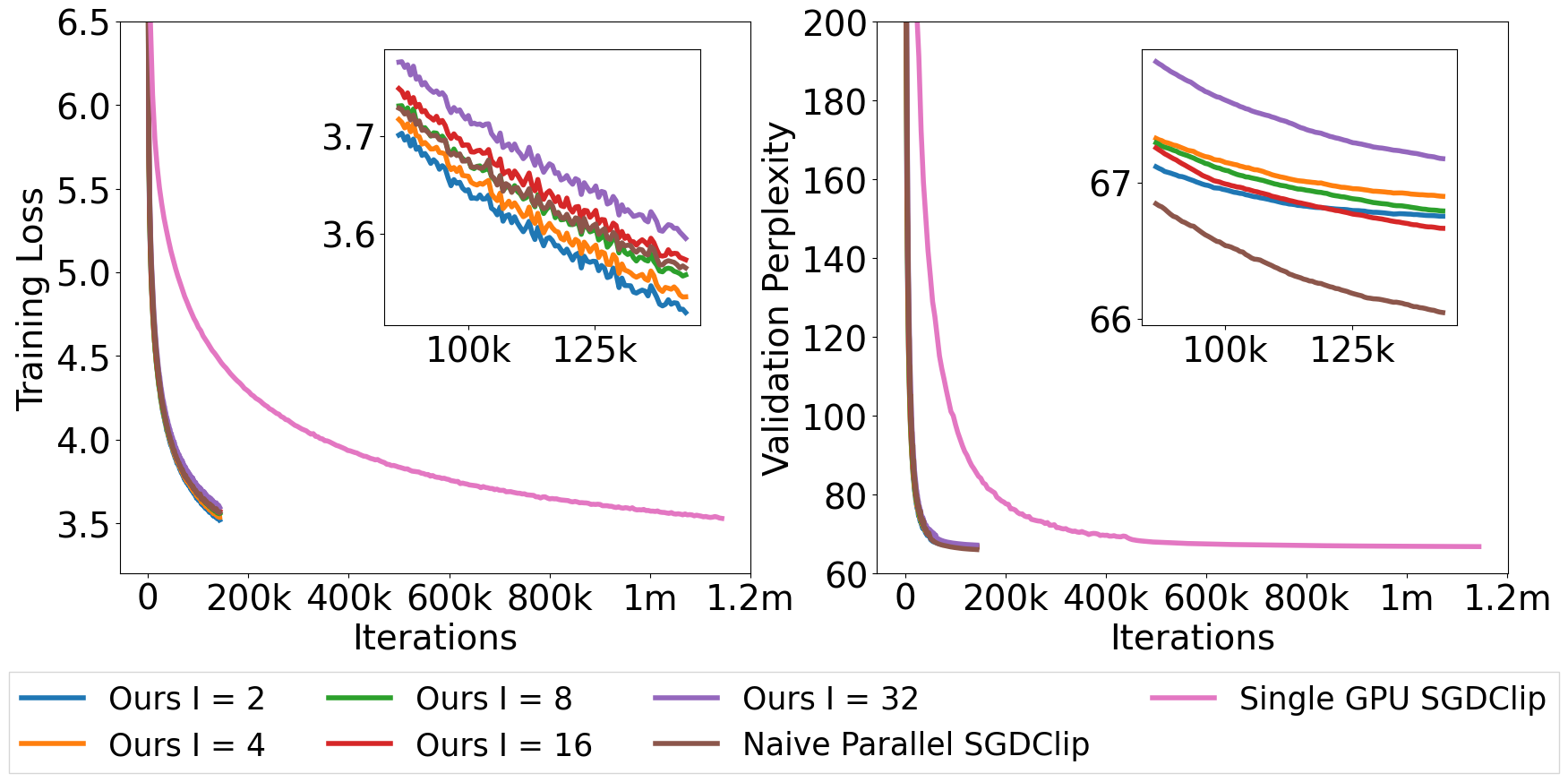}
			\caption{Penn Treebank}
			\label{fig:para_penn}
		\end{subfigure}
		\hfill
		\begin{subfigure}[b]{0.48\linewidth}
			\includegraphics[width=\linewidth]{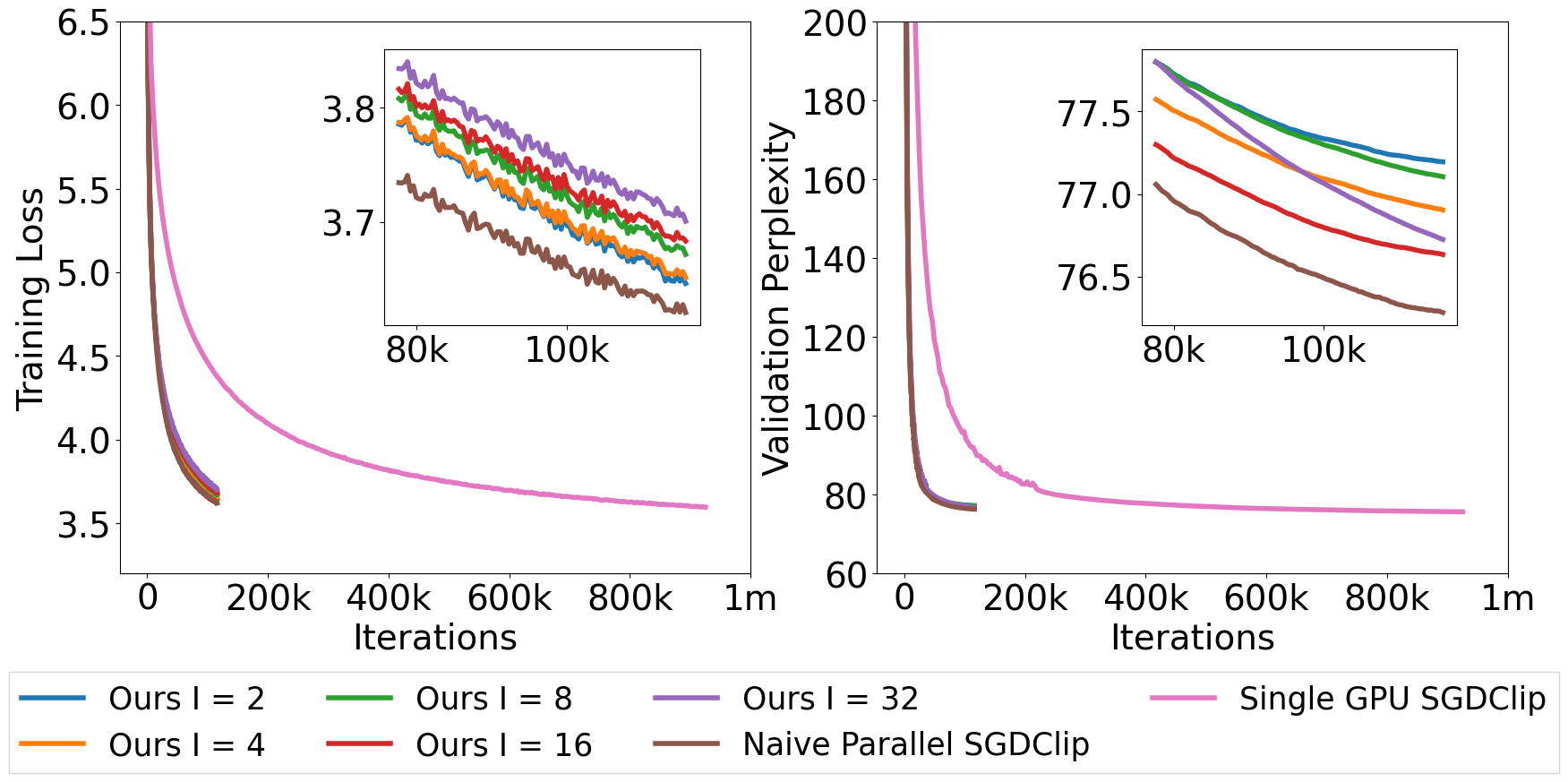}
			\caption{Wikitext-2}
			\label{fig:para_wiki}
		\end{subfigure}
		\caption{Performance v.s.~\# of iterations each GPU runs on training AWD-LSTMs to do language modeling on Penn Treebank (left) and Wikitext-2 (right) showing the parallel speedup.}
		\label{fig:para_nlp}
	\end{figure}

	\begin{figure}[t]
	\centering
	\begin{subfigure}[b]{0.48\linewidth}
		\includegraphics[width=\linewidth]{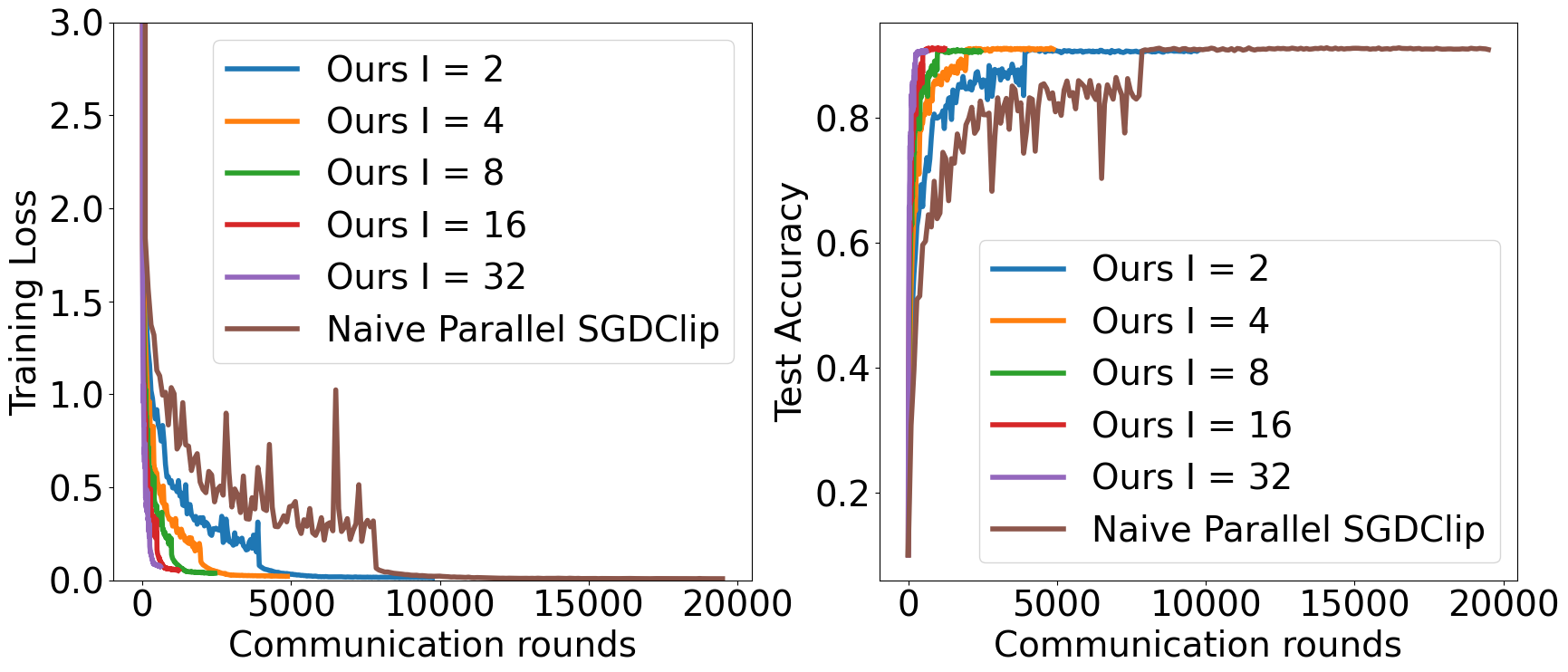}
		\caption{CIFAR10}
		\label{fig:comm_round_cifar10}
	\end{subfigure}
	\hfill
	\begin{subfigure}[b]{0.48\linewidth}
		\includegraphics[width=\linewidth]{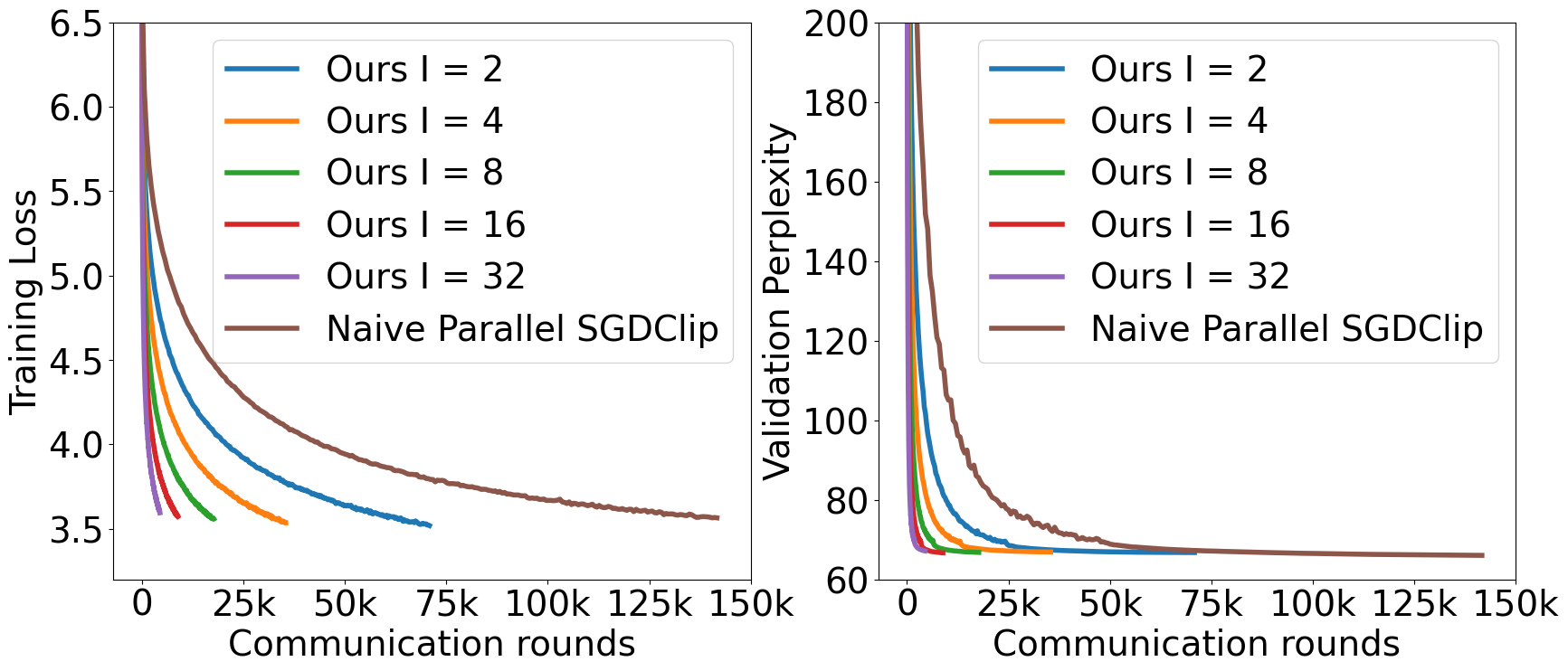}
		\caption{Penn Treebank}
		\label{fig:comm_round_penn}
	\end{subfigure}
	\hfill
	\begin{subfigure}[b]{0.48\linewidth}
		\includegraphics[width=\linewidth]{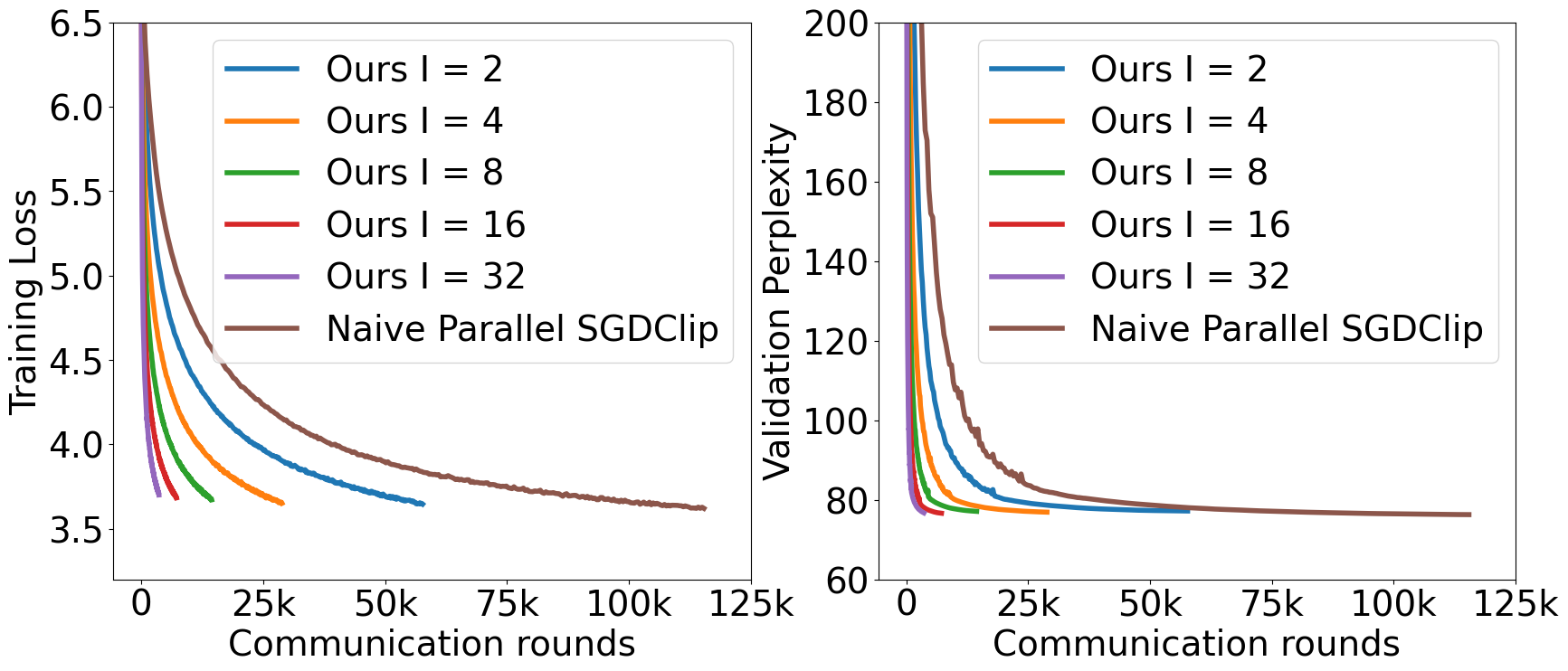}
		\caption{Wikitext-2}
		\label{fig:comm_round_wiki}
	\end{subfigure}
	\caption{Performance v.s.~\# of communication rounds each GPU conducts on training (a) Resnet-56 on CIFAR10 (b) AWD-LSTM on Penn Treebank (c) AWD-LSTM on Wikitext-2.}
	\label{fig:comm_round}
\end{figure}

	\subsection{Performance vs.~Communication Rounds}
	The reader might be interested in how the training and testing performance changes w.r.t.~the number of communications each machine does for both the Naive Parallel SGDClip and our CELGC with different $I$. Thus, we show such results in Figure~\ref{fig:comm_round}. The results are very close to that w.r.t.~the wall-clock time which is as expected.
	
	\begin{figure}[t]
	\centering
	\begin{subfigure}[b]{0.48\linewidth}
		\includegraphics[width=\linewidth]{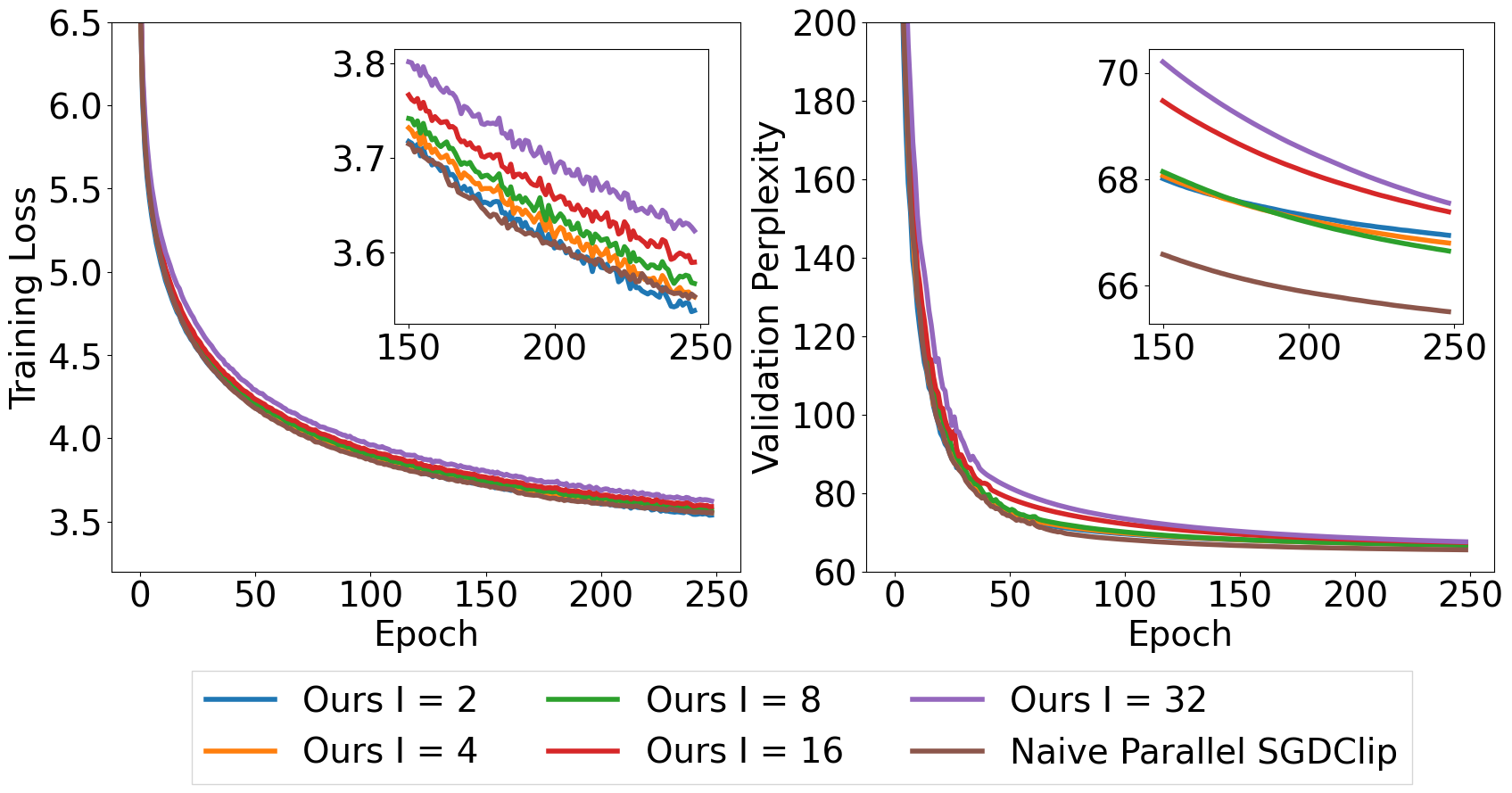}
		\caption{Over epoch}
		\label{fig:local_participation_epoch}
	\end{subfigure}
	\hfill
	\begin{subfigure}[b]{0.48\linewidth}
		\includegraphics[width=\linewidth]{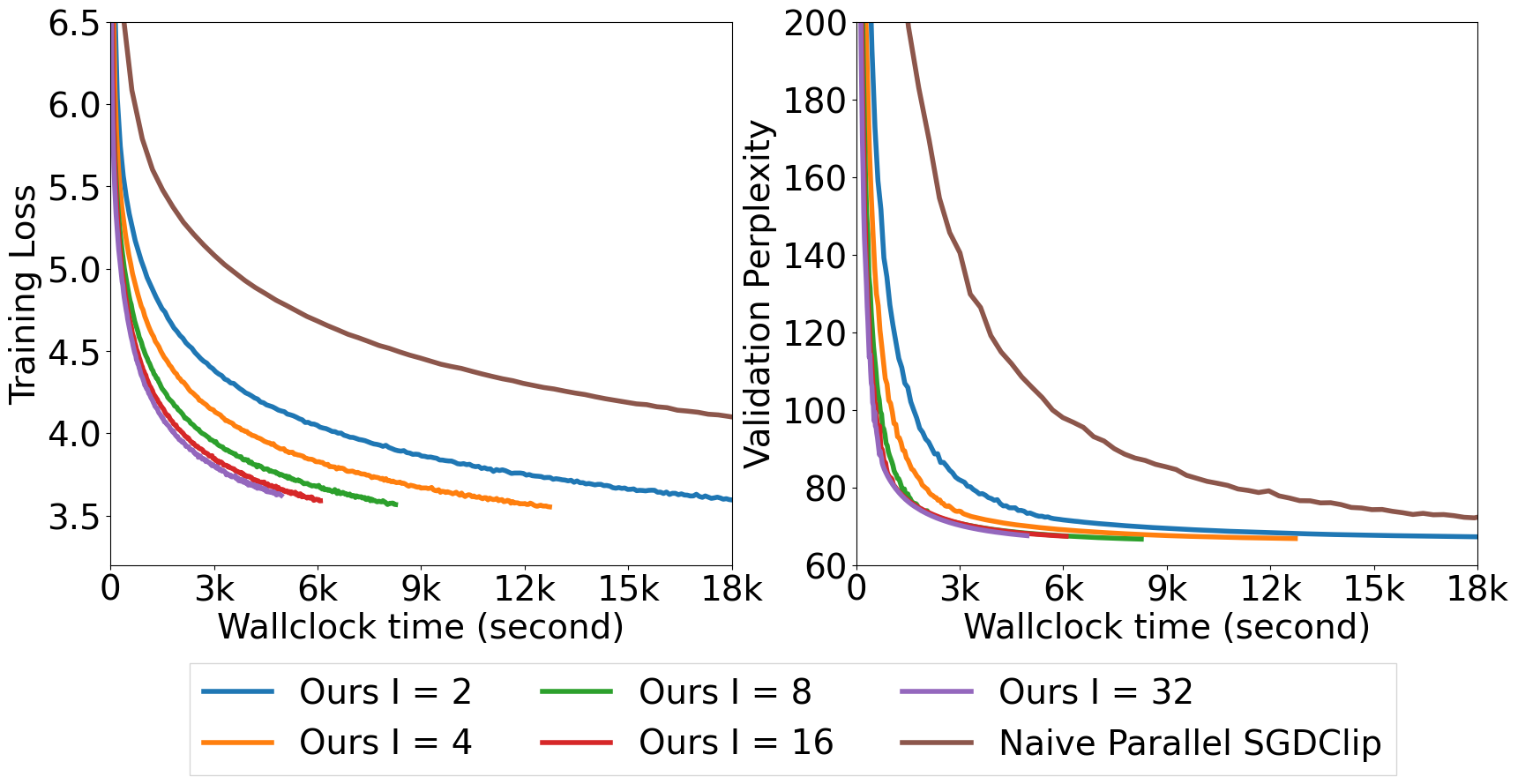}
		\caption{Over wall clock time}
		\label{fig:local_participation_wallclock}
	\end{subfigure}
	\caption{Train an AWD-LSTM to do language modeling on Penn Treebank where only a subset of nodes participate in each communication.}
	\label{fig:local_participation}
\end{figure}

	\begin{figure}[t]
		\centering
		\begin{subfigure}[b]{0.48\linewidth}
			\includegraphics[width=\linewidth]{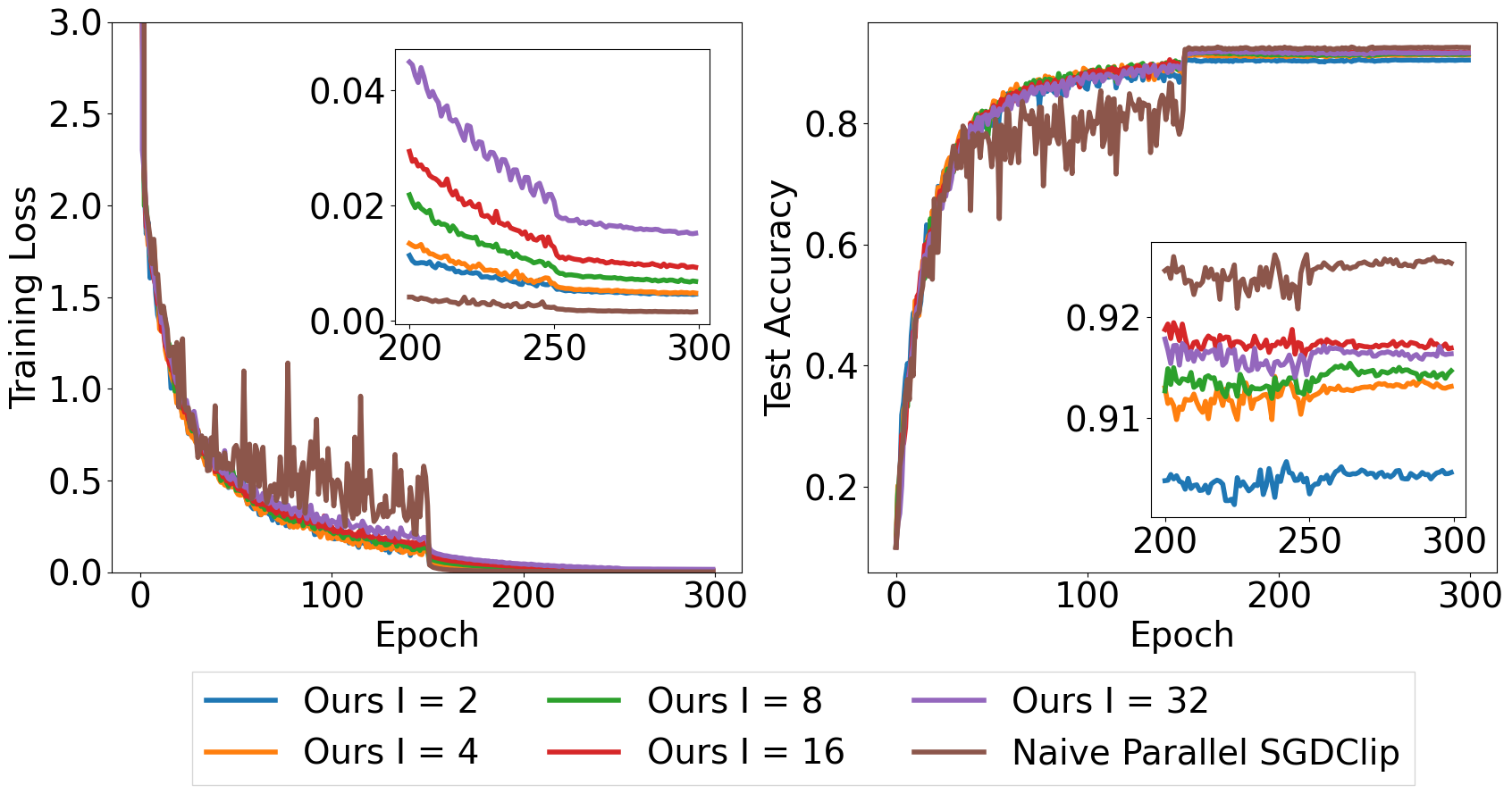}
			\caption{Over epoch}
			\label{fig:cifar10bs256epoch}
		\end{subfigure}
		\hfill
		\begin{subfigure}[b]{0.48\linewidth}
			\includegraphics[width=\linewidth]{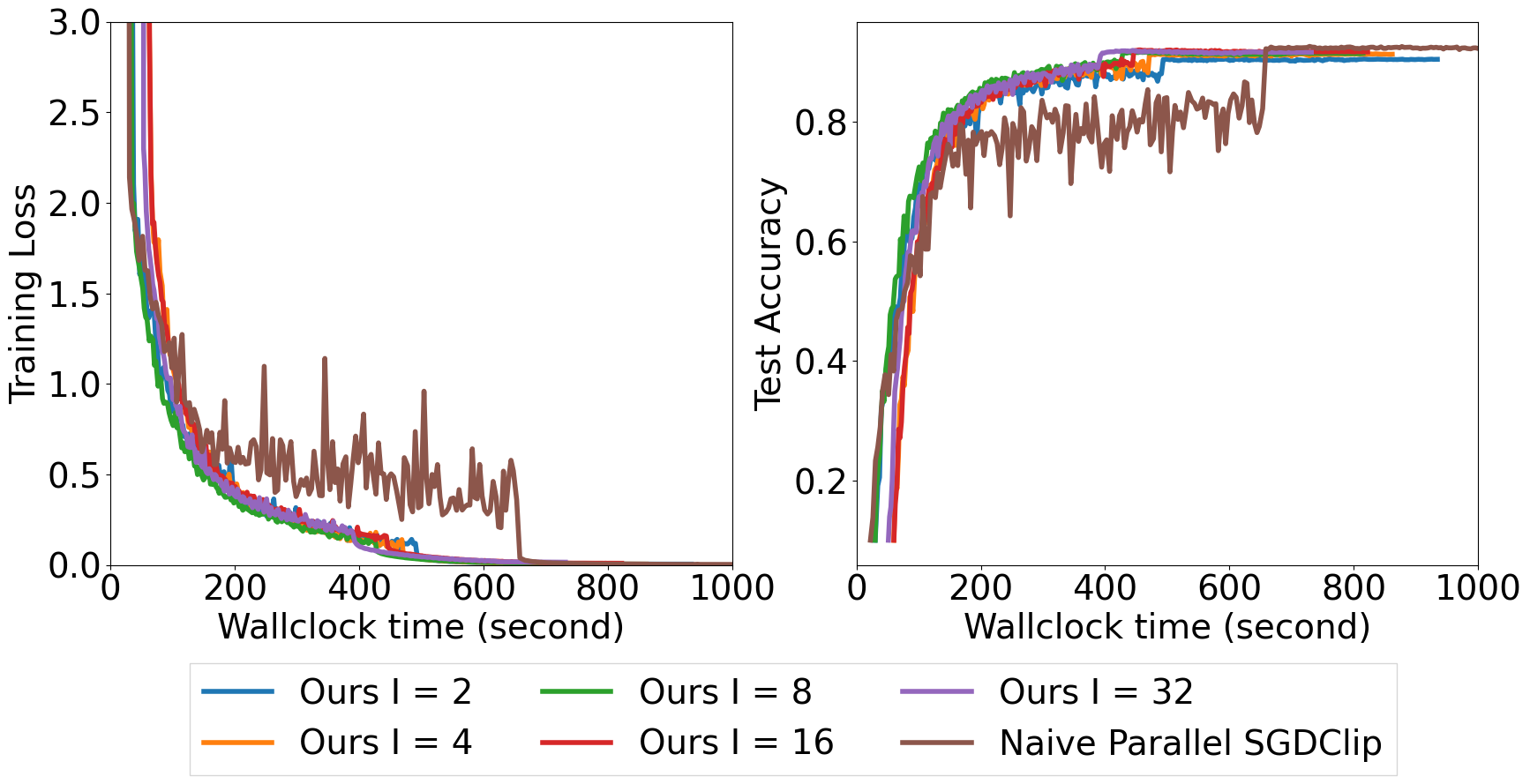}
			\caption{Over wall clock time}
			\label{fig:cifar10bs256wallclock}
		\end{subfigure}
		\begin{subfigure}[b]{0.48\linewidth}
			\centering
			\includegraphics[width=\linewidth]{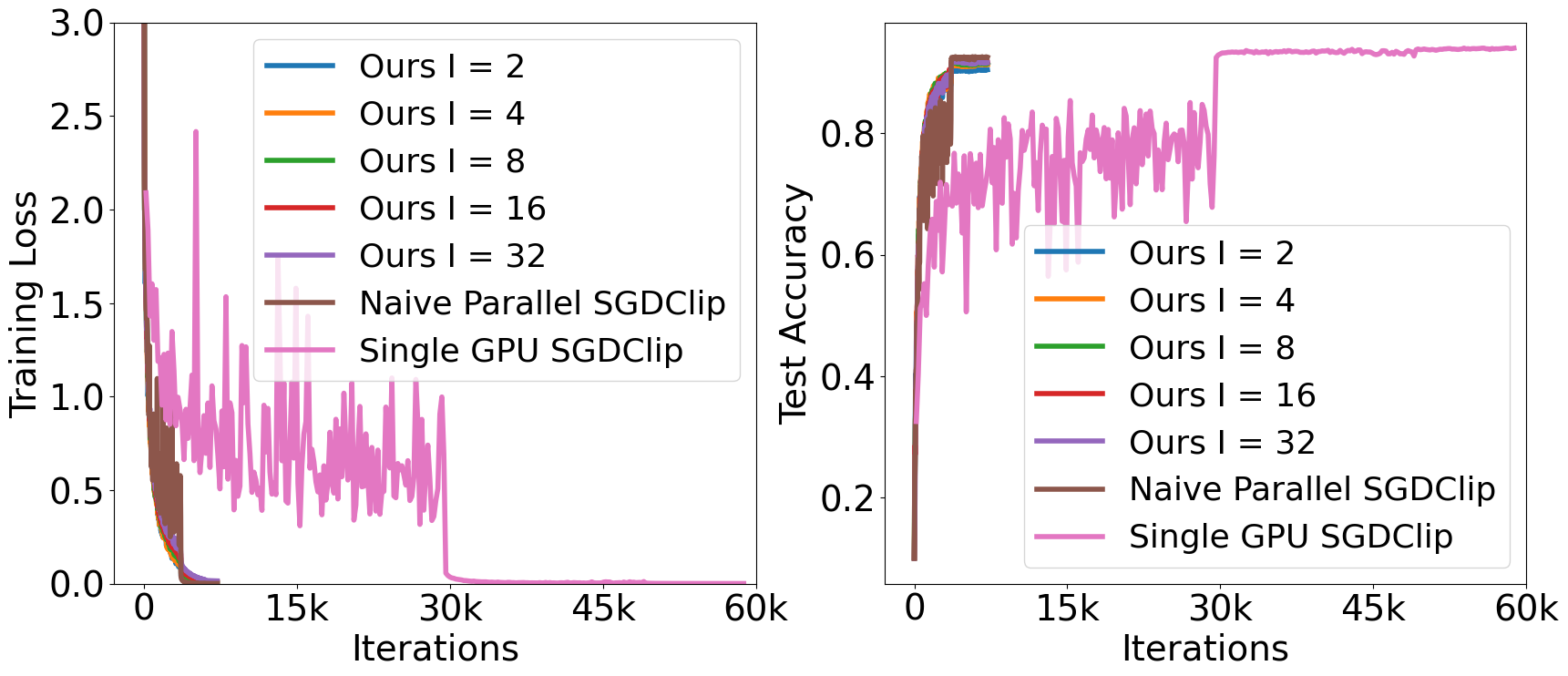}
		%	\vspace{-0.06in}
			\caption{Over~\# of iterations each GPU runs}
			\label{fig:parallel_cifar10bs256}
		\end{subfigure}
		\caption{Algorithm~\ref{alg:main} with different $I$: Training loss and test accuracy v.s.~(a) epoch, (b) wall clock time, and (c)~\# of iterations each GPU runs on training a 56 layer Resnet to do image classification on CIFAR10 with local mini-batch size $256$ on each node.}
		\label{fig:cifar10bs256comp}
	\end{figure}
	
	\subsection{Partial Participation}
    In the previous experiments, when communication between nodes occurs, all nodes send their model and/or gradients for averaging and update accordingly. Yet, in practice, it might be that only a (different) subset of nodes participates in communication each time. Therefore, we conduct an experiment in this setting. Specifically, we take the Penn Treebank dataset and train the AWD-LSTM of the same structure as in Section~\ref{ssec:skip_comm} to do language modeling. However, when the algorithm requires communication across nodes, we randomly select $6$ out of the total $8$ nodes to communicate and average the model and/or gradient and update those $6$ nodes only; the rest $2$ nodes will be left untouched. Considering that this setting introduces a big difference compared with the setting we adopted in Section~\ref{ssec:skip_comm}, we finely tuned the initial learning rate for both the baseline and our algorithm and picked the one that gives the smallest training loss. 
	
	The results are reported in Figure~\ref{fig:local_participation}. We can see that, though the partial participation setting slightly deteriorates the performance for both the baseline and our algorithm as compared with Figure~\ref{fig:pennepoch}, our algorithm is still able to closely match the baseline for both the training loss and the validation perplexity epoch-wise, but also note that our algorithm greatly saves wall-clock time. This indicates that our algorithm is robust in the partial participation setting.
	
	Finally we want to remark that the our main proof of Theorem~\ref{thm_clipping} can be easily extended to the partial client participation case since the source of randomness of partical participation can be decoupled with the other source of randomness in the algorithm.

	\subsection{Large Mini-batch}
	The distributed learning paradigm is especially desirable with training using large mini-batches. To this end, we trained a Resnet-$56$ model on CIFAR10 with batch-size $256$ on each node which sums up to a batch size of $2048$ globally. Compared with the experiment in Section~\ref{ssec:skip_comm}, considering the large batch-size, we trained for $300$ epochs instead of $200$, decrease the learning rate by a factor of $10$ at epoch $150$ and $250$, and finely tuned the initial learning rate for each variant and picked the one that gives the smallest training loss. Results are shown in Figure~\ref{fig:cifar10bs256comp}. Similar to the small-batch case reported in Figure~\ref{fig:cifar10comp}, our algorithm with skipped communication is able to match the naive parallel SGDClip in terms of epochs but greatly speeds up wall-clock-wise. Also, Figure~\ref{fig:parallel_cifar10bs256} again verifies the parallel speed-up property of our algorithm. %This suggests that our algorithm is able to tackle large batch sizes. 
	
	\subsection{Compare with Naive Parallel SGDClip when each Method Uses Same Amount of Data between each Communication}
    \begin{figure}[t]
         \centering
         \begin{subfigure}[b]{0.48\textwidth}
             \centering
             \includegraphics[width=\textwidth]{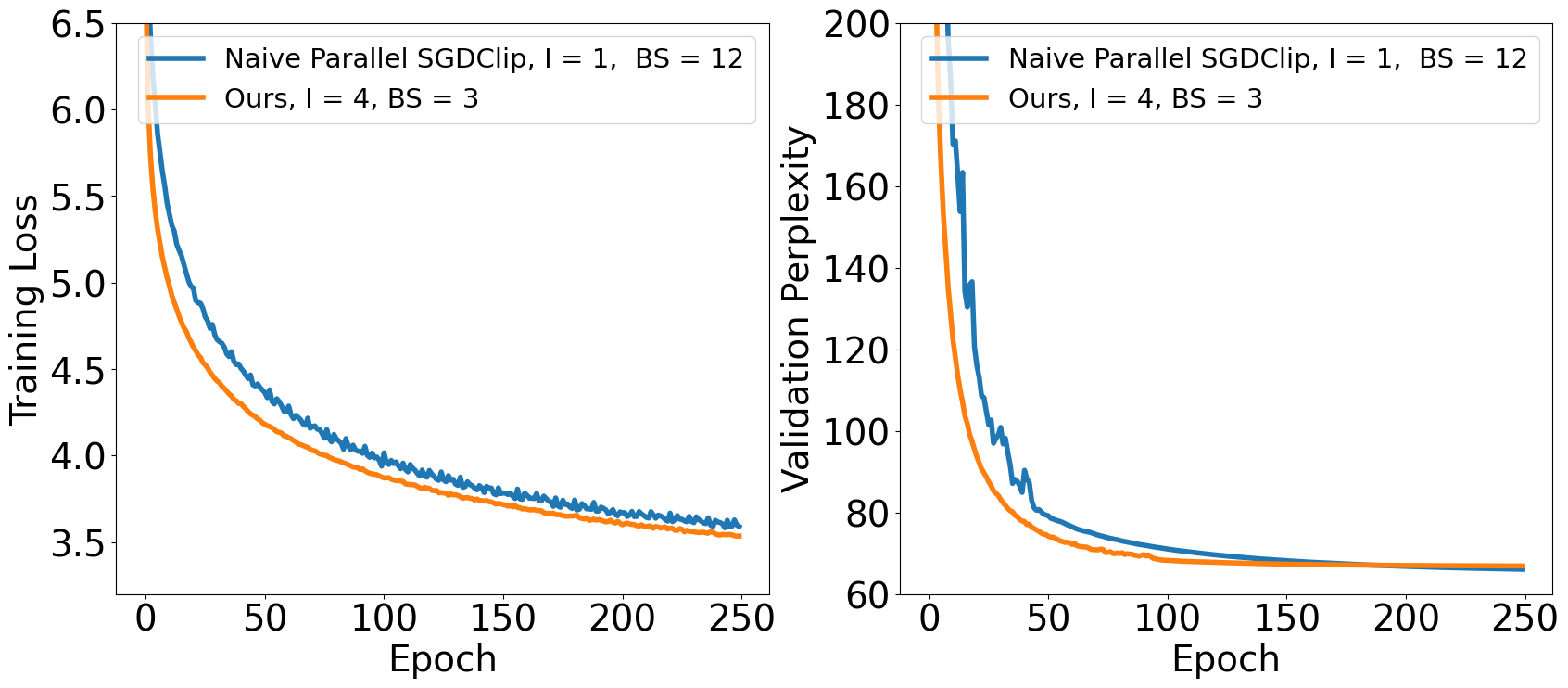}
             \caption{Ours $I=4$ vs. Baseline BS $12$}
             \label{fig:bs_12_i_4}
         \end{subfigure}
         \hfill
         \begin{subfigure}[b]{0.48\textwidth}
             \centering
             \includegraphics[width=\textwidth]{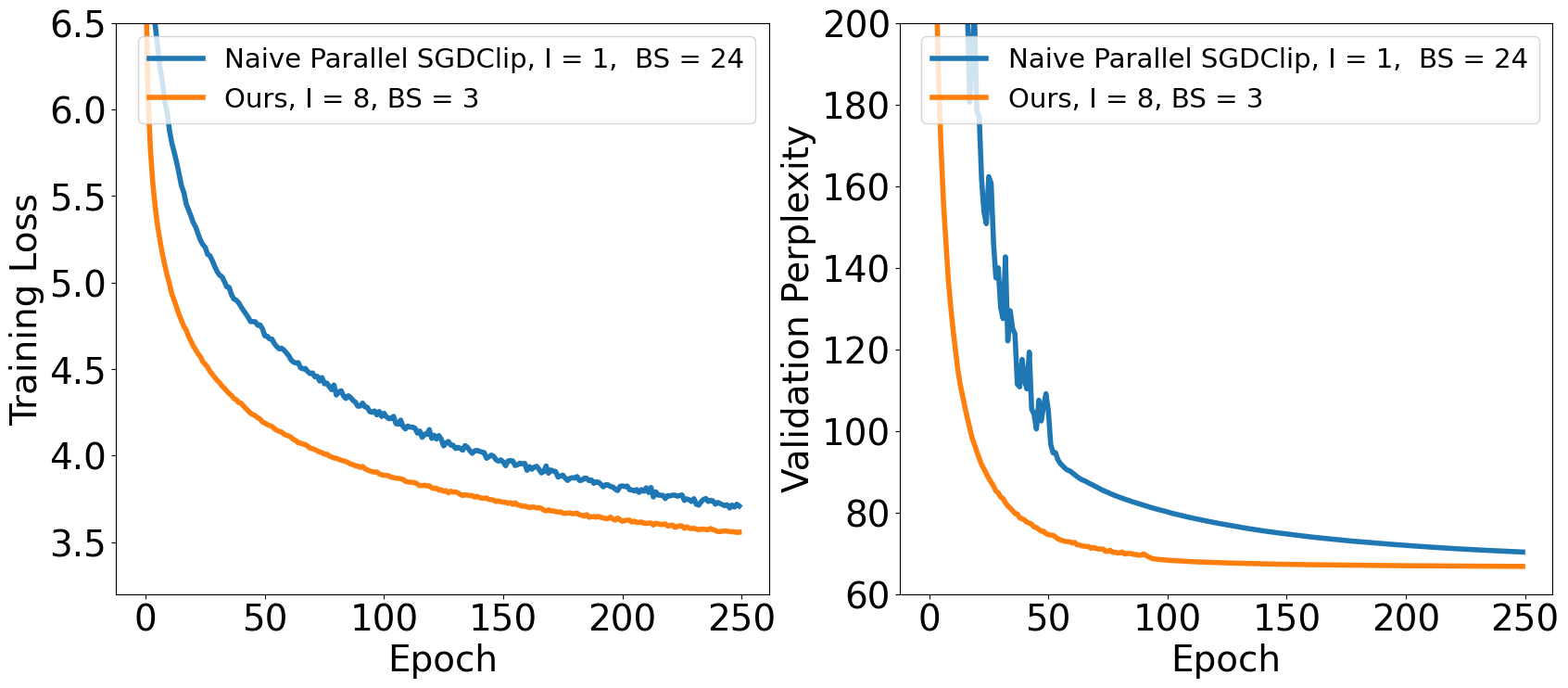}
             \caption{Ours $I=8$ vs. Baseline BS $24$}
             \label{fig:bs_12_i_8}
         \end{subfigure}

         \begin{subfigure}[b]{0.48\textwidth}
             \centering
             \includegraphics[width=\textwidth]{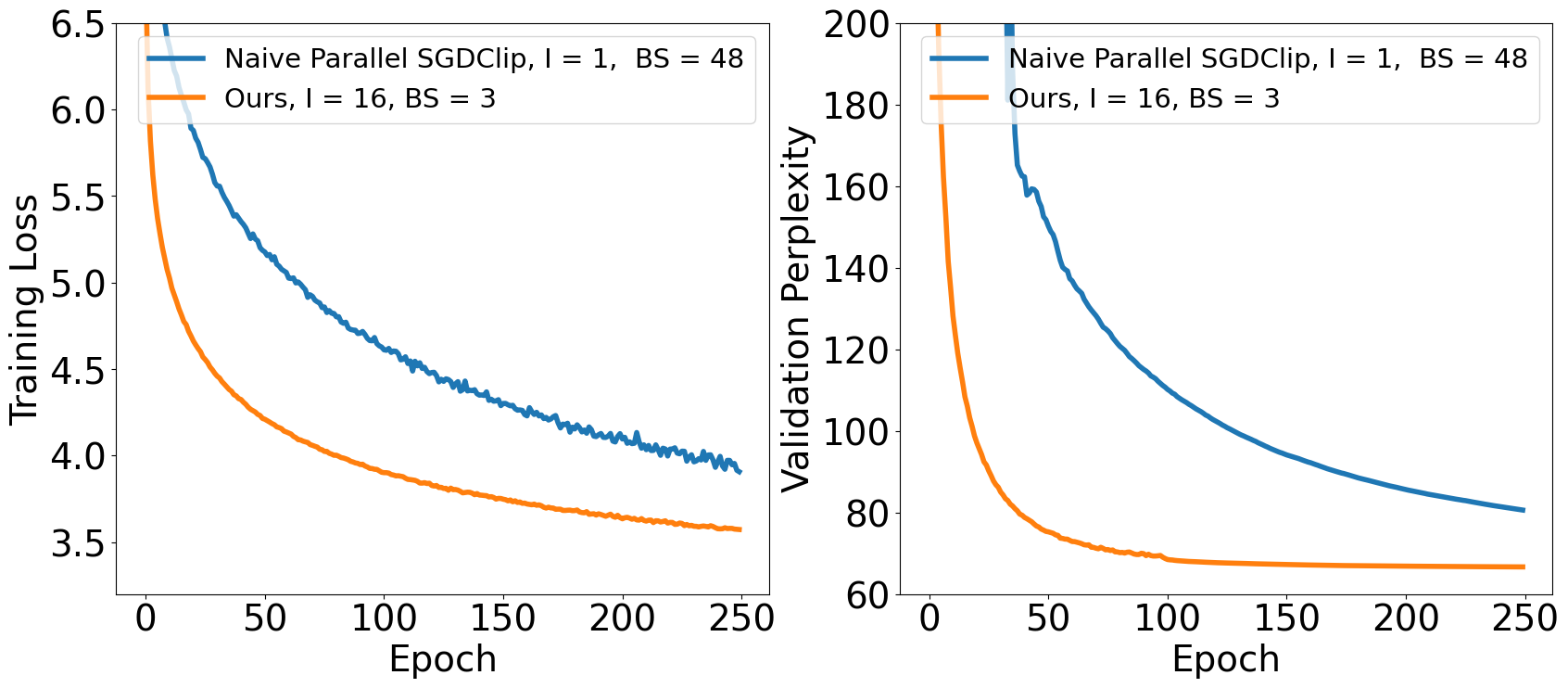}
             \caption{Ours $I=16$ vs. Baseline BS $48$}
             \label{fig:bs_12_i_16}
         \end{subfigure}
         \hfill
         \begin{subfigure}[b]{0.48\textwidth}
             \centering
             \includegraphics[width=\textwidth]{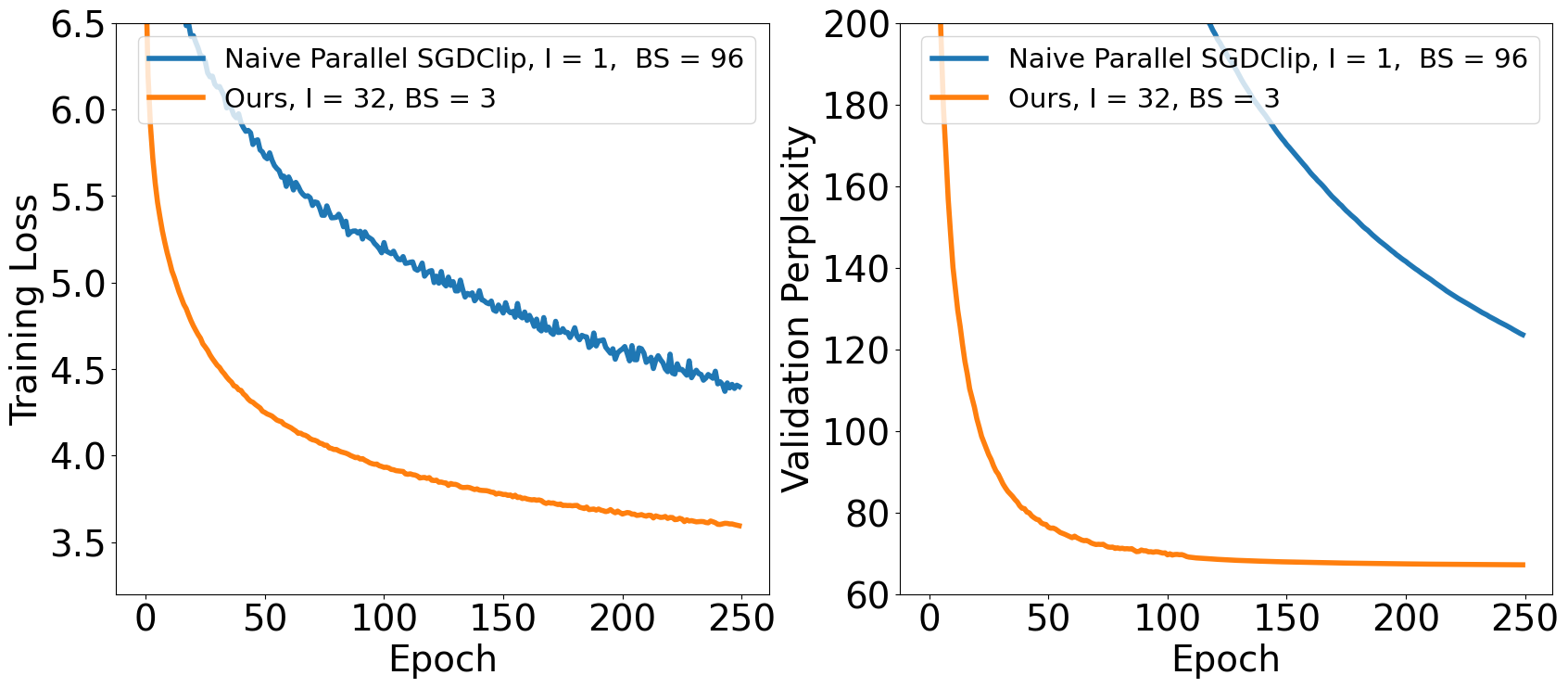}
             \caption{Ours $I=32$ vs. Baseline BS $96$}
             \label{fig:bs_12_i_32}
         \end{subfigure}
            \caption{Training loss and validation perplexity on training an AWD-LSTM to do language modeling on Penn Treebank: our algorithm with different communication interval value $I$ vs.~Naive Parallel SGDClip with $I$ times larger batch size and communicating at each iteration (We use "BS" to denote batchsize.)}
            \label{fig:bs_i}
    \end{figure}
    
    To provide another perspective to compare our algorithm with Naive Parallel SGDClip, we design another experiment following the spirit of~\cite{woodworth2020local}. We compared our algorithm for different communication intervals $I$ with Naive Parallel SGDClip with $I$ times larger mini-batch sizes and communicating at each iteration. The goal is to ensure that between two neighboring communication rounds, each method use the same number of oracle calls. The results are reported in Figure~\ref{fig:bs_i}, from which we can clearly see that our method enjoys a substantial lead and the advantage becomes more and more significant as we use larger $I$ values. This observation is inline with the common issue of large batch training in deep neural networks.%~\cite{keskar2016large}.
    
    \begin{figure}[t]
        \centering
        \begin{minipage}{0.48\textwidth}
            \centering
            \includegraphics[width=\textwidth]{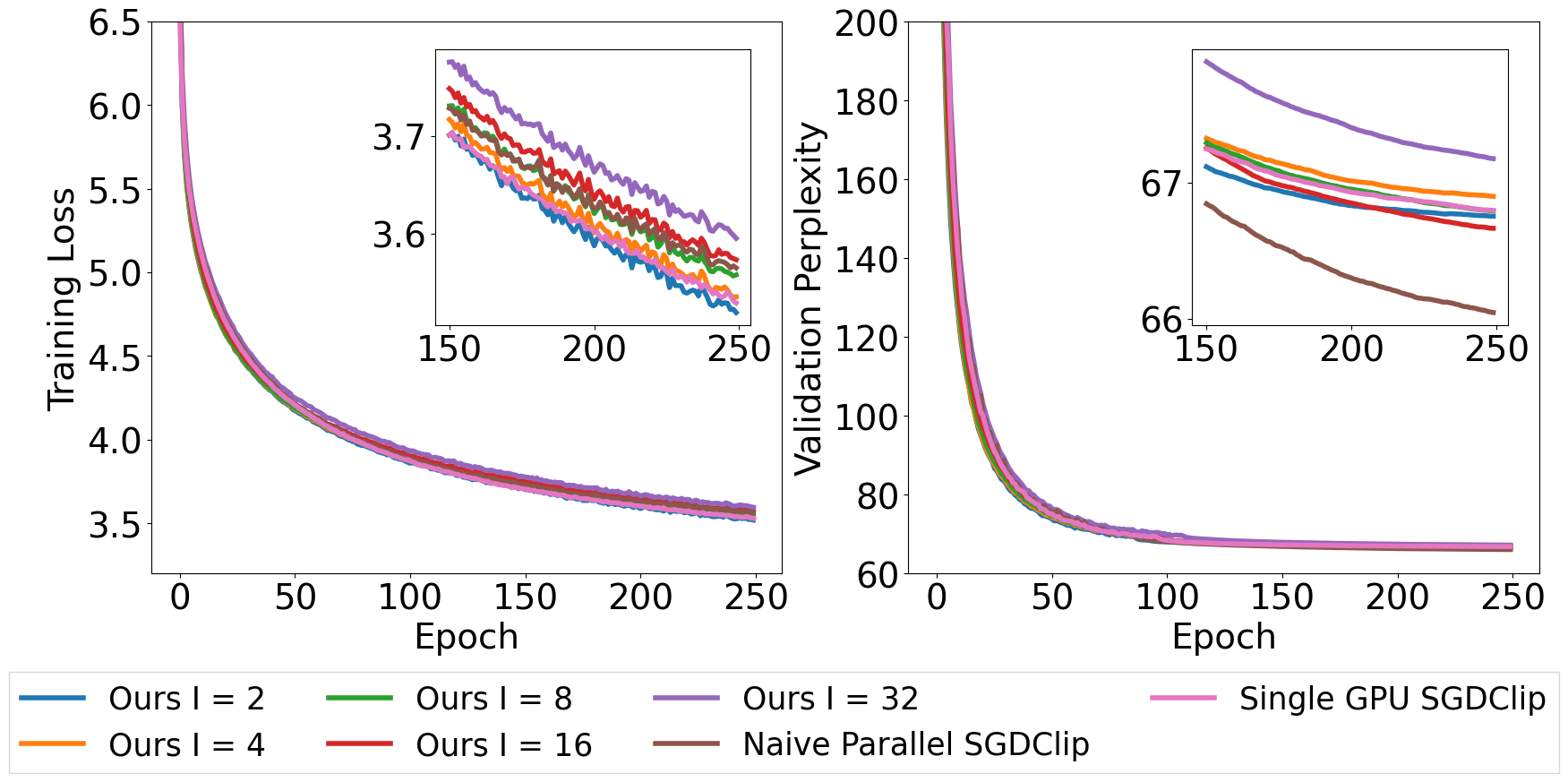}
            \caption{Performance vs. epoch on training an AWD-LSTM to do language modeling on Penn Treebank showing linear speedup.}
            \label{fig:linear_speedup}
        \end{minipage}%
       \hfill \begin{minipage}{0.48\textwidth}
            \centering
            \includegraphics[width=\textwidth]{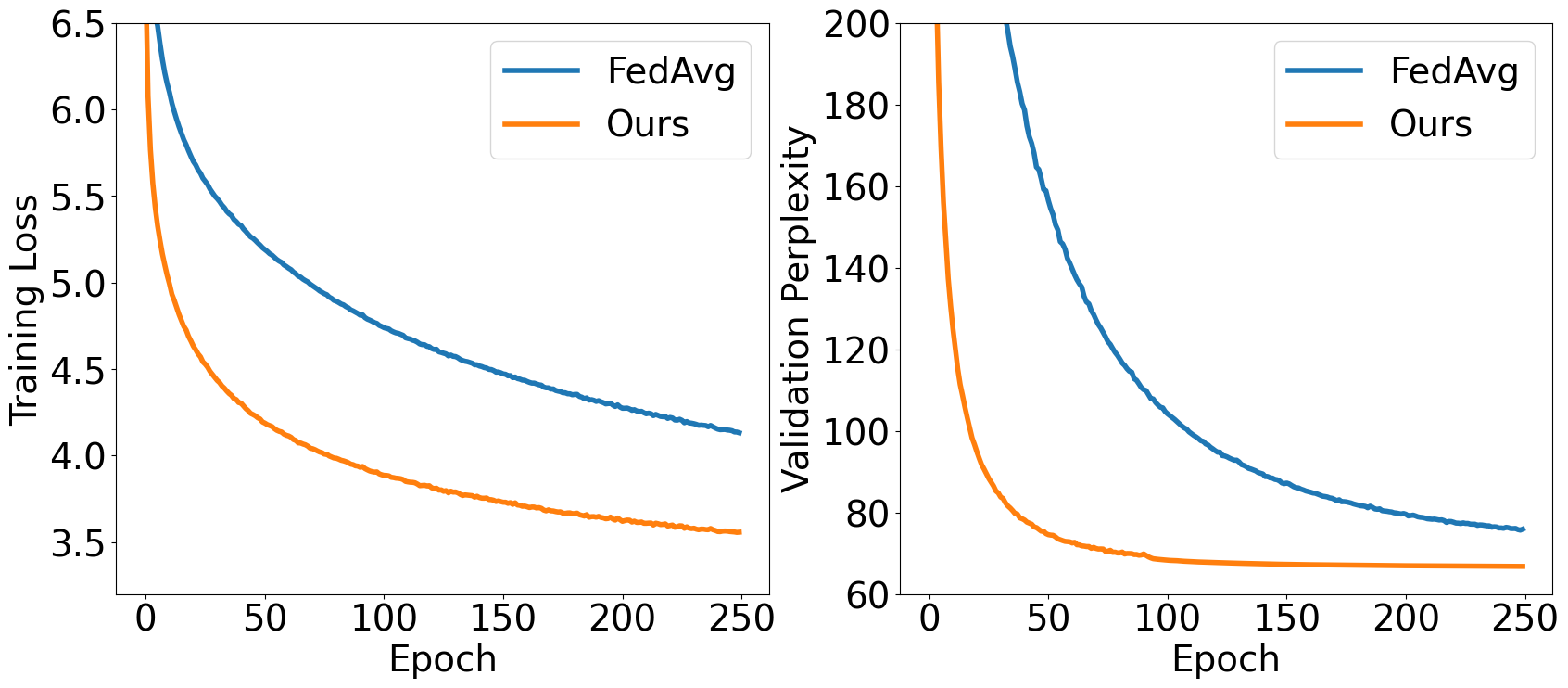}
            \caption{Train an AWD-LSTM by FedAvg / ours, both with communication interval $I=8$, to do language modeling on Penn Treebank.}
            \label{fig:sgd}
        \end{minipage}
    \end{figure}
    
    \subsection{Linear Speedup}
    In Figure~\ref{fig:linear_speedup}, we compared our algorithm's performance with Single GPU SGDClip over number of epochs. Clearly, all variants of ours can match the latter epoch-wise. Note that the mini-batch size in each GPU is the same for all, and ours use $8$ GPU; thus, for each GPU, the number of iterations in one epoch of ours is exactly $1/8$ of that of Single GPU SGDClip. Hence, we have empirically verified the linear speedup property.

    \subsection{Comparison with FedAvg which has No Clipping}
    Figure~\ref{fig:clipping_frequency} shows that clipping is very frequent, especially in the language modeling task on Penn Treebank.
    In our view, frequent clipping indicates the importance of clipping since otherwise, the training may encounter exploding gradient problems and even diverge. Indeed, without clipping, FedAvg behaves much worse, as shown in Figure~\ref{fig:sgd}.

    \begin{figure}[t]
    \centering
    \begin{subfigure}[b]{0.48\linewidth}
    \includegraphics[width=\linewidth]{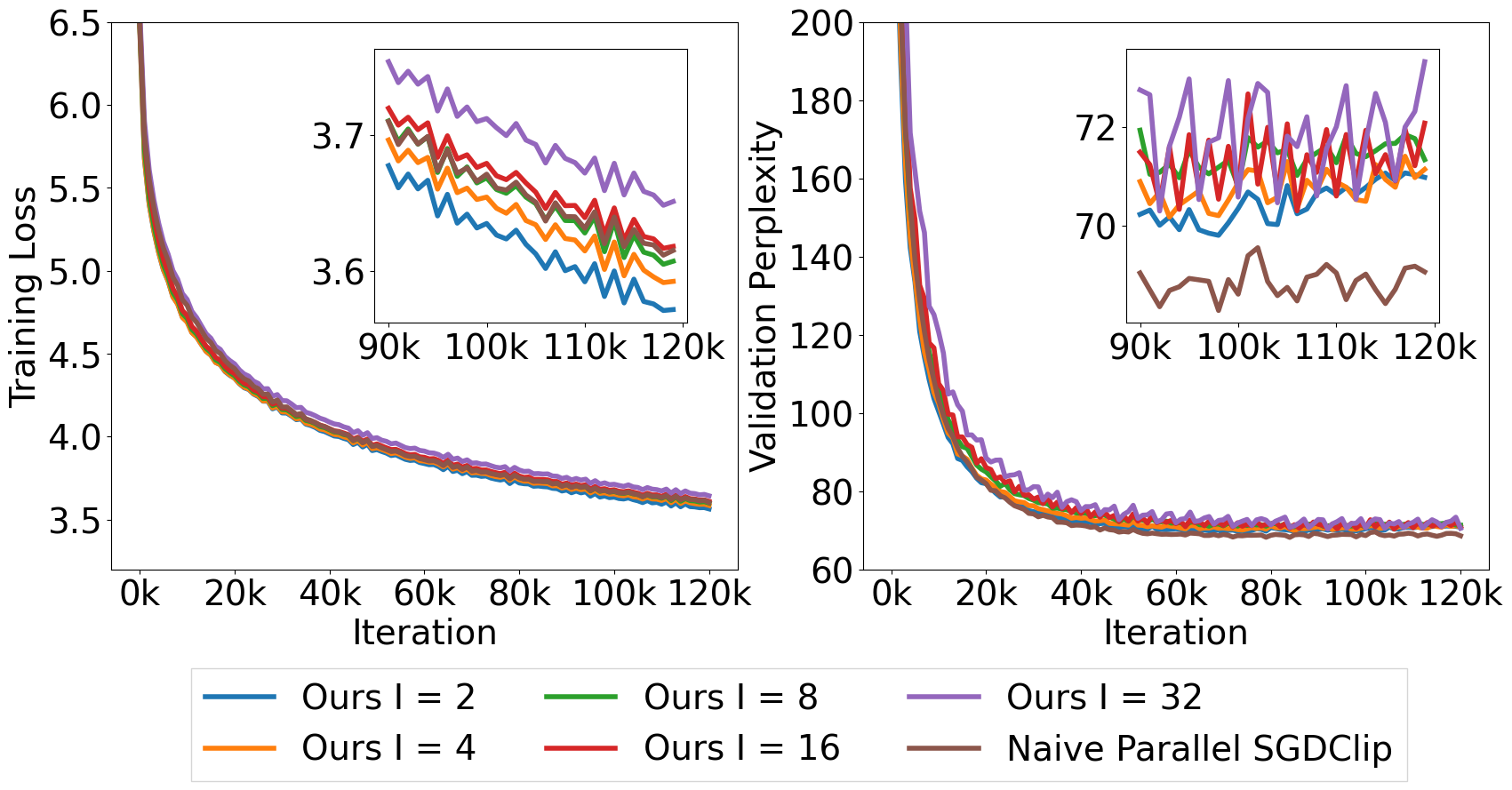}
    \caption{Over epoch}
    \label{fig:pennepoch_heter}
    \end{subfigure}
    \hfill
    \begin{subfigure}[b]{0.48\linewidth}
    \includegraphics[width=\linewidth]{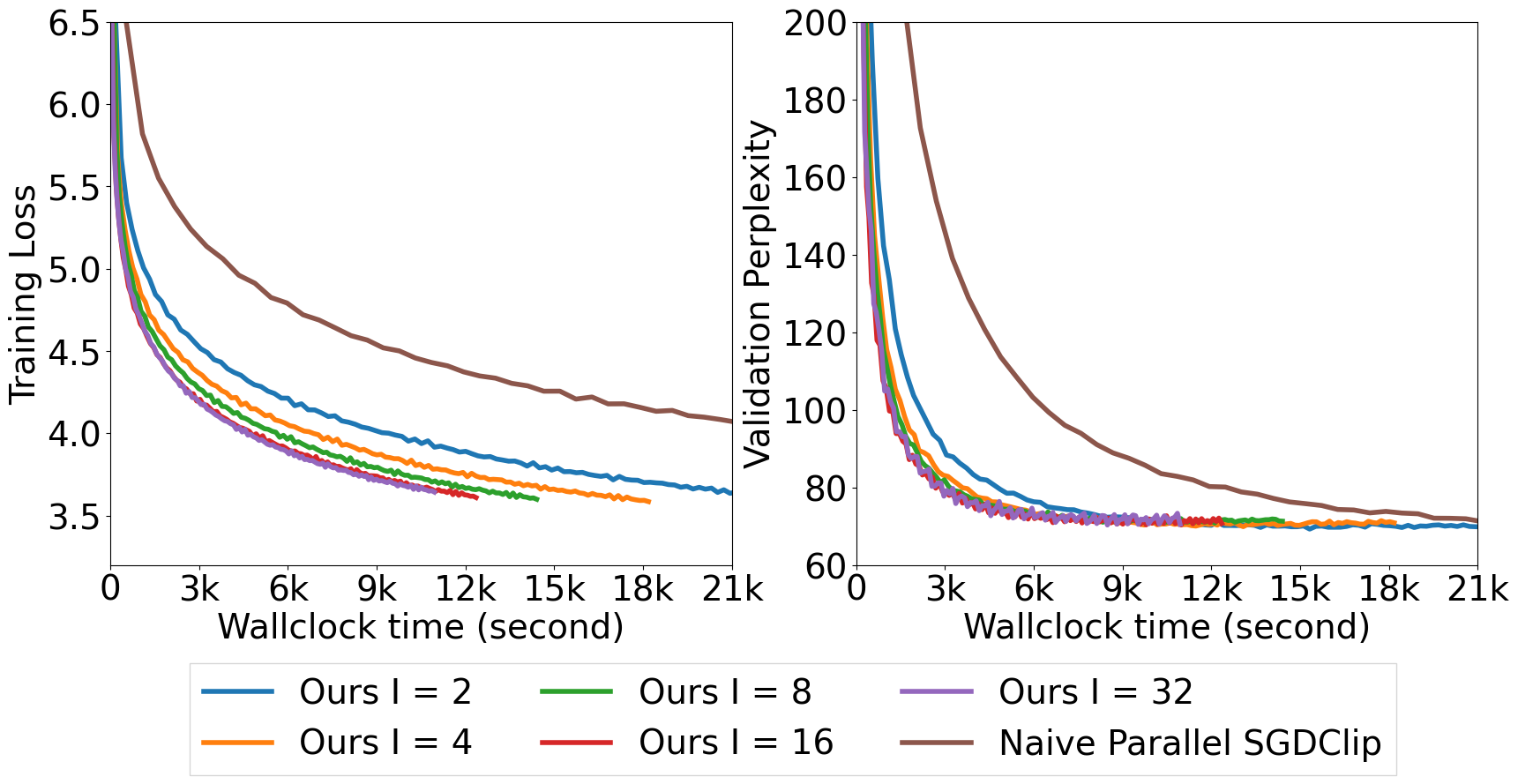}
    \caption{Over Wall clock time}
    \label{fig:pennwallclock_heter}
    \end{subfigure}
    \caption{Training an AWD-LSTM to do language modeling on Penn Treebank in a heterogeneous setting where each node accesses only a different subset of the whole dataset.}
    \label{fig:penncomp_heter}
    \end{figure}

\Corrections{
    \subsection{Heterogeneous Setting}
    \label{ssec:heter}
    In the previous experiments, all machines can access the whole dataset. Yet, in practice, each machine might only access a subset of the whole dataset and the data might not be evenly distributed across machines. Therefore, we conduct an experiment on this heterogeneous setting. Specifically, we take the Penn Treebank training set which is a text file and divide it into $8$ non-overlapping consecutive parts which contain $\{10.64\%, 11.17\%, 11.70\%, 12.23\%, 12.77\%, 13.30\%, 13.83\%, 14.36\%\}$ of the whole text file respectively. We then train the AWD-LSTM of the same structure as in Section~\ref{ssec:skip_comm} to do language modeling. Considering that the heterogeneous setting introduces a big difference compared with the setting we adopted in Section~\ref{ssec:skip_comm}, we finely tuned the initial learning rate for both the baseline and our algorithm and picked the one that gives the smallest training loss. 
    
    The results are reported in Figure~\ref{fig:penncomp_heter}. Note that, as the size of training data in each machine is different, we draw the training and testing performance curves over the number of iterations each machine runs instead of epochs.  This means that, if the x-axis value is $1000$, then each GPU runs $1000$ iterations. We can see that, though the unbalancing and heterogeneity of the training data deteriorates the performance for both the baseline and our algorithm as compared with Figure~\ref{fig:pennepoch}, our algorithm is still able to surpass the baseline in training losses though it falls slightly behind in testing; meanwhile, our algorithm obtains significant speedup in terms of wall-clock time. This suggests that our algorithm is robust to the heterogeneous setting.

    \subsection{Performance of Large Communication Intervals}
    \begin{figure}[t]
        \centering
        \includegraphics[width=\textwidth]{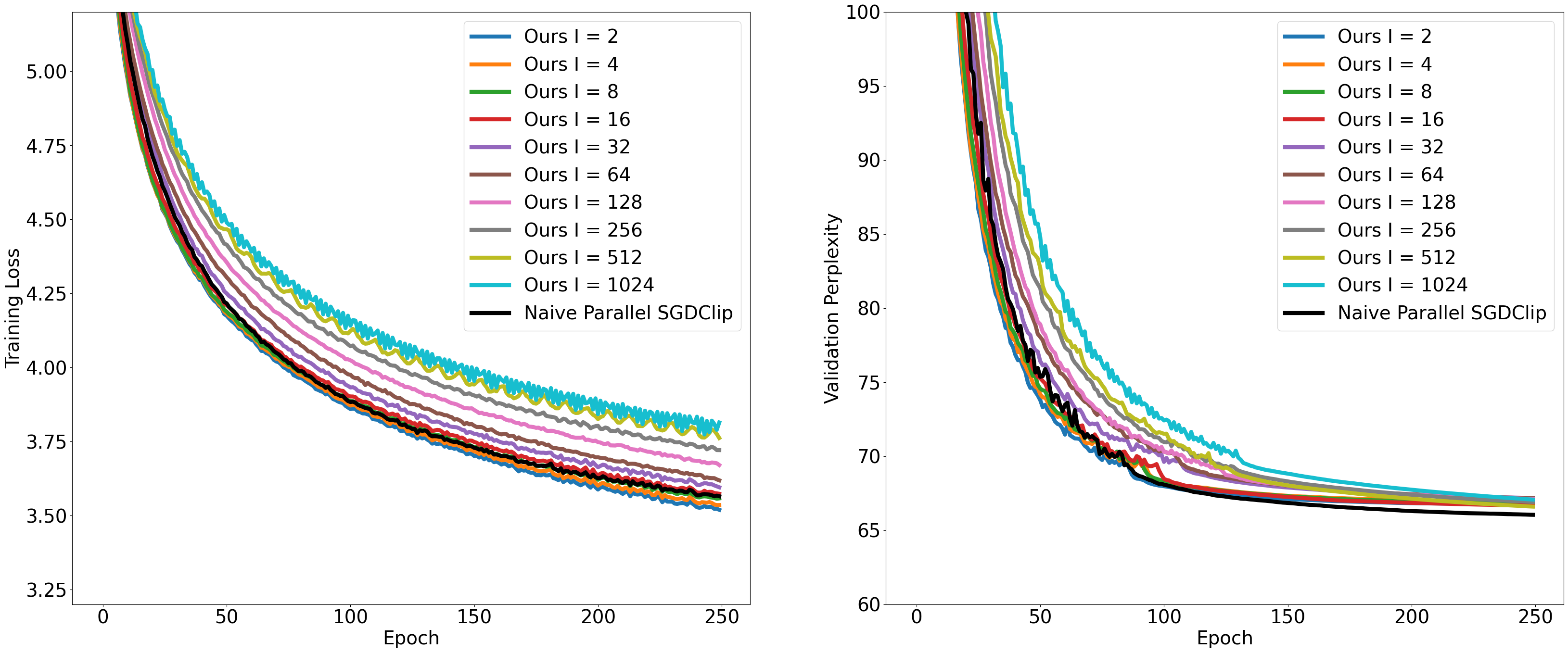}
        \caption{Performance vs. epoch on training an AWD-LSTM to do language modeling on Penn Treebank with large $I$.}
        \label{fig:large_I}
    \end{figure}
    To investigate how our algorithm behaves when $I$, the communication interval, becomes very large, we conducted an experiment of training the AWD-LSTM model to do language modelling on Penn Treebank with exponentially increasing $I$. The results are shown in Figure~\ref{fig:large_I}. As can be seen, the training performance deteriorates quickly as $I$ increases.
    
    \subsection{Our Algorithm Coupled with Momentum}
        \begin{figure}[t]
        \begin{minipage}{0.48\textwidth}
            \centering
            \includegraphics[width=\textwidth]{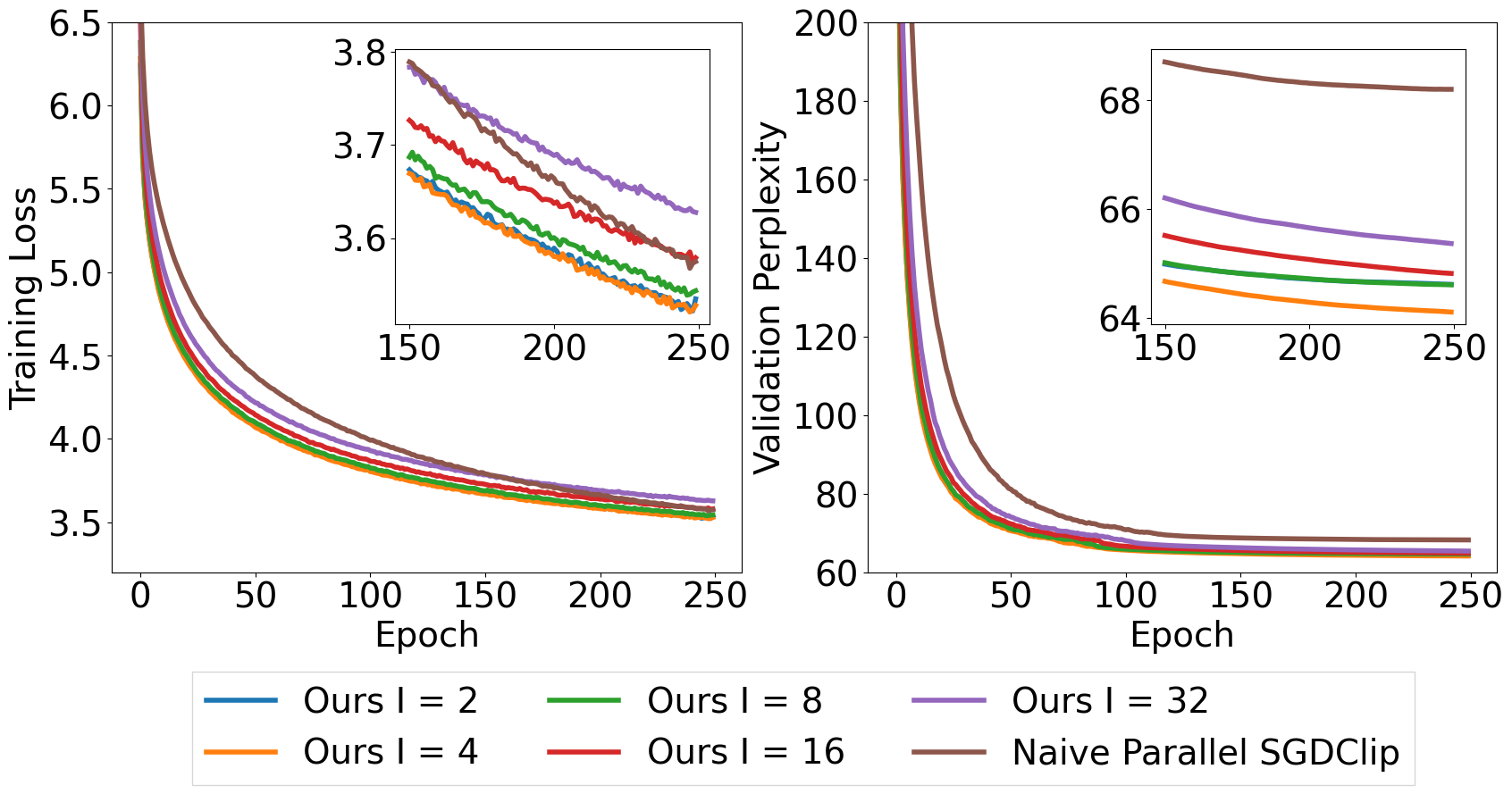}
            \caption{Performance vs. epoch on training an AWD-LSTM to do language modeling on Penn Treebank when all optimizers use momentum.}
            \vspace{2.7em}
            \label{fig:momentum}
    \end{minipage}
    \hfill
    \begin{minipage}{0.48\textwidth}
        \centering
        \includegraphics[width=\textwidth]{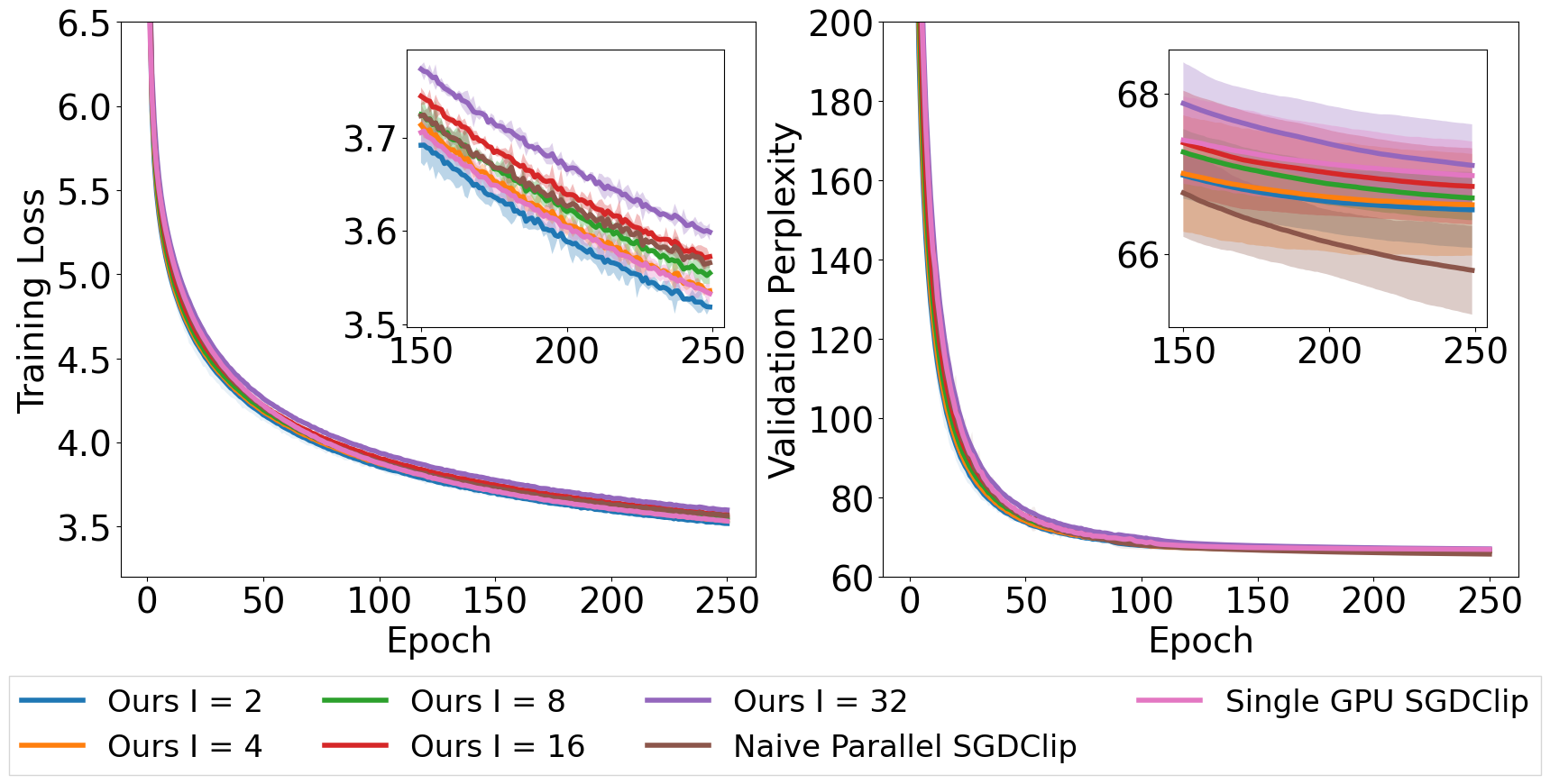}
        \caption{Performance vs. epoch on training an AWD-LSTM to do language modeling on Penn Treebank. The shading of each curve represents the 95\% confidence interval computed across 3 independent runs from different random seeds.}
        \label{fig:multi_run_ptb}
        \end{minipage}
    \end{figure}

    Given that momentum is commonly employed when using SGD to train neural networks, we conduct an experiment on comparing CELGC with momentum of different $I$ with the naive parallel SGDClip with momentum. Specifically, we train an AWD-LSTM to do language modeling on Penn Treebank, and use a momentum parameter $\beta = 0.9$ and a clipping threshold $\gamma = 2.5$ following~\cite{zhang2020improved}. The results are shown in Figure~\ref{fig:momentum}, from which we can see that our algorithm can still match the naive parallel SGDClip with momentum very well. This indicates that our algorithm couples well with the momentum technique.
    
    \subsection{Multiple Runs Showing Confidence Interval}

% 	\begin{figure}[t]
% 		\centering
% 		\begin{subfigure}[b]{0.48\linewidth}
% 			\includegraphics[width=\linewidth]{arxiv/figs/imagenet/ImageNet_loss_acc_over_epoch_curves.png}
% 			\caption{Performance over epoch}
			
% 		\label{fig:imagenet_epoch}
% 		\end{subfigure}
% 		\hfill
% 		\begin{subfigure}[b]{0.48\linewidth}
% 			\includegraphics[width=\linewidth]{arxiv/figs/imagenet/ImageNet_loss_acc_over_wallclock_time_curves.png}
% 			\caption{Performance over wall clock time}
% 			\label{fig:imagenet_wallclock}
% 		\end{subfigure}
% 		\caption{Training loss and test accuracy v.s.~epoch (left) and wall clock time (right) on training a Resnet-50 to do image classification on ImageNet.}
% 		\label{fig:imagenet}
% 	\end{figure}

    \begin{table}[t]
    \centering
    \caption{Average final training loss and validation perplexity achieved by each method when traning an AWD-LSTM to do language modeling on Penn Treebank (Figure~\ref{fig:multi_run_ptb}). The $\pm$ shows $95\%$ confidence intervals of the mean value over three runs starting from different random seeds.}
    \label{tab:multi_run_ptb}
    \begin{tabular}{|c|c|c|}
    \hline
    Methods
    & Training loss & Validation perplexity\\
    \hline
    Our CELGC $I = 2$ & $3.5184 \pm 0.0018$ & $66.5486 \pm 0.4746$\\
    \hline
    Our CELGC $I = 4$ & $3.5357 \pm 0.0025$ & $66.6117 \pm 0.6340$\\
    \hline
    Our CELGC $I = 8$ & $3.5548 \pm 0.0116$ & $66.6968 \pm 0.2771$\\
    \hline
    Our CELGC $I = 16$ & $3.5724 \pm 0.0016$ & $66.8406 \pm 0.4755$\\
    \hline
    Our CELGC $I = 32$& $3.5989 \pm 0.0092$ & $67.1035 \pm 0.5124$\\
    \hline
    Naive Parallel SGDClip & $3.5658 \pm 0.0055$ & $65.7935 \pm 0.5534$\\
    \hline
    Single GPU SGDClip & $3.5328 \pm 0.0104$ & $66.9757 \pm 0.4259$\\
    \hline
    \end{tabular}
    \end{table}

    To show that our results is stable over random seeds, we train an AWD-LSTM to do language modeling on Penn Treebank over three independent runs from different random seeds. The results are shown in Figure~\ref{fig:multi_run_ptb} and Table~\ref{tab:multi_run_ptb}. It can be seen that our results are stable.
    
    \subsection{Final Results in Tables}
    To facilitate easier comparison, we summarize results of above experiments in following tables.

    \begin{table}[t]
     \caption{Final results on training Resnets to do image classification on CIFAR-10.}
     \label{tab:results_cifar10}
     \begin{subtable}[t]{\textwidth}
        \centering
        \caption{56 layer Resnet with local mini-batch size 64 (Figure~\ref{fig:cifar10comp},~\ref{fig:parallel_cifar10}\&~\ref{fig:comm_round_cifar10})}
        \label{tab:resnet_56_bs64}
        \begin{tabular}{|c|c|c|}
        \hline
        Methods & Training Loss & Test Accuracy\\
        \hline
        Ours I = 2 & $0.0146$ & $0.9076$\\
        \hline
        Ours I = 4 & $0.0216$ & $0.9109$\\
        \hline
        Ours I = 8 & $0.0372$ & $0.9068$\\
        \hline
        Ours I = 16 & $0.0544$ & $0.9104$\\
        \hline
        Ours I = 32 & $0.0745$ & $0.9070$\\
        \hline
        Naive Parallel SGDClip & $0.0087$ & $0.9114$\\
        \hline
        Single GPU SGDClip & $0.0020$ & $0.9410$\\
        \hline
       \end{tabular}
    \end{subtable}

     \begin{subtable}[t]{\textwidth}
        \centering
        \caption{56 layer Resnet with local mini-batch size 256 (Figure~\ref{fig:cifar10bs256comp})}
        \label{tab:resnet_56_bs256}
        \begin{tabular}{|c|c|c|}
        \hline
        Methods & Training Loss & Test Accuracy\\
        \hline
        Ours I = 2 & $0.0046$ & $0.9046$\\
        \hline
        Ours I = 4 & $0.0048$ & $0.9131$\\
        \hline
        Ours I = 8 & $0.0068$ & $0.9147$\\
        \hline
        Ours I = 16 & $0.0092$ & $0.9169$\\
        \hline
        Ours I = 32 & $0.0152$ & $0.9163$\\
        \hline
        Naive Parallel SGDClip & $0.0015$ & $0.9253$\\
        \hline
        Single GPU SGDClip & $0.0012$ & $0.9402$\\
        \hline
       \end{tabular}
    \end{subtable}

     \begin{subtable}[t]{\textwidth}
        \centering
        \caption{32 layer Resnet with local mini-batch size 16 (Figure~\ref{fig:cifar10resnet32comp})}
        \label{tab:resnet_32_bs16}
        \begin{tabular}{|c|c|c|}
        \hline
        Methods & Training Loss & Test Accuracy\\
        \hline
        Ours I = 2 & $0.0269$ & $0.9225$\\
        \hline
        Ours I = 4 & $0.0365$ & $0.9185$\\
        \hline
        Ours I = 8 & $0.0448$ & $0.9181$\\
        \hline
        Ours I = 16 & $0.0569$ & $0.9127$\\
        \hline
        Ours I = 32 & $0.0733$ & $0.9127$\\
        \hline
        Naive Parallel SGDClip & $0.0207$ & $0.9221$\\
        \hline
       \end{tabular}
    \end{subtable}

    \end{table}

    \begin{table}[t]
     \caption{Final results on training an AWD-LSTM to do language modeling on Wikitext-2 (Figure~\ref{fig:wikicomp},~\ref{fig:para_wiki}\&~\ref{fig:comm_round_wiki}).}
     \label{tab:results_wiki}
     \centering
        \begin{tabular}{|c|c|c|}
        \hline
        Methods & Training Loss & Validation Perplexity\\
        \hline
        Ours I = 2 & $3.6461$ & $77.1938$\\
        \hline
        Ours I = 4 & $3.6512$ & $76.9033$\\
        \hline
        Ours I = 8 & $3.6716$ & $77.1042$\\
        \hline
        Ours I = 16 & $3.6833$ & $76.6340$\\
        \hline
        Ours I = 32 & $3.7006$ & $76.7249$\\
        \hline
        Naive Parallel SGDClip & $3.6207$ & $76.2834$\\
        \hline
        Single GPU SGDClip & $3.5960$ & $75.6326$\\
        \hline
       \end{tabular}
     \end{table}

    \begin{table}[t]
     \caption{Final results on training an AWD-LSTM to do language modeling on Penn Treebank.}
     \label{tab:results_ptb}
        \begin{subtable}[t]{\textwidth}
        \centering
        \caption{Homogeneous setting (Figure~\ref{fig:penncomp},~\ref{fig:para_penn},~\ref{fig:comm_round_penn},~\ref{fig:linear_speedup},\&~\ref{fig:large_I})}
        \label{tab:ptb_homogeneous}
        \begin{tabular}{|c|c|c|}
        \hline
        Methods & Training Loss & Validation Perplexity\\
        \hline
        Ours I = 2 & $3.5191$ & $66.7519$\\
        \hline
        Ours I = 4 & $3.5352$ & $66.8982$\\
        \hline
        Ours I = 8 & $3.5576$ & $66.7906$\\
        \hline
        Ours I = 16 & $3.5731$ & $66.6629$\\
        \hline
        Ours I = 32 & $3.5951$ & $67.1752$\\
        \hline
        Ours I = 64 & $3.6186$ & $67.1609$\\
        \hline
        Ours I = 128 & $3.6702$ & $67.1126$\\
        \hline
        Ours I = 256 & $3.7205$ & $66.9107$\\
        \hline
        Ours I = 512 & $3.7615$ & $66.5976$\\
        \hline
        Ours I = 1024 & $3.8139$ & $67.0686$\\
        \hline
        Naive Parallel SGDClip & $3.5648$ & $66.0435$\\
        \hline
        Single GPU SGDClip & $3.5290$ & $66.7948$\\
        \hline
       \end{tabular}
        \end{subtable}
        \begin{subtable}[t]{\textwidth}
        \centering
        \caption{Partial participation setting (Figure~\ref{fig:local_participation})}
        \label{tab:ptb_local_participation}
        \begin{tabular}{|c|c|c|}
        \hline
        Methods & Training Loss & Validation Perplexity\\
        \hline
        Ours I = 2 & $3.5384$ & $66.9437$\\
        \hline
        Ours I = 4 & $3.5528$ & $66.7992$\\
        \hline
        Ours I = 8 & $3.5671$ & $66.6489$\\
        \hline
        Ours I = 16 & $3.5896$ & $67.3863$\\
        \hline
        Ours I = 32 & $3.6229$ & $67.5515$\\
        \hline
        Naive Parallel SGDClip & $3.5535$ & $65.5083$\\
        \hline
       \end{tabular}
    \end{subtable}

        \begin{subtable}[t]{\textwidth}
        \centering
        \caption{Heterogeneous setting (Figure~\ref{fig:penncomp_heter})}
        \label{tab:ptb_heterogeneous}
        \begin{tabular}{|c|c|c|}
        \hline
        Methods & Training Loss & Validation Perplexity\\
        \hline
        Ours I = 2 & $3.5640$ & $70.7216$\\
        \hline
        Ours I = 4 & $3.5848$ & $70.8548$\\
        \hline
        Ours I = 8 & $3.5989$ & $71.2799$\\
        \hline
        Ours I = 16 & $3.6092$ & $70.6850$\\
        \hline
        Ours I = 32 & $3.6428$ & $70.4912$\\
        \hline
        Naive Parallel SGDClip & $3.6021$ & $68.5727$\\
        \hline
       \end{tabular}
    \end{subtable}

        \begin{subtable}[t]{\textwidth}
        \centering
        \caption{Our Algorithm Coupled with momentum (Figure~\ref{fig:momentum})}
        \label{tab:ptb_momentum}
        \begin{tabular}{|c|c|c|}
        \hline
        Methods & Training Loss & Validation Perplexity\\
        \hline
        Ours I = 2 & $3.5348$ & $64.6153$\\
        \hline
        Ours I = 4 & $3.5278$ & $64.1062$\\
        \hline
        Ours I = 8 & $3.5435$ & $64.6068$\\
        \hline
        Ours I = 16 & $3.5788$ & $64.8165$\\
        \hline
        Ours I = 32 & $3.6276$ & $65.3660$\\
        \hline
        Naive Parallel SGDClip & $3.5747$ & $68.2008$\\
        \hline
       \end{tabular}
    \end{subtable}
     \end{table}

    \begin{table}[t]
     \caption{Final results on training Resnet-50 to do image classification on ImageNet (figure~\ref{fig:imagenet}).}
     \label{tab:results_imagenet}
        \centering
        \begin{tabular}{|c|c|c|}
        \hline
        Methods & Training Loss & Test Accuracy\\
        \hline
        Goyal et al., 2018~\cite{goyal2017accurate} & $0.7792$ & $0.7610$\\
        \hline
        Naive Parallel SGDClip & $0.7795$ & $0.7626$\\
        \hline
        CELGC I=4 & $0.7795$ & $0.7607$\\
        \hline
      \end{tabular}
    \end{table}

    % \subsection{ImageNet Experiments}

    % In this section, we compared our algorithm with several baselines in training a ResNet-50 on ImageNet. We compared with two strong baselines: one is the Naive Parallel SGDClip, another is a well-accepted baseline for ImageNet by~\cite{goyal2017accurate}. We run the experiments on 8 GPUs. We follow the settings of~\cite{goyal2017accurate} to setup hyperparameter for baselines. Specifically, for every method, the initial learning rate is $0.0125$, and we use the warmup with 5 epochs, batch size 32, momentum parameter $0.9$, the weight decay $5\times 10^{-4}$. The learning rate multiplying factor is $1$ for the epoch $5\sim 30$, and $0.1$ for epoch $30\sim 60$, $0.01$ for epochs $60\sim80$, and $0.001$ for epoch $80\sim 90$. The clipping threshold for Naive Parallel SGDClip and our method CELGC are both set to be $1$. We consider our algorithm with $I=4$, i.e., our algorithm CELGC performs weight averaging after $4$ steps of local stochastic gradient descent with gradient clipping on each GPU.

    % The results are shown in Figure~\ref{fig:imagenet}. We report the performance of these methods from several different perspectives (training accuracy/validation accuracy versus epoch, and training accuracy/validation accuracy versus wallclock time). We can see that the training accuracy of our algorithm CELGC with $I=4$ can match both baselines in terms of epoch (Figure \ref{fig:imagenet_epoch}), but it is much better in terms of running time (Figure \ref{fig:imagenet_wallclock}).
}

    % \clearpage
	\section{The Distribution of Stochastic Gradient Values}
	\label{sec:grad_dist}
	In this section, we empirically verify Assumption 1 (iv) in the training of an AWD-LSTM on the Penn Treebank dataset to do language modeling. We investigate the setting of distributed training using $8$ GPUs with $I = 4$. For details on experiment setting and hyperparameter choices, please refer to Section~\ref{sec:exp}.
	
	At the end of each epoch we investigate, we average the local models across all machines. We then fix this model and run it over the whole training set to compute all mini-batch stochastic gradients. Finally, we randomly select a set of coordinate pairs and draw their respective $1$-D value distributions and $2$-D joint distribution. The results are shown in Figure~\ref{fig:grad_dist_1} and~\ref{fig:grad_dist_2}. We can clearly see that the distributions exhibit symmetry around the mean and monotonically decreasing probability density as the 
	$\ell_2$ distance to the mean increases. These results support the use of Assumption 1 (iv).
	
	\begin{figure}[t]
		\centering
		\includegraphics[scale=0.2]{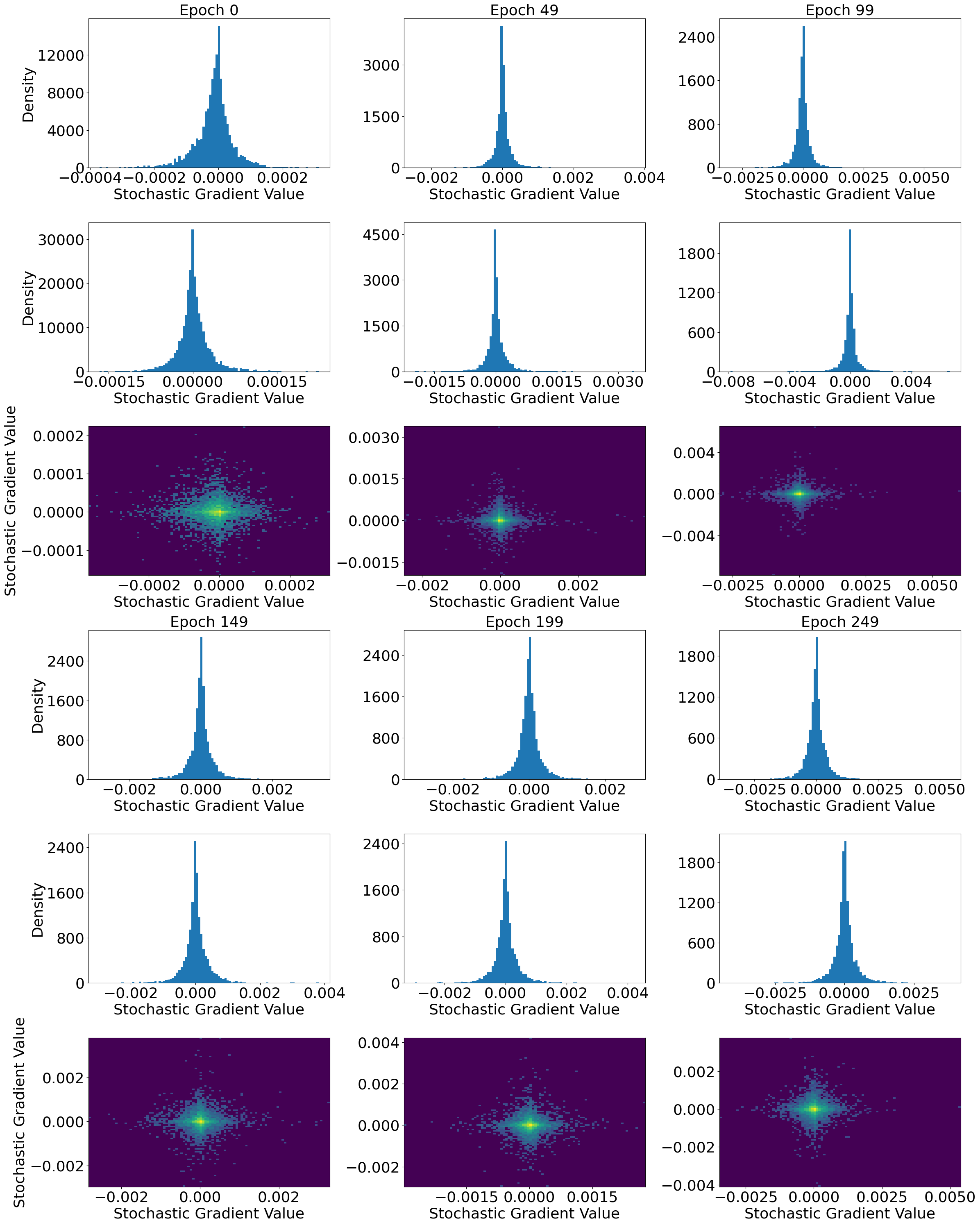}
		\caption{During training an AWD-LSTM in the Penn Treebank dataset, we randomly selected two coordinates and drew the distribution of the stochastic gradient values at these coordinates over the whole training set. The 1st and 4th rows belong to one coordinate, the 2nd and 5th rows belong to another coordinate, and the 3rd and 6th rows depict their joint distribution. We can clearly see that the stochastic gradients observe symmetric distributions with monotonically decreasing probability density along the distance to the mean.}
		\label{fig:grad_dist_1}
	\end{figure}
	
	\begin{figure}[t]
		\centering
		\includegraphics[scale=0.2]{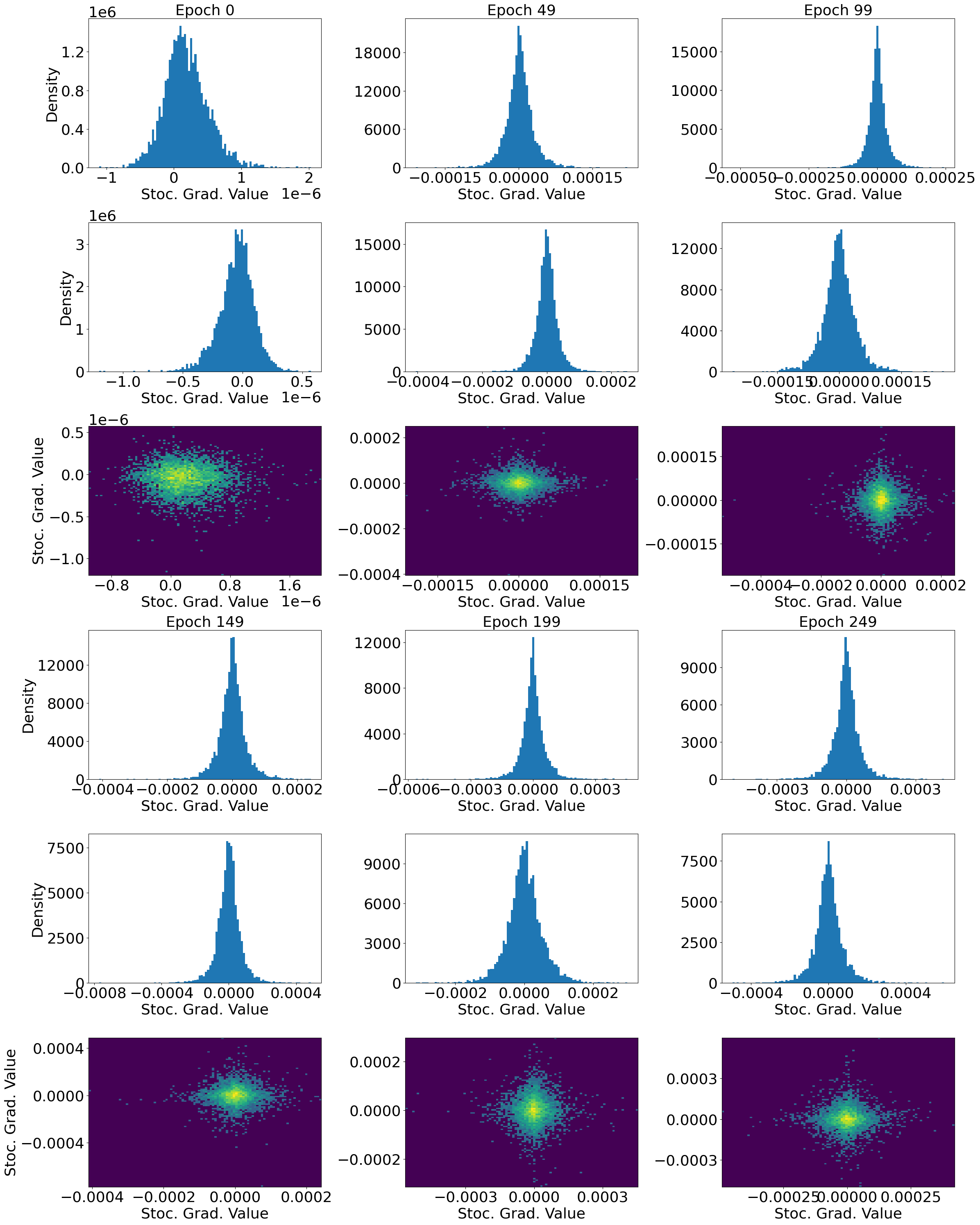}
		\caption{During training an AWD-LSTM in the Penn Treebank dataset, we randomly selected two coordinates and drew the distribution of the stochastic gradient values at these coordinates over the whole training set. The 1st and 4th rows belong to one coordinate, the 2nd and 5th rows belong to another coordinate, and the 3rd and 6th rows depict their joint distribution. We can clearly see that the stochastic gradients observe symmetric distributions with monotonically decreasing probability density along the distance to the mean.}
		\label{fig:grad_dist_2}
	\end{figure}

\end{document}